\newtheorem{theorem}{Theorem}[section]
\newtheorem{proposition}[theorem]{Proposition}
\newtheorem{lemma}[theorem]{Lemma}
\theoremstyle{remark}
\newtheorem{conjecture*}{Conjecture}
\theoremstyle{plain}
\newlength{\tempdima}
\newcommand{\rowname}[1]
{\rotatebox{90}{\makebox[\tempdima][c]{\textbf{#1}}}}
\renewcommand{\thesubfigure}{\alph{subfigure}}
\newcommand{\mycaption}[1]
{\refstepcounter{subfigure}\textbf{(\thesubfigure) }{\ignorespaces #1}}
\newcolumntype{P}[1]{>{\centering\arraybackslash}p{#1}}
\newcommand{\xt}[1]{x_{#1}}
\newcommand{\JKO}{JKO-iFlow}
\newcommand{\ResNetf}[1]{f_{\theta_{#1}}}
\newcommand{\R}{\mathbb{R}}
\newcommand{\E}{\mathbb{E}}
\newcommand{\calN}{\mathcal{N}}
\newcommand{\calP}{\mathcal{P}}
\newcommand{\revold}[1]{{\color{black}#1}}
\begin{document}

\title{Normalizing flow neural networks by JKO scheme}

\author[1]{Chen Xu}
\author[2]{Xiuyuan Cheng}
\author[1]{Yao Xie}
\affil[1]{{\small H. Milton Stewart School of Industrial and Systems Engineering, Georgia Institute of Technology.}}
\affil[2]{{\small Department of Mathematics, Duke University}}

\date{
}

\maketitle

\begin{abstract}
Normalizing flow is a class of deep generative models for efficient sampling and likelihood estimation, which achieves attractive performance, particularly in high dimensions. The flow is often implemented using a sequence of invertible residual blocks. Existing works adopt special network architectures and regularization of flow trajectories. In this paper, we develop a neural ODE flow network called JKO-iFlow, inspired by the Jordan-Kinderleherer-Otto (JKO) scheme, which unfolds the discrete-time dynamic of the Wasserstein gradient flow. The proposed method stacks residual blocks one after another, allowing efficient block-wise training of the residual blocks, avoiding sampling SDE trajectories and score matching or variational learning, thus reducing the memory load and difficulty in end-to-end training. We also develop adaptive time reparameterization of the flow network with a progressive refinement of the induced trajectory in probability space to improve the model accuracy further. Experiments with synthetic and real data show that the proposed JKO-iFlow network achieves competitive performance compared with existing flow and diffusion models at a significantly reduced computational and memory cost. 
\end{abstract}

\section{Introduction}
\begin{wrapfigure}[18]{r}{5.6cm}
\vspace{-32pt}
    \centering
    \begin{minipage}{0.36\linewidth}
      \includegraphics[width=\linewidth]{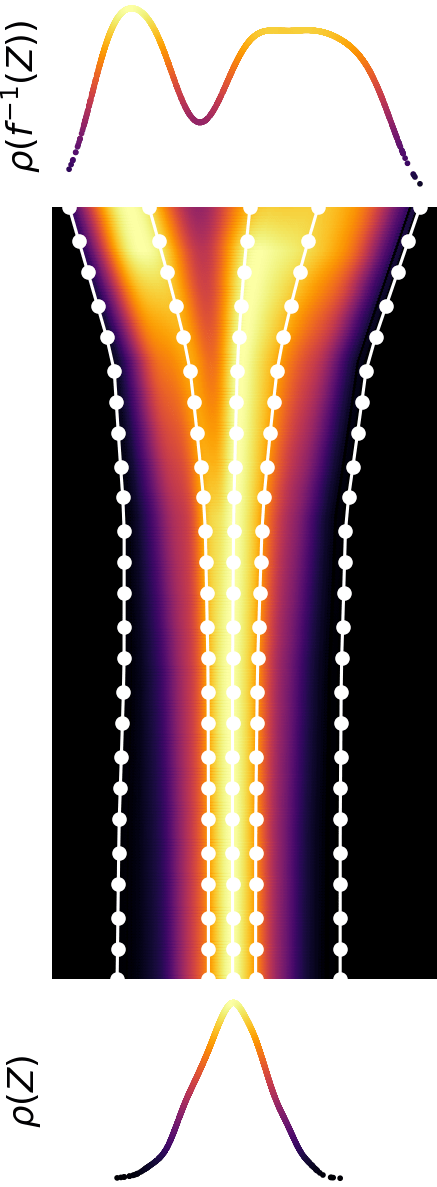}
      \subcaption{\JKO{}}
  \end{minipage}
  \begin{minipage}{0.36\linewidth}
      \includegraphics[width=\linewidth]{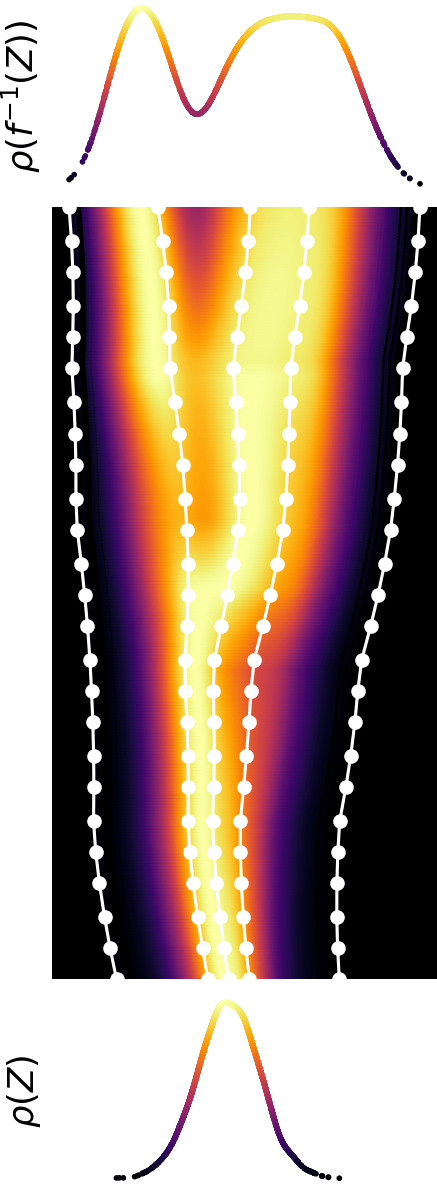}
      \subcaption{usual CNF}
  \end{minipage}
  \vspace{-5pt}
  \caption{\label{fig:trajectory}
  Comparison of \JKO{} (proposed) and standard 
  CNF models.
  In contrast to most existing CNF models, 
  \JKO{} learns the unique deterministic transport equation corresponding to the diffusion process by \revold{directly performing block-wise} training of a neural ODE model.
  } 
\end{wrapfigure}
Generative models have wide applications in statistics and machine learning to infer data-generating distributions 
and to sample from the model distributions learned from the data. 
In addition to widely used deep generative models such as variational auto-encoders (VAE) \cite{VAE,VAE_review}
and generative adversarial networks (GAN) \cite{GAN,WassersteinGAN}, normalizing flow models \cite{nflow_review} have been popular and with a great potential. The flow model learns the data distribution via an invertible mapping $F$ between the data density $p_X$ in $\mathbb{R}^d$ and the target standard multivariate Gaussian density $p_Z$, $Z \sim \calN (0,I_d)$. 
While flow models, once trained, can be utilized for efficient data sampling and explicit likelihood evaluation, training of such models is often difficult in practice. To alleviate such difficulties, many prior works \citep{dinh2015nice,RNVP,kingma2018glow,iResnet,ruthotto2020machine,OT-Flow}, among others, have explored designs of training objectives, network architectures, and computational techniques.

Among various flow models, continuous normalizing flow (CNF) transports the data density to a target distribution through continuous dynamics, primarily neural ODE \citep{chen2018neural} models. CNF models have shown promising performance on generative tasks \cite{FFJORD,nflow_review}. 
However, a known computational challenge of CNF models is model regularization, primarily due to the non-uniqueness of the flow transport. 
Without additional regularization, the trained CNF model \revold{such as FFJORD \cite{FFJORD}} may follow a less regular trajectory in the probability space, see Figure \ref{fig:trajectory}(b),
which may worsen the generative performance.
While regularization of flow models has been developed using different techniques, including using spectral normalization \cite{iResnet} and optimal transport \cite{liutkus2019sliced,OT-Flow,finlay2020train,xu2022invertible,huang2023bridging}, only using regularization may not resolve the non-uniqueness of the flow.
Besides regularization, several practical difficulties remain when training CNF models, particularly the high computational cost.
In many settings, CNF models consist of stacked blocks, and each can be complex. 
{\it End-to-end training} of such deep models often places a high demand on computational resources and memory consumption.

In this work, we propose \JKO{}, 
an invertible normalizing flow network that unfolds the Wasserstein gradient flow via a neural ODE model inspired by the JKO-scheme \cite{jordan1998variational}. 
The JKO scheme, see \eqref{eq:def-JKO-1}, can be viewed as a proximal step 
to minimize the Kullback–Leibler (KL) divergence 
between the current density and the equilibrium density. It recovers the solution of the Fokker-Planck equation \revold{(FPE)} in the limit of small step size. 
The proposed \JKO{} model thus can be viewed as trained to learn the {\it unique} transport map following the FPE
which flows from the data distribution toward the normal equilibrium and gives a smooth trajectory of density evolution, see Figure \ref{fig:trajectory}(a).
Unlike most CNF models, where all the residual blocks are 
trained end-to-end, 
each block in the \JKO{} network implements one step in the JKO scheme \revold{to learn the deterministic transport map} by minimizing an objective of that block given the trained previous blocks.
The block-wise training significantly reduces memory and computational load.
\revold{ 
Theoretically, with a small step size, the discrete-time transport map approximates the continuous solution of FPE, 
which leads to the invertibility of each trained residual block.
This leaves the residual blocks to be completely general, such as graph neural network layers and convolutional neural network layers, depending on the structure of data considered in the problem.
The theoretical need for small step sizes 
does not incur a restriction in practice,
whereby one can use step size not exceeding a certain maximum value
when adopting the neural ODE integration.}
We further introduce time reparameterization with progressive refinement in computing the flow network, where each block corresponds to a representative point along the density evolution trajectory in the space of probability measures. 
\revold{The algorithm adaptively chooses the number of blocks and step sizes.}
The proposed approach is related to the diffusion models \citep{song2019generative,ho2020denoising,block2020generative,song2021score} yet differs fundamentally in that our approach is a type of flow models, which directly computes the data likelihood, and such likelihood estimation is essential for statistical inference. While the diffusion models can also indirectly obtain likelihood estimation, they are more designed as samplers.
\revold{In terms of implementation, our approach} trains a neural ODE model without SDE sampling (injection of noise) nor learning of score matching.
It also differs from previous works on progressive training of ResNet generative models \cite{johnson2019framework,fan2022variational} in that the model trains an invertible flow mapping and avoids inner loops of variational learning. We refer to Section \ref{related_work} for more discussions on related works. 
Empirically, \JKO{} yields competitive performance as other CNF models with significantly less computation. 
The model is also compatible with general equilibrium density $p_Z$ having a parametrized potential $V$, exemplified by the application to the conditional generation setting where $p_Z$ is replaced with Gaussian mixtures \cite{xu2022invertible}.
In summary, the contributions of the work include

\vspace{5pt}
$\bullet$ We propose \revold{an invertible} neural ODE model where each residual block \revold{corresponds to} a JKO step, and the training objective can be computed \revold{from pushed data samples through the previous blocks.}
The residual block has a general form, and the invertibility is ensured due to the regularity and continuity of the approximate solution of the FPE.

\vspace{5pt}
$\bullet$ We develop a block-wise procedure to train the \JKO{} model, which can adaptively determine the number of blocks.
We also propose to adaptively reparameterize the computed trajectory in the probability space with refinement, which \revold{improves the model accuracy and the overall computational efficiency.} 

\vspace{5pt}
$\bullet$
We show that \JKO{} greatly reduces memory and computational cost when achieving competitive or better generative performance 
and likelihood estimation 
compared to existing flow and diffusion models on simulated and real data.

\subsection{Related works}\label{related_work}

For deep generative models, popular approaches include generative adversarial networks (GAN) \citep{GAN,WassersteinGAN,CGAN} and variational auto-encoder (VAE)\citep{VAE,VAE_review}. 
Apart from known training difficulties (e.g., mode collapse \citep{Salimans2016ImprovedTF} and posterior collapse \citep{Lucas2019UnderstandingPC}), these models do not provide likelihood or inference of data density. 
The normalizing flow framework \citep{nflow_review} has been extensively developed, including continuous flow \citep{FFJORD}, Monge-Ampere flow \citep{zhang2018monge}, discrete flow \citep{ResFlow}, 
\revold{and extension to non-Euclidean data \citep{liu2019graph,mathieu2020riemannian,xu2022invertible}.}
Efforts have been made to develop novel invertible mapping structures \citep{RNVP,papamakarios2017masked}
and regularize the flow trajectories \revold{by transport cost }  \citep{finlay2020train,OT-Flow,ruthotto2020machine,xu2022invertible,huang2023bridging}.
Despite such efforts, the model and computational challenges of normalizing flow models include regularization and the large model size when using a large number of residual blocks, which cannot be determined a priori, and the associated memory and computational load.

In parallel to continuous normalizing flows, which are neural ODE models, neural SDE models become an emerging tool for generative tasks. Diffusion process and Langevin dynamics in deep generative models have been studied in score-based generative models \citep{song2019generative,ho2020denoising,block2020generative,song2021score} under different settings. 
Specifically, these models estimate the score function
(i.e., the gradient of the log probability density with respect to data) of data distribution via neural network parametrization, which may encounter challenges in learning and sampling high dimensional data and call for special techniques \citep{song2019generative}.
The recent work of \cite{song2021score}  developed reverse-time SDE  sampling for score-based generative models 
and adopted the connection to neural ODE to compute the likelihood; using the same idea of backward SDE, \cite{zhang2021diffusion} proposed joint training of forward and backward neural SDEs. Theoretically, latent diffusion \cite{tzen2019theoretical,tzen2019neural}  was used to analyze neural SDE models.
The current work focuses on a neural ODE model where the deterministic vector field ${\bf f}(x,t)$ is to be learned from data following a JKO scheme of the FPE.
Rather than neural SDE, our approach involves no sampling of SDE trajectories nor learning of the score function, and it learns an invertible residual network directly. 
In contrast, diffusion-based models derive the ODE model from the learned diffusion model to achieve explicit likelihood computation.
For example,  \cite{song2021score} derived neural ODE model
from the learned score function of the diffused data marginal distributions for all $t$.
We experimentally obtain competitive or improved performance against the diffusion model on simulated two-dimensional and high-dimensional tabular data.

JKO-inspired deep models have been studied in several recent works. 
\citep{bunne2022proximal} reformulated the JKO step 
for minimizing an energy function over convex functions. 
JKO scheme has also been used to discretize Wasserstein gradient flow to learn a deep generative model in \citep{alvarez-melis2022optimizing,mokrov2021large},
which adopted input convex neural networks (ICNN) \citep{amos2017input}.
ICNN, as a special type of network architecture,
may have limited expressiveness \citep{rout2022generative,korotin2021wasserstein}.
In addition to using the gradient of ICNN, 
\citep{fan2022variational} proposed parametrizing the transport in a JKO step by a residual network but identified difficulty in calculating the push-forward distribution. 
The approach in \citep{fan2022variational} also relies on a variational formulation, which requires training an additional network similar to the discriminator in GAN using inner loops. 
{The idea of progressive additive learning in training generative ResNet, namely training ResNet block-wisely by a variational loss, dates back to \cite{johnson2019framework} under the GAN framework.}
Our method trains an invertible neural-ODE flow network that flows from data density to the normal one and backward, which enables the computation of model likelihood as in other neural-ODE approaches.
The objective in each JKO step to minimize KL divergence can also be computed directly without any inner-loop training, see Section \ref{sec:algo}.

Compared to score-based neural SDE methods, our approach is closer to the more recent flow-based models related to diffusion models \citep{lipman2023flow,albergo2023building,boffi2023probability}. These works proposed to learn the transport equation (a deterministic ODE) corresponding to the SDE process.
Specifically, \citep{lipman2023flow,albergo2023building} matched the velocity field from interpolated distributions between initial and terminal ones;
\citep{boffi2023probability} proposed to learn the score $\nabla \log \rho_t(x)$ (where $\rho_t$ solves the FPE and is unknown {\it a priori}) to solve high-dimensional FPE.
Using Stein's identity (which is equivalent to the derivation in Section \ref{subsec:optimal-f}), the step-wise training objective in \citep{boffi2023probability} optimizes to learn the score without the need to simulate the entire SDE. 
The idea of approximating the solution of FPE by a deterministic transport equation dates back to the 90s \cite{degond1990deterministic,degond1989weighted},
and has been used in kernel-based solver of FPE in \citep{maoutsa2020interacting} and studied via a self-consistency equation in \citep{shen2022self}.
While our approach learns an equivalent velocity field at the infinitesimal time step, see section \ref{subsec:optimal-f}, the formulation in \JKO{} is at a finite time-step motivated by the JKO scheme. 
We also mention that an independent concurrent work \citep{vidal2023taming} proposed a similar block-wise training algorithm using the JKO scheme under the framework of \citep{OT-Flow}
and demonstrated benefits in avoiding tuning the penalty hyperparameter associated with the KL-divergence objective.
Our model is applied to generative tasks of high-dimensional data, including image data, and we also develop additional techniques for computing the flow probability trajectory.

For the expressiveness of deep generative models, 
approximation properties of deep neural networks for representing probability distributions have been developed in several works.
\cite{lee2017ability} established approximation by composition of Barron functions \citep{barron1993universal};
\cite{bailey2018size} developed space-filling approach, which was generalized in \cite{perekrestenko2020constructive,perekrestenko2021high}; \cite{lu2020universal} constructed a deep ReLU network with guaranteed approximation under integral probability metrics, using techniques of empirical measures and optimal transport.
These results show that deep neural networks can provably transport one source distribution to a target one with sufficient model capacity under certain regularity conditions of the pair of densities. 
Our \JKO{} model potentially leads to a constructive approximation analysis of the neural ODE flow model.
\section{{Preliminaries}}\label{sec:background}

{\it Normalizing flow}. 
A normalizing flow can be mathematically expressed via a density evolution equation of $\rho(x,t)$ such that $\rho(x,0) = p_X$ and as $t$ increases $\rho(x,t)$ approaches $p_Z \sim \calN(0, I_d)$ \citep{tabak2010density}.
Given an initial distribution $\rho(x,0)$, such a flow typically is not unique.
We consider when the flow is induced by  an ODE of $x(t)$ in $\R^d$
\begin{equation}\label{eq:ode-Xt}
\dot{x}(t) = {\bf f} (x(t), t),
\end{equation}
where  $x(0) \sim p_X$. 
 The marginal density of $x(t)$ is denoted as $p (x,t)$, and it evolves according to the continuity equation (Liouville equation) of \eqref{eq:ode-Xt} written as
\begin{equation} \label{eq:Liouville}
\partial_{t} p + \nabla\cdot( p {\bf f}) =0, \quad p(x,0) = p_X(x).
\end{equation}

\noindent 
{\it Ornstein–Uhlenbeck (OU) process}.
Consider a Langevin dynamic denoted by the SDE $dX_{t}=- \nabla V (X_{t}) dt+\sqrt{2}dW_{t}$, where $V$ is the potential of the equilibrium density $p_Z$. 
We focus on the case of normal equilibrium, that is, $V(x) = |x|^2/2$ and then $p_Z \propto e^{-V}$. In this case, the process is known as the (multivariate) OU process. Suppose $X_0 \sim p_X$, and let the density of $X_t$ be $\rho(x,t)$ also denoted as $\rho_t(\cdot)$. The Fokker-Planck equation (FPE) describes the evolution of $\rho_t$ towards the equilibrium $p_Z$ as follows, where $V(x) := {|x|^{2}}/{2}$,
\begin{equation}\label{eq:fokker-planck}
\partial_{t}\rho=\nabla\cdot(\rho\nabla V + \nabla \rho), 
\quad \rho(x,0) = p_X(x).
\end{equation}
Under generic conditions, $\rho_t$ converges to  $p_Z$ exponentially fast. 
For Wasserstein-2 distance and the standard normal $p_Z$, classical argument gives that (take $C=1$ in Eqn (6) of \cite{bolley2012convergence})
\begin{equation}\label{eq:W2-convergence-FK}
W_2( \rho_t, p_Z) \le e^{-t} W_2( \rho_0, p_Z), \quad t > 0.
\end{equation}

\noindent 
{\it JKO scheme}. The seminal work \cite{jordan1998variational} established a time discretization scheme of the solution to \eqref{eq:fokker-planck} by the gradient flow to minimize $ {\rm KL}(\rho || p_Z)$ under the Wasserstein-2 metric  in probability space.
Denote by $\calP$ the space of all probability densities on $\R^d$ with a finite second moment.
The JKO scheme computes a sequence of distributions $p_k$, $k=0,1,\cdots,$ starting from $p_0 = \rho_0 \in \calP$.
With step size $h > 0$, the scheme at the $k$-th step is written as
\begin{equation}\label{eq:def-JKO-1}
p_{k+1}  = \arg \min_{\rho  \in \calP } F[\rho] + \frac{1}{2 h} W_2^2( p_{k}, \rho),
\end{equation}
where $F[\rho]: = {\rm KL}( \rho || p_Z)$.
It was proved in \cite{jordan1998variational} that as $h \to 0$, 
the solution $ p_{k} $ converges to the solution $\rho( \cdot,  k h)$ of \eqref{eq:fokker-planck} for all $k$,
and the convergence $\rho_{ (h) }(\cdot, t) \to \rho(\cdot, t)$
is strongly in $L^1(\R^d, (0,T))$ for finite $T$ 
where $\rho_{ (h) }$ is piece-wise constant interpolated %
from $p_{k}$.

\section{JKO scheme by neural ODE}\label{sec:mtd}

Given i.i.d. observed data samples $X_i \in \R^d$, $i=1,\ldots, N$, drawn from some unknown density $p_{X}$, 
the goal is to train an invertible neural network to transport the density $p_{X}$ to an \textit{a priori} specified density $p_Z$ in $\R^d$, where each data sample $X_i$ is mapped to a code $Z_i$. A prototypical choice of $p_Z$ is the standard multivariate Gaussian $\calN(0, I_d)$.
In this work, we leave the potential of $p_Z$ abstract and denote by $V$, that is, $p_Z \propto e^{-V}$
and $V(x) = |x|^2/2$ for normal $p_Z$.
By a slight abuse of notation, we denote by $p_X$ and $p_Z$ both the distributions and the density functions of data $X$ and code $Z$ respectively.

\subsection{Objective of JKO step}

We are to specify ${\bf f}(x ,t)$ in the ODE \eqref{eq:ode-Xt}, to be parametrized and learned by a neural ODE,
such that the induced density evolution of $p(x,t)$ converges to  $p_Z $ as $t$ increases. 
We start by dividing the time horizon $[0,T]$ into finite subintervals with step size $h$, 
let  $t_k = kh$ and $I_{k+1} := [t_{k}, t_{k+1})$.
Define $p_k(x) := p(x, k h)$, namely the density of $x(t)$ at $t= kh$.
The solution of  \eqref{eq:ode-Xt} determined by the vector-field ${\bf f}(x ,t)$ on $t \in I_{k+1}$ 
(assuming the ODE is well-posed {\citep{Sideris2013OrdinaryDE}})
gives a one-to-one mapping $T_{k+1}$ on $\R^d$,
s.t. $T_{k+1}( x( t_{k}) ) = x( t_{k+1})$ and  $T_{k+1}$ transports $p_{k}$ into $p_{k+1}$,
i.e., $(T_k)_\# p_{k-1} = p_{k}$,
where we denote by $T_\# p $ the push-forward of distribution $p$ by $T$, such that $(T_\# p)( A ) = p( T^{-1} (A))$ for a measurable set $A$.
In other words, the mapping $T_{k+1}$ is the solution map of the ODE from time $t_k$ to $t_{k+1}$.

Suppose we can find ${\bf f}( \cdot ,t)$ on $I_{k+1}$ such that the corresponding $T_{k+1}$ solves 
the JKO scheme \eqref{eq:def-JKO-1}, 
then with small $h$, $p_k$ approximates the solution to the Fokker-Planck equation \ref{eq:fokker-planck},
 which then flows towards $p_Z$. 
By the Monge formulation of the Wasserstein-2 distance between $p$ and $q$ as
$W_{2}^{2}(p, q)=\min_{T:T_{\#} p = q} \E_{x \sim p} \| x-T(x) \|^{2} $,
solving for the transported density $p_{k}$ by \eqref{eq:def-JKO-1} is equivalent to solving for the transport $T_{k+1}$ by
\begin{equation}\label{eq:def-JKO-2}
T_{k+1} = \arg \min_{T: \R^d \to \R^d}
F[T ]+ \frac{1}{2h} \E_{x \sim p_{k}} \| x-T(x) \|^{2},
\end{equation}
where $F[T] = {\rm KL}(  T_\# p_{k} || p_Z)$.
The equivalence between \eqref{eq:def-JKO-1} and \eqref{eq:def-JKO-2} is proved in Lemma \ref{lemma:JKO-by-Tk}. 

Furthermore, the following proposition gives that, once $p_{k}$ is determined by  ${\bf f}(x ,t)$ for $t  \le t_{k}$, 
the value of $F[T]$ can be computed from ${\bf f}(x ,t)$ on $t \in I_{k+1}$ only. 
The counterpart for convex function-based parametrization of $T_k$ was given in Theorem 1 of \citep{mokrov2021large},
where the computation using the change-of-variable differs as we adopt an invertible neural ODE approach here.
The proof is left to Appendix \ref{sec:proof}.

\begin{proposition}\label{prop:FT-layer-wise}
Given $p_{k}$, up to a constant  $c$ independent from  ${\bf f}(x ,t)$ on $t \in I_{k+1}$,
\begin{equation}\label{eq:FT-layer-wise}
{\rm KL}(  T_\# p_{k} || p_Z)
=  \E_{x(t_{k}) \sim p_{k}}  
  	\left(  V( x( t_{k+1}))  -  \int_{ t_k }^{t_{k+1} }\nabla\cdot\mathbf{f}( x (s),s)ds  \right) + c.
\end{equation}
\end{proposition}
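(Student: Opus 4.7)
The plan is to unpack the definition of KL divergence, use the change of variables induced by $T=T_{k+1}$ to push everything back to an expectation under $p_k$, and then recognize the log-determinant term as the time-integrated divergence via the instantaneous change-of-variables formula for the ODE \eqref{eq:ode-Xt}. All contributions that do not depend on $\mathbf{f}$ on $I_{k+1}$ will be absorbed into the constant $c$.

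Concretely, I would begin by writing $p_Z(y) = Z_V^{-1} e^{-V(y)}$ and expanding
\[
\mathrm{KL}(T_\# p_k \| p_Z) = \mathbb{E}_{y \sim T_\# p_k}[\log (T_\# p_k)(y)] + \mathbb{E}_{y \sim T_\# p_k}[V(y)] + \log Z_V.
\]
For the potential term, a direct change of variables gives
\[
\mathbb{E}_{y \sim T_\# p_k}[V(y)] = \mathbb{E}_{x(t_k)\sim p_k}[V(T(x(t_k)))] = \mathbb{E}_{x(t_k)\sim p_k}[V(x(t_{k+1}))],
\]
which matches the first term on the right-hand side of \eqref{eq:FT-layer-wise}.

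For the entropy term, since $T$ is the solution map of the ODE \eqref{eq:ode-Xt} on $I_{k+1}$ it is a diffeomorphism, so the standard change-of-variables formula yields $(T_\# p_k)(T(x)) = p_k(x)/|\det \nabla T(x)|$, and hence
\[
\mathbb{E}_{y \sim T_\# p_k}[\log (T_\# p_k)(y)] = \mathbb{E}_{x \sim p_k}[\log p_k(x)] - \mathbb{E}_{x \sim p_k}[\log |\det \nabla T(x)|].
\]
The first piece is the negative differential entropy of $p_k$, which depends only on previously trained blocks and is therefore a constant with respect to $\mathbf{f}$ on $I_{k+1}$; it will be folded into $c$ together with $\log Z_V$. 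The key identity to invoke next is the instantaneous change-of-variables formula for neural ODEs (Liouville's formula applied to the Jacobian along the trajectory): along the flow of \eqref{eq:ode-Xt},
\[
\log |\det \nabla T(x(t_k))| = \int_{t_k}^{t_{k+1}} \nabla\cdot \mathbf{f}(x(s),s)\,ds.
\]
Substituting this in produces the integral term in \eqref{eq:FT-layer-wise} with the correct sign.

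The only genuine obstacle is justifying the invertibility and regularity of $T_{k+1}$ and the interchange of derivative and integral needed for the instantaneous change-of-variables identity. Both rely on well-posedness of the ODE on $I_{k+1}$, which is already assumed in the paragraph preceding the proposition; with that, everything reduces to bookkeeping of terms that are versus are not controlled by $\mathbf{f}$ on $I_{k+1}$, and collecting the $\mathbf{f}$-independent pieces (the entropy of $p_k$ and $\log Z_V$) into the single constant $c$ completes the proof.
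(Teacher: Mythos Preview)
Your proposal is correct and follows essentially the same approach as the paper: both decompose the KL into the potential (cross-entropy) term and the entropy term, push back to an expectation under $p_k$, and reduce the entropy contribution via the instantaneous change-of-variables identity, absorbing the entropy of $p_k$ and the normalizing constant into $c$. The only cosmetic difference is that you phrase the key identity as Liouville's formula for $\log|\det\nabla T|$, whereas the paper states it as $\frac{d}{dt}\log\rho(x(t),t)=-\nabla\cdot\mathbf{f}(x(t),t)$ derived from the continuity equation~\eqref{eq:Liouville}; these are equivalent.
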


By Proposition \ref{prop:FT-layer-wise}, 
 the minimization \eqref{eq:def-JKO-2} is equivalent to 
 \begin{equation}\label{eq:loss-block-k}  
 \min_{ \{ {\bf f}(x,t) \}_{t \in I_{k+1} } } 
  \E_{x( t_{k}) \sim p_k}  
\big(  V( x( t_{k+1})) -  \int_{t_{k} }^{ t_{k+1} }\nabla\cdot\mathbf{f}( x (s), s )ds  
 +\frac{1}{2h}  \|  x(t_{k+1})  -  x(t_{k})  \|^{2} \big), 
\end{equation}
where $x(t_{k+1}) = x(t_{k}) + \int_{t_{k} }^{ t_{k+1} }  \mathbf{f}( x (s), s ) ds$.
Taking a neural ODE approach, 
we parametrize $\{ {\bf f}(x,t) \}_{t \in I_{k+1} }$ as a residual block with parameter $\theta_{k+1}$,
and then \eqref{eq:loss-block-k} is reduced to minimizing over $\theta_{k+1}$. This leads to a block-wise learning algorithm to be introduced in Section \ref{sec:algo},
where we further allow the step-size $h$ to vary for different $k$ as well.

\subsection{Infinitesimal optimal ${\bf f}(x,t)$}\label{subsec:optimal-f}

In each JKO step of \eqref{eq:loss-block-k}, 
let $p = p_k$ denote the current density, $q = p_Z$ be the target equilibrium density. 
In this subsection, 
{we show that the optimal ${\bf f}$ in 
\eqref{eq:loss-block-k} with small $h$ reveals the difference between score functions between target and current densities. 
Thus, minimizing the objective \eqref{eq:loss-block-k} searches for a neural network parametrization of the score function $\nabla \log \rho_t$ implicitly,
in contrast to diffusion-based models which learn the score function explicitly \citep{ho2020denoising,song2021score}, e.g., via denoising score matching.
\revold{At infinitesimal $h$, this is equivalent to solving the FPE by learning a deterministic transport equation as in \citep{boffi2023probability,shen2022self}.}

Consider general equilibrium distribution $q$ with a differentiable potential $V$.
To analyze the optimal pushforward mapping in the small $h$ limit,  we shift the time interval $[kh, (k+1)h]$ to be $[0,h]$ to simplify the notation. 
Then \eqref{eq:loss-block-k} is reduced to 
\begin{equation}\label{eq:JKO-4}
  \min_{ \{ {\bf f}(x,t) \}_{t \in [0,h)} } 
  \E_{x(0) \sim p}  
\left(  V( x(h))  
 -  \int_{0}^{h }\nabla\cdot\mathbf{f}( x (s),s )ds  
 +\frac{1}{2h}  \|  x(h)  -  x(0)  \|^{2} \right), 
\end{equation}
where
$x(h) = x(0) + \int_{0 }^{h }  \mathbf{f}( x (s),s)ds$.
In the limit of $h \to 0+$, formally,
$x(h)-x(0) = h {\bf f}( x(0), 0) + O(h^2)$,
and suppose $V$ of $q$ is $C^2$, 
$
V( x(h))
= V(x(0)) + h  \nabla V(x(0)) \cdot {\bf f}( x(0), 0) + O(h^2)$.
For any differentiable density $\rho$, the (Stein) score function is defined as  $ {\bf s}_\rho = \nabla \log \rho $,
and we have $\nabla V = - {\bf s}_q$.
Taking the formal expansion of orders of $h$, the objective in \eqref{eq:JKO-4} is written as
\begin{equation}\label{eq:objective-expansion}
\E_{x \sim p}  \left( 
 V(x) + h \left(  - {\bf s}_q(x ) \cdot {\bf f}( x , 0) 
 -   \nabla\cdot\mathbf{f}( x,0 )
 +\frac{1}{2}    \|    {\bf f}( x, 0) \|^2 \right) + O(h^2)  \right).
\end{equation}
Note that $\E_{ x \sim p}V(x)$ is independent of ${\bf f}(x,t)$, 
and the $O(h)$ order term in \eqref{eq:objective-expansion} is over ${\bf f}(x,0)$ only, 
thus the minimization of the leading term is equivalent to 
\begin{equation}\label{eq:L2-stein}
\min_{ {\bf f}(\cdot) = {\bf f} (\cdot,0) } \E_{x \sim p} 
\left( -  T_q {\bf f}+ \frac{1}{2} \|{\bf f}  \|^2 \right),
\quad T_q {\bf f}:= {\bf s}_q \cdot {\bf f} + \nabla \cdot {\bf f },
\end{equation}
where $ T_q$ is known as the Stein operator \citep{stein1972bound}. 
{
The $T_q {\bf f}$ in \eqref{eq:L2-stein} echoes that the derivative of KL divergence with respect to transport map gives Stein operator \citep{liu2016stein}. The Wasserstein-2 regularization gives an $L^2$ regularization in \eqref{eq:L2-stein}.}
Let $L^2(p)$ be the  $L^2$ space on $(\R^d, p(x) dx)$,
and for vector field ${\bf v}$ on $\R^d$, ${\bf v} \in L^2(p)$ if $\int |{\bf v}(x)|^2 p(x) dx <\infty$.
One can verify that, when both ${\bf s}_p$ and ${\bf s}_q$ are in $L^2(p)$, the minimizer of \eqref{eq:L2-stein} is 
\[
{\bf f}^*(\cdot, 0) = {\bf s}_q - {\bf s}_{p}.
\]
This shows that the infinitesimal optimal ${\bf f}(x,t)$ equals the difference between the score functions of the equilibrium and the current density. 

\subsection{Invertibility of flow model and expressiveness}\label{subsec:invertible-express}

At time $t$ the current density of $x(t)$ is $\rho_t$, 
the analysis in Section \ref{subsec:optimal-f} implies that 
the optimal vector field ${\bf f}(x,t)$ has the expression as 
\begin{equation}\label{eq:f(x,t)-FK}
{\bf f}(x,t) 
= {\bf s}_q - {\bf s}_{\rho_t} 
= - \nabla V - \nabla \log \rho_t.
\end{equation}
With this ${\bf f}(x,t)$, the Liouville equation  \eqref{eq:Liouville} coincides with the FPE \eqref{eq:fokker-planck}.
This is consistent with the JKO scheme with a small $h$ recovering the solution to the FPE.
Under proper regularity condition of {$V$ and} the initial density $\rho_0$, the {r.h.s. of} \eqref{eq:f(x,t)-FK} is also regular over space and time. 
This leads to two consequences, in approximation and in learning:
Approximation-wise, the regularity of  ${\bf f}(x,t)$ allows to construct a $k$-th residual block in the flow network to approximate $\{ {\bf f}(x,t) \}_{t \in I_k}$ when there is sufficient model capacity, by classical universal approximation theory of shallow networks \citep{barron1993universal,yarotsky2017error}. 
We further discuss the approximation analysis based on the proposed model in the last section.

For learning, when properly trained with sufficient data, the neural ODE vector field ${\bf f}(x,t; \theta_k )$ will learn to approximate \eqref{eq:f(x,t)-FK}.
This can be viewed as inferring the score function of $\rho_t$, and also leads to the invertibility of the trained flow net in theory:
Suppose the trained ${\bf f}(x,t; \theta_k )$ is close enough to \eqref{eq:f(x,t)-FK}; it will also have bounded Lipschitz constant. Then the residual block is invertible as long as the step size $h$ is sufficiently small, e.g. less than $1/L$ where $L$ is the Lipschitz bound of ${\bf f}(x,t; \theta_k )$. In practice, we typically use smaller $h$ than needed merely by invertibility (allowed by the model budget) so that the flow network can more closely track the FPE of the diffusion process. The invertibility of the proposed model is numerically verified in experiments (see Table \ref{inv_err}).

\section{Training of \JKO{} net}\label{sec:algo}

The proposed \JKO{} model allows progressive  learning of the residual blocks in the neural-ODE model in a block-wise manner (Section \ref{sec:layer_wise}).
We also introduce two techniques to improve the training of the trajectories in probability space (Section \ref{sec:traj_prob_comp}),
illustrated in a vector space in Appendix \ref{app:traj_comp_detail}.

\subsection{Block-wise training}\label{sec:layer_wise}

Note that the training of $(k+1)$-th block in \eqref{eq:loss-block-k} can be conducted once the previous $k$ blocks are trained. 
Specifically, with finite training data $\{ X_i = x_i(0) \}_{i=1}^n$, the expectation $\E_{x(t) \sim p_k} $ in \eqref{eq:loss-block-k} is replaced by the sample average over $\{ x_i( kh ) \}_{i=1}^n$ which can be computed from the previous $k$ blocks.
Note that for each given $x(t) = x(t_k)$,
both $x(t_{k+1}) $ and the integral of $\nabla \cdot {\bf f}$ in \eqref{eq:loss-block-k} can be computed by 
a numerical neural ODE integrator.
Following previous works, we use the Hutchinson trace estimator \citep{Hutchinson1989ASE,FFJORD} to compute the quantity $\nabla \cdot {\bf f}$ in high dimensions, and we also propose a finite-difference approach to reduce the computational cost (details in Appendix \ref{app:finite-diff}).
Applying the numerical integrator in computing \eqref{eq:loss-block-k}, 
we denote the resulting $k$-th residual block abstractly as $f_{\theta_k}$ with trainable parameters $\theta_k$.

This leads to a block-wise training of the normalizing flow network, as summarized in Algorithm \ref{block_training}. 
The sequence of time stamps $t_k$ is given by specifying the time steps $h_k := t_{k+1}-t_k$, which we allow to differ across $k$. The choice of the sequence $h_k$ is initialized by a geometric sequence starting from $h_0$ with maximum stepsize $h_{\rm max}$, see Appendix \ref{append:tk}. In the special case where the multiplying factor is one, the sequence of $h_k$ gives a constant step size. The adaptive choice of $h_k$ with refinement (by adding more blocks) will be introduced in Section \ref{sec:traj_prob_comp}.
Regarding the termination criterion $\rm{Ter}(k)$ in line 2 of Algorithm \ref{block_training}, we monitor the ratio
$ \E_{x \sim p_{k-1}} \| x-T_k(x)\|^2
/ \E_{x \sim p_{k-1}} \| T_k(x) \|^2$
and terminate when it is below some threshold $\epsilon$,
set as 0.01 in all experiments.
\revold{In practice, when training the $k$-th block,
both the averages of $\| x-T_k(x)\|^2$
and $\| T_k(x) \|^2$ are computed by empirically averaging over the training samples (in the last epoch) at no additional computational cost.}
Lastly, line 5 of training a ``free block'' (i.e., the block without the $W_2$ regularization) is to flow the push-forward density $p_L$ closer to the target density $p_Z$, where the former is obtained through the first $L$ blocks. 

%
\begin{wrapfigure}[14]{r}{0.5\textwidth}%
\vspace{-0.1in}
\begin{minipage}{\linewidth}
\begin{algorithm}[H]
\caption{Block-wise \JKO{} training}
\label{block_training}
\begin{algorithmic}[1]
\REQUIRE Time stamps $\{t_k\}$, 
training data,  
termination criterion $\rm{Ter}$ and
tolerance level $\epsilon$, 
{maximal number of blocks $L_{\max}$.}
\STATE {Initialize} $k=1$.
\WHILE{$\rm{Ter}(k)>\epsilon$ {and $k\leq L_{\max}$}}
\STATE Optimize $\ResNetf{k}$ upon minimizing \eqref{eq:loss-block-k} {with mini-batch sample approximation}, given $\{\ResNetf{i}\}_{i=1}^{k-1}$. Set $k\leftarrow k+1$.
\ENDWHILE
\STATE {$L \leftarrow  k$. 
\revold{
Optional: 
Optimize $\ResNetf{L+1}$ without $W_2$ regularization.}
}
\end{algorithmic}
\end{algorithm}
\end{minipage}
\end{wrapfigure}

The block-wise training significantly reduces the memory and computational load since only one block is trained when optimizing \eqref{eq:loss-block-k} regardless of flow depth. 
Therefore, one can use larger training batches and potentially more expensive numerical integrators within a certain memory budget for higher accuracy. We also empirically observe that training each block using standard back-propagation (rather than the adjoint method in neural ODE) gives a comparable result at a lower cost. 
To ensure the invertibility of the trained \JKO{} network, we further break the time interval $[t_{k-1}, t_k)$ into 3 or 5 subintervals to compute the neural ODE integration, e.g., by Runge-Kutta-4. 
We empirically verify small inversion errors on test samples.

\subsection{Computation of trajectories in probability space}\label{sec:traj_prob_comp}

We adopt two additional computational techniques to facilitate learning of the trajectories in the probability space,
represented by the sequence of densities $p_k$, $k=1, \ldots, L$, 
associated with the $L$ residual blocks of the proposed normalizing flow network. The two techniques are illustrated in Figure \ref{enhanced_illu}.
Further details and illustrations of the approach  can be found in Appendix \ref{app:detail-algo}. 

\vspace{5pt}
$\bullet$ {\it Trajectory reparameterization}. 
We empirically observe fast decay of the movements $W_2^2(T_\# p_k,p_k)$ when $h_k$ is set to be constant,
that is,
initial blocks transport the densities much further than the later ones. 
This is consistent with the exponential convergence of the Fokker-Planck flow, see \eqref{eq:W2-convergence-FK}, but unwanted in the algorithm because in order to train the current block, the flow model needs to transport data through all previous blocks, 
and yet the later blocks trained using constant step size barely contribute to the density transport.
Hence, 
instead of having constant $h_k$, we \textit{reparameterize} the values of $t_k$ through an adaptive procedure based on the $W_2$ distance at each block.
{The procedure uses an adaptive %
approach to encourage the $W_2$ movement in each block to be more even across the $L$ blocks, where the retraining of the trajectory can be potentially warm-started by the previous trajectory in iteration.}

\begin{figure}[!t]
    \centering
    \includegraphics[width=0.7\linewidth]{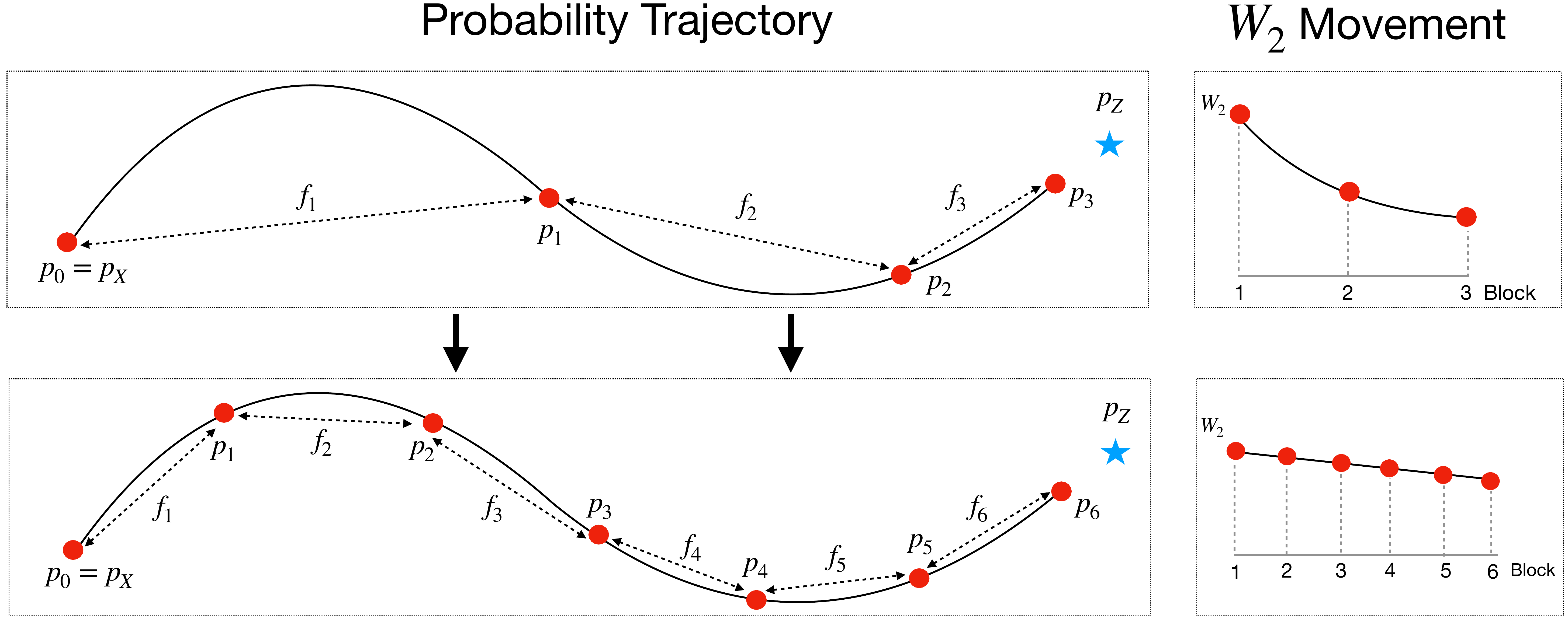}
    \caption{
    Diagram illustrating trajectory reparameterization and refinement. 
    Top: the original trajectory under three blocks via Algorithm \ref{block_training}.
    Bottom: the trajectory under six blocks after reparameterization and refinement,
    which renders the $W_2$ movements more even.}
    \label{enhanced_illu}
\end{figure}

$\bullet$  {\it  Progressive refinement}. 
The performance of CNF models is typically improved with a larger number of residual blocks, corresponding to a smaller step size.
The smallness of stepsize $h$ also ensures the invertibility of the flow model, in theory and also in practice. 
However, directly training the model with a small non-adaptive stepsize $h$ may result in long computation time and convergence to the normal density $q_Z$ only after a large number of blocks, where the choice of $h_k$ is not as efficient as after adaptive reparameterization. 
We introduce a refinement approach to increase the number of blocks progressively, where each time interval $[t_{k-1}, t_k)$ splits into two halves, and the number of blocks doubles after the adaptive reparameterization of the trajectory converges at the coarse level. 
The new trajectory at the refined level is again trained with adaptive reparameterization, where the residual blocks can be warm-started from the coarse-level model to accelerate the convergence.
The trajectory refinement allows going to a smaller step size $h_k$, which benefits the accuracy of the \JKO{} model including the numerical accuracy in integrating each neural ODE block.

\section{Experiment}\label{sec:experiment}

In this section, we examine the proposed \JKO{} model on simulated and real datasets, including both unconditional and conditional generation tasks.
Codes are available at \url{https://github.com/hamrel-cxu/JKO-iFlow}.

\begin{figure*}[!t]
\centering
    \begin{minipage}{.30\textwidth}
        \centering
        \includegraphics[width=\textwidth]{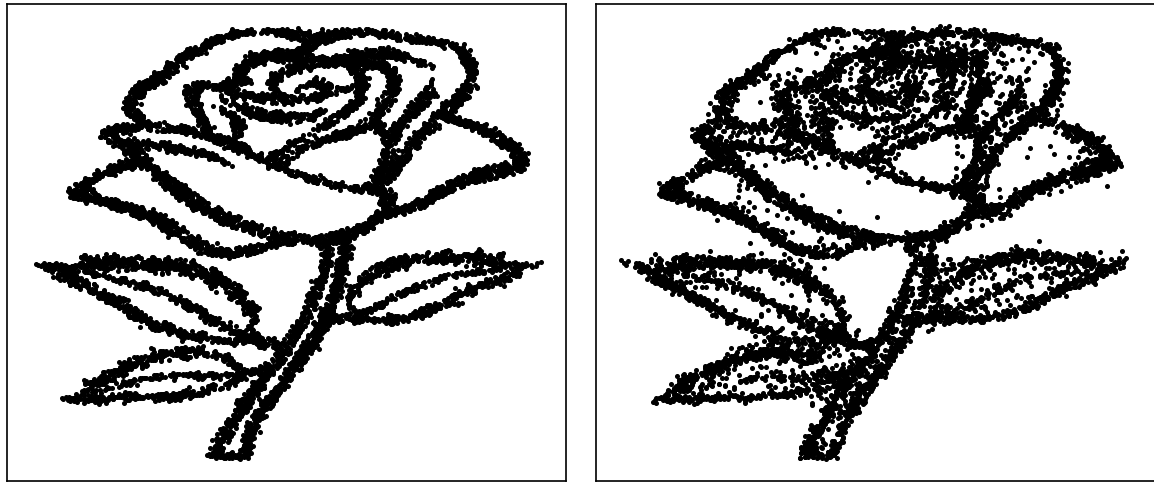}
        \subcaption{
        True data \hspace{1cm} \JKO{}\\
        \textbf{$\tau$: 2.79e-4}, MMD-c: \hspace{0.05cm} 2.73e-4\\
        NLL \hspace{2.1cm} 2.64}\label{fig_JKO_rose}
    \end{minipage}
    \begin{minipage}{.15\textwidth}
        \centering
        \includegraphics[width=\textwidth]{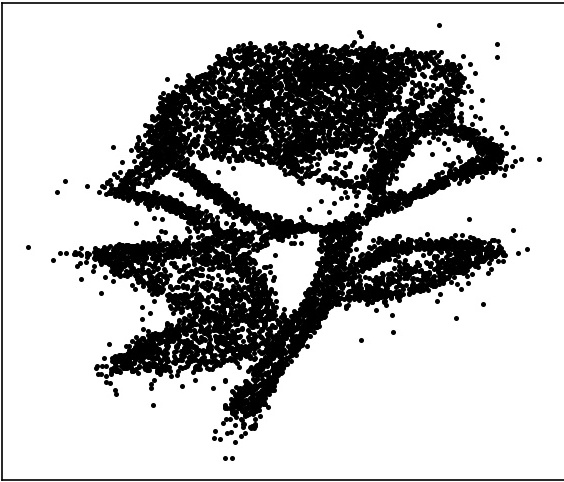}
        \subcaption{
        FFJORD\\
        3.88e-4\\
        2.95
        }\label{fig_FFJORD_rose}
    \end{minipage}
    \begin{minipage}{.15\textwidth}
        \centering
        \includegraphics[width=\textwidth]{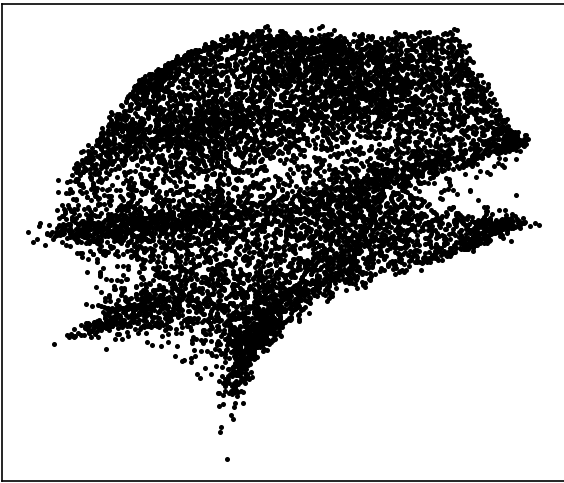}
        \subcaption{
        OT-Flow\\
        1.42e-3\\
        3.30
        }\label{fig_OTflow_rose}
    \end{minipage}
    \begin{minipage}{.15\textwidth}
        \centering
        \includegraphics[width=\textwidth]{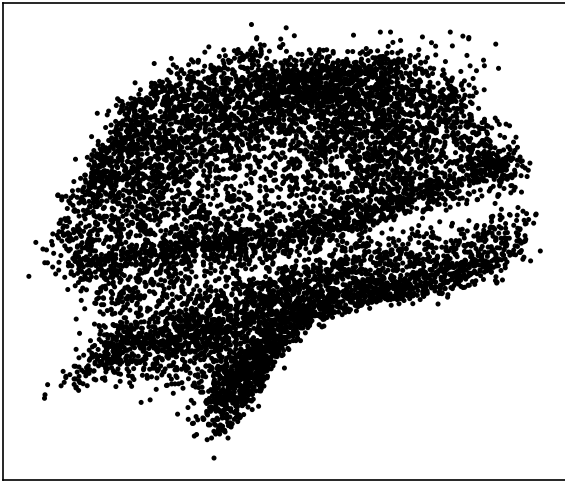}
        \subcaption{
        IGNN\\
        3.14e--3 \\
        3.35
        }\label{fig_IGNN_rose}
    \end{minipage}
    \begin{minipage}{.15\textwidth}
        \centering
        \includegraphics[width=\textwidth]{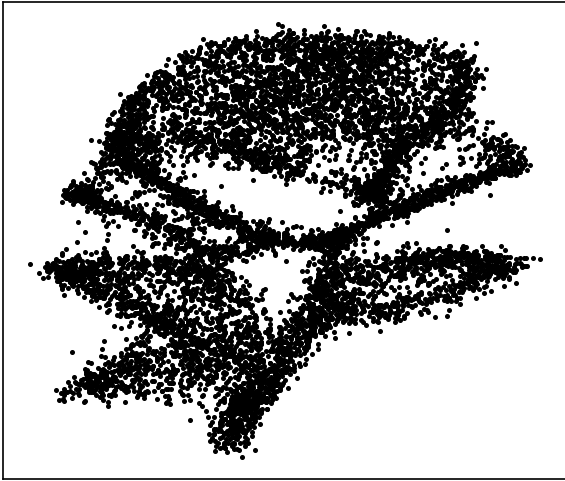}
        \subcaption{
        ScoreSDE\\
        6.90e-4\\
        3.2
        }\label{fig_scoresde_rose}
    \end{minipage}

    \begin{minipage}{0.30\textwidth}
        \centering
        \includegraphics[width=\textwidth]{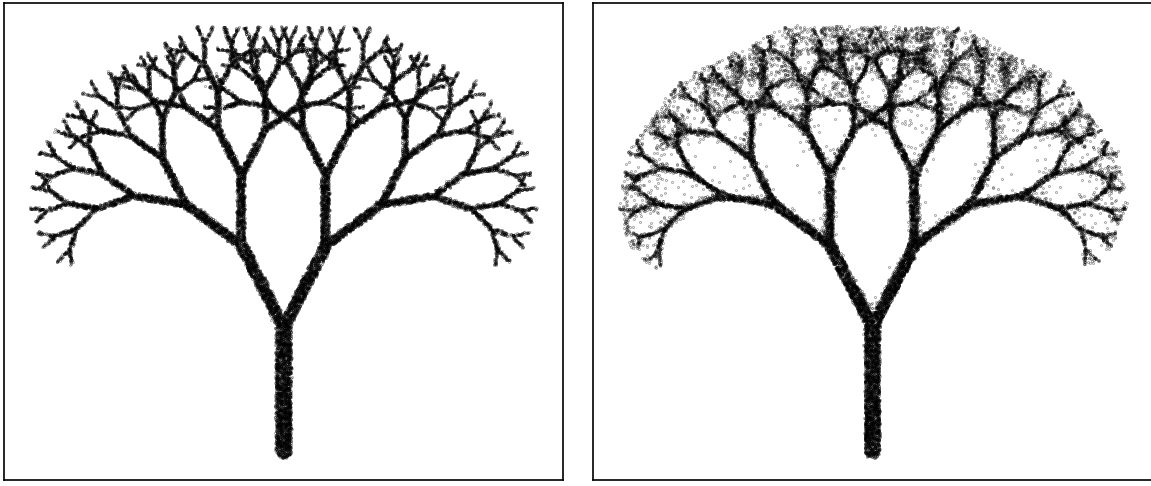}
        \subcaption{
        Fractal tree \\
        \textbf{$\tau$: 3.12e-4}, MMD-c: \hspace{0.05cm} 2.17e-4 \\
        NLL \hspace{2.1cm} 2.20}
        \label{img1}
    \end{minipage}
     \begin{minipage}{0.30\textwidth}
        \centering
        \includegraphics[width=\textwidth]{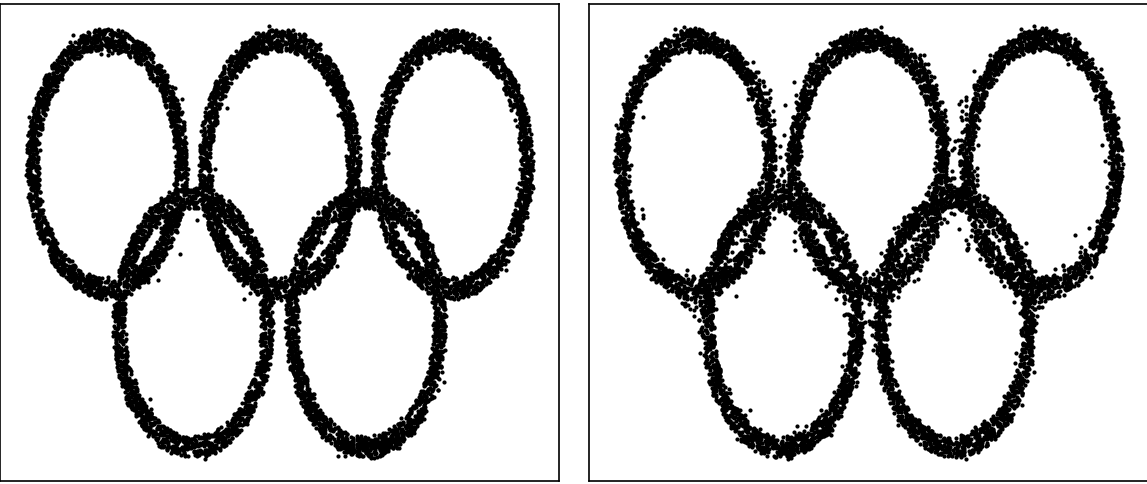}
        \subcaption{
        Olympic rings\\
        \textbf{$\tau$: 3.16e-4}, MMD-c: \hspace{0.05cm} 2.36e-4\\
        NLL \hspace{2.1cm} 1.66}
        \label{img2}
    \end{minipage}
     \begin{minipage}{0.30\textwidth}
        \centering
        \includegraphics[width=\textwidth]{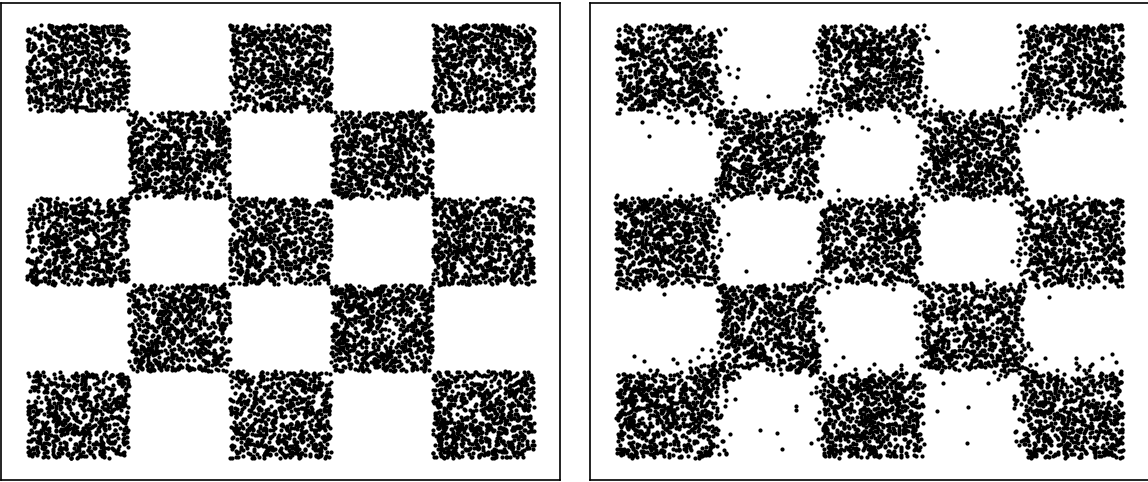}
        \subcaption{
        Checkerboard\\
        \textbf{$\tau$: 3.09e-4}, MMD-c: \hspace{0.05cm} 2.70e-4\\
        NLL \hspace{2.1cm} 3.59}
        \label{img3}
    \end{minipage}
     \vspace{-5pt}
    \caption{
    Results on two-dimensional simulated datasets  by \JKO{} and competitors. }
    \label{fig_rose_full_data}
\end{figure*}
\begin{table}[!b]
    \centering
    \caption{
    Inversion error 
    $\mathbb{E}_{x\sim p_X }\|{T_{\theta}^{-1}}({T_{\theta}}(x))-x\|^2_2$ of \JKO{}
    computed via sample average on the test split of the data set,
    where $T_{\theta}$ denotes the transport mapping over all the blocks of the trained flow network.
    }\label{inv_err}
    \resizebox{\linewidth}{!}{\begin{tabular}{cccc|cccc|c}
        Rose  & Fractal tree & Olympic rings & Checkerboard & POWER & GAS & MINIBOONE & BSD300 & MNIST  \\
       \hline
         3.30e-6 & 3.58e-5 & 2.24e-6 & 3.07e-5 &
         1.48e-5 & 1.58e-6 & 1.09e-6 & 1.53e-5  & 1.87e-5  
    \end{tabular}}
\end{table}

\subsection{Baselines and metrics}

We compare five alternatives, 
including four CNF  models and one diffusion model. 
The first two CNF models are 
FFJORD \citep{FFJORD} and OT-Flow \citep{OT-Flow}, which are continuous-time flow (neural-ODE models).
The next two CNF models are 
IResNet \citep{iResnet} and IGNN \citep{xu2022invertible}, 
which are discrete in time (ResNet models). 
The diffusion model baseline is the score-based neural SDE \citep{song2021score}, which we call ``ScoreSDE.''
Details about the experimental setup, including dataset construction and neural network architecture and training,  can be found in Appendix \ref{sec:additional_results}.

The accuracy of trained generative models is evaluated by two quantitative metrics,
the negative log-likelihood (NLL) metric,
 and the kernel \textit{maximum mean discrepancy} (MMD)  \citep{Gretton2012AKT} metric, 
 including MMD-1, MMD-m, and MMD-c
 for constant, median distance, and custom bandwidth, respectively.
The test threshold $\tau$ is computed by bootstrap, where an MMD metric less than $\tau$ indicates that the generated distribution is evaluated by the MMD test to be the same as  the true data distribution (achieving $\alpha = 0.05$ test level).
 See details in Appendix \ref{sec:metrics}.
The computational cost is measured by the number of mini-batch stochastic gradient descent steps 
and the training time. The performance is also reported under a fixed-budget setting to compare across models.
\revold{The numerical inversion error of the trained flow models are at the 1e-5 level or below; see Table \ref{inv_err}.}

\begin{table}[!t]
\bgroup
\centering
\vspace{-15pt}
\caption{\label{tab_high_dim}
\revold{Results on} tabular datasets.
All competitors are trained in a fixed-budget setup using 10 times more mini-batches
(their performances using the same number of mini-batches are worse and not comparable to \JKO{}).
See the complete table in Table \ref{tab_high_dim_appendix}.
The results using more computation are given in Table \ref{tab_high_dim_appendix_2}.
}
\def\arraystretch{1}%
\resizebox{0.47\linewidth}{!}{
\begin{tabular}{lllrrr}
    \hline
    Data Set & Model & \# Param & Test MMD-m & Test MMD-1 & NLL \\
    \hline
    \multirow{7}{*}{\makecell[l]{\textbf{{POWER}} \\ $d=6$}} & & &
    \makecell[r]{\textbf{$\tau$: 1.73e-4}} &     \makecell[r]{\textbf{$\tau$: 2.90e-4}}             \\
  &   \JKO{} &  76K & 9.86e-5 & 2.40e-4 & -0.12 \\
  &  OT-Flow &  76K & 7.58e-4 & 5.35e-4 &  0.32 \\
  &   FFJORD &  76K & 9.89e-4 & 1.16e-3 &  0.63 \\
  &     IGNN & 304K & 1.93e-3 & 1.59e-3 &  0.95 \\
  &  IResNet & 304K & 3.92e-3 & 2.43e-2 &  3.37 \\
  & ScoreSDE &  76K & 9.12e-4 & 6.08e-3 &  3.41 \\
    \hline
    \multirow{7}{*}{\makecell[l]{\textbf{GAS} \\ $d=8$}}  & & &
     \makecell[r]{\textbf{$\tau$: 1.85e-4}} &     \makecell[r]{\textbf{$\tau$: 2.73e-4}}       \\
  &   \JKO{} &  76K & 1.52e-4 & 5.00e-4 & -7.65 \\
  &  OT-Flow &  76K & 1.99e-4 & 5.16e-4 & -6.04 \\
  &   FFJORD &  76K & 1.87e-3 & 3.28e-3 & -2.65 \\
  &     IGNN & 304K & 6.74e-3 & 1.43e-2 & -1.65 \\
  &  IResNet & 304K & 3.20e-3 & 2.73e-2 & -1.17 \\
  & ScoreSDE &  76K & 1.05e-3 & 8.36e-4 & -3.69 \\
    \hline
\end{tabular}
}
\def\arraystretch{1}
\resizebox{0.52\linewidth}{!}{
\begin{tabular}{lllrrr}
    \hline
    Data Set & Model & \# Param & Test MMD-m & Test MMD-1 & NLL\\
    \hline
    \multirow{7}{*}{\makecell[l]{\textbf{MINIBOONE} \\ $d=43$}} & & &
    \makecell[r]{\textbf{$\tau$: 2.46e-4}} &     \makecell[r]{\textbf{$\tau$: 3.75e-4}}       \\
  &   \JKO{} & 112K & 9.66e-4 & 3.79e-4 & 12.55 \\
  &  OT-Flow & 112K & 6.58e-4 & 3.79e-4 & 11.44 \\
  &   FFJORD & 112K & 3.51e-3 & 4.12e-4 & 23.77 \\
  &     IGNN & 448K & 1.21e-2 & 4.01e-4 & 26.45 \\
  &  IResNet & 448K & 2.13e-3 & 4.16e-4 & 22.36 \\
  & ScoreSDE & 112K & 5.86e-1 & 4.33e-4 & 27.38 \\
    \hline
    \multirow{7}{*}{\makecell[l]{\textbf{BSDS300}\\ $d=63$}} & & &
    \makecell[r]{\textbf{$\tau$: 1.38e-4}} &     \makecell[r]{\textbf{$\tau$: 1.01e-4}}            \\
  &   \JKO{} & 396K & 2.24e-4 & 1.91e-4 & -153.82 \\
  &  OT-Flow & 396K & 5.43e-1 & 6.49e-1 & -104.62 \\
  &   FFJORD & 396K & 5.60e-1 & 6.76e-1 &  -37.80 \\
  &     IGNN & 990K & 5.64e-1 & 6.86e-1 &  -37.68 \\
  &  IResNet & 990K & 5.50e-1 & 5.50e-1 &  -33.11 \\
  & ScoreSDE & 396K & 5.61e-1 & 6.60e-1 &   -7.55 \\
    \hline
\end{tabular}}
\egroup
\end{table}

\subsection{Two-dimensional toy data}

The results on four toy datasets are shown in Figure \ref{fig_rose_full_data}, where the metrics of NLL and MMD-c are reported. 
Plots (a)-(e) compare \JKO{} with the alternative models on the dataset Rose, where both NLL and MMD-c metrics show the better performance of \JKO{}. Visually, the generated samples $\hat X$ by \JKO{} also more resemble the true data distribution of $X$ than the competitors in (b)-(e). 
Plots  (f)-(h) show the results by \JKO{} on the examples of Fractal tree, Olympic rings, and Checkerboard,
\revold{where \JKO{} gives a satisfactory generative performance.}
The \revold{results of} \JKO{} in Figure \ref{fig_rose_full_data} are obtained after applying the trajectory improvement techniques in Section \ref{sec:traj_prob_comp}. 
The comparison results without using the techniques are provided in Figures \ref{fig_rose_sub_reparam} and \ref{fig_tree_sub_reparam}, which show the benefit of the two techniques in learning the model.

\subsection{High-dimensional tabular data}
\begin{figure}[!b]
    \begin{minipage}[b]{0.48\textwidth}
    \includegraphics[width=\textwidth]{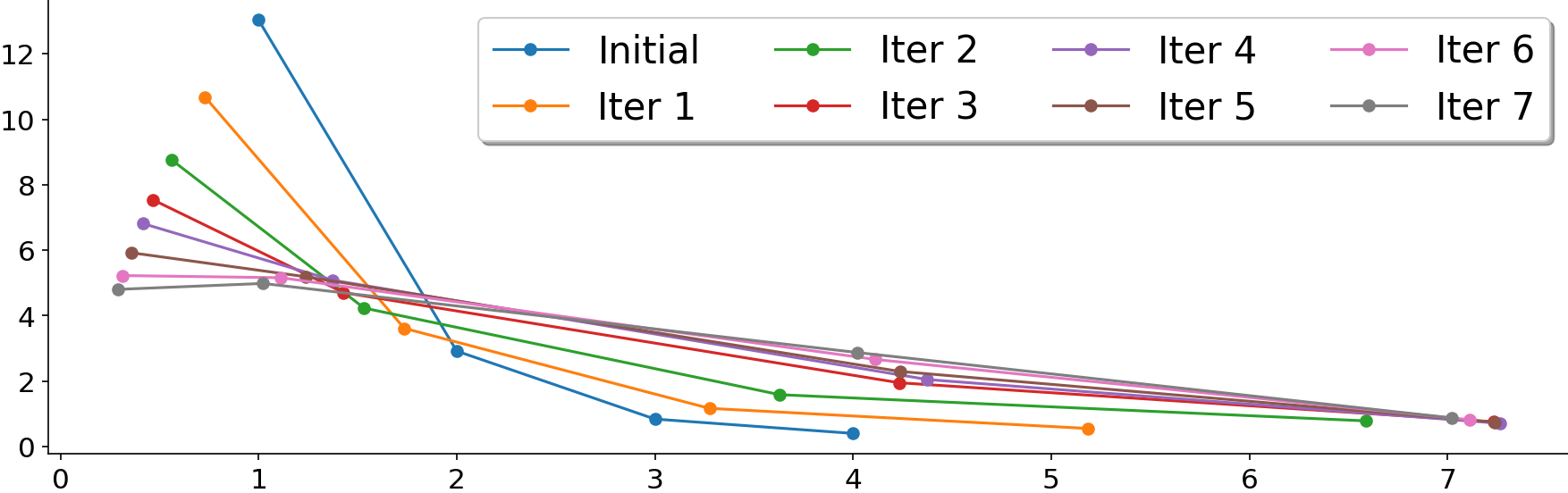}
    \subcaption{Per-block $W_2^2$
    over reparameterization iterations.}\label{fig_reparam}
    \end{minipage}
    \hfill
    \begin{minipage}[b]{0.51\textwidth}
    \includegraphics[width=\textwidth]{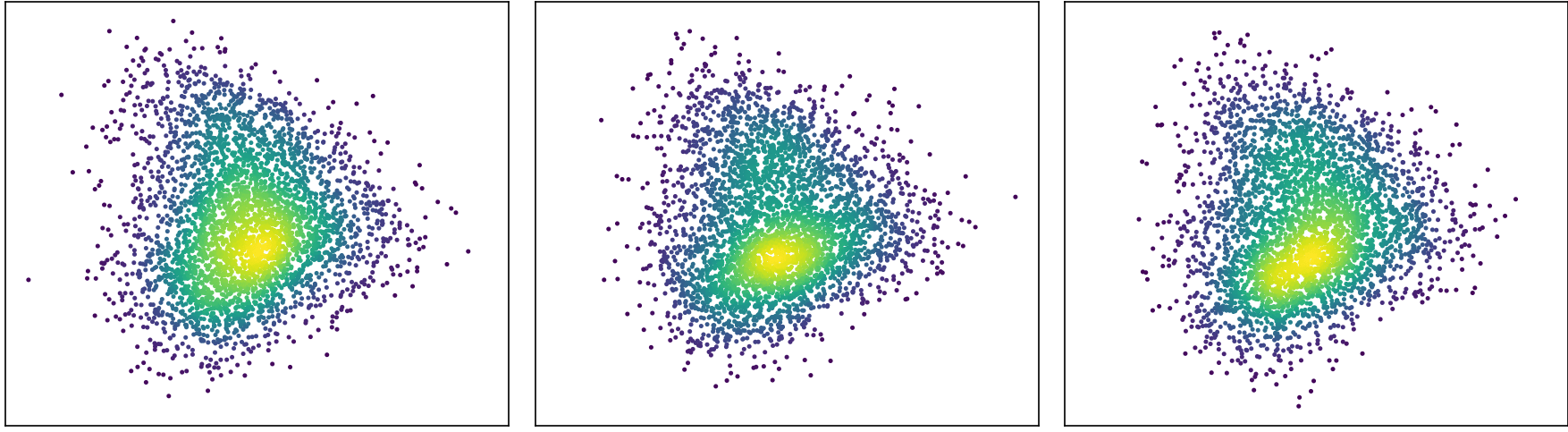}
    \subcaption{Results at initial training (middle) and Iter 7 (right).
    }
    \end{minipage}
    \vspace{-0.05in}
    \caption{
    Reparametrization iterations of \JKO{} model on MINIBOONE.
    (a) After 7 reparameterization iterations, a trajectory of more uniform $W_2$ movements is obtained. 
    (b) Generated samples by the models before and after the moving iterations (visualized in the first two principal components computed from test data).
    }
    \label{fig_MINIBOONE_sub_reparam}
\end{figure}

Table \ref{tab_high_dim} assesses the performance of \JKO{} and baseline methods on four high-dimensional tabular datasets under a fixed-budget setting:
we keep the model size of continuous flow models (OT-Flow and FFJORD) and the diffusion model (ScoreSDE) the same, 
and we increase model sizes for discrete flow models (IGNN and IResNet) which can be 
computed faster therein and this also improves their performances.
Except for \revold{MINIBOONE} dataset, \JKO{} yields smaller or similar \revold{quantitative} metrics compared to all alternatives. 
The visual comparisons of generated data distributions %
are shown in Figure \ref{pca_projection}. 

After additional training to apply the trajectory reparametrization, the performance of \JKO{} is furtherly improved; see Table \ref{tab_high_dim_appendix_2} in comparison with additional baselines of OT-Flow and FFJORD from the original references.
The effects of trajectory reparameterization on the MINIBOONE dataset are shown in Figure \ref{fig_MINIBOONE_sub_reparam},
where the $W_2$ movements across the blocks become more even as the iteration proceeds, see (a). The generative performance, as shown in (b), visually improves after the reparametrization, which is consistent with the lower NLL values in Table \ref{tab_high_dim_appendix_2} (NLL improves from 12.55 to 10.55). 
These results show that \JKO{} 
performs \revold{comparably to the best baseline},
and often better under small computational and memory budgets.

\revold{

\subsection{Image generation}\label{sec:img_gen}
\begin{figure}[!t]
    \begin{center}
    \begin{minipage}{0.9\textwidth}
        \includegraphics[width=\linewidth]{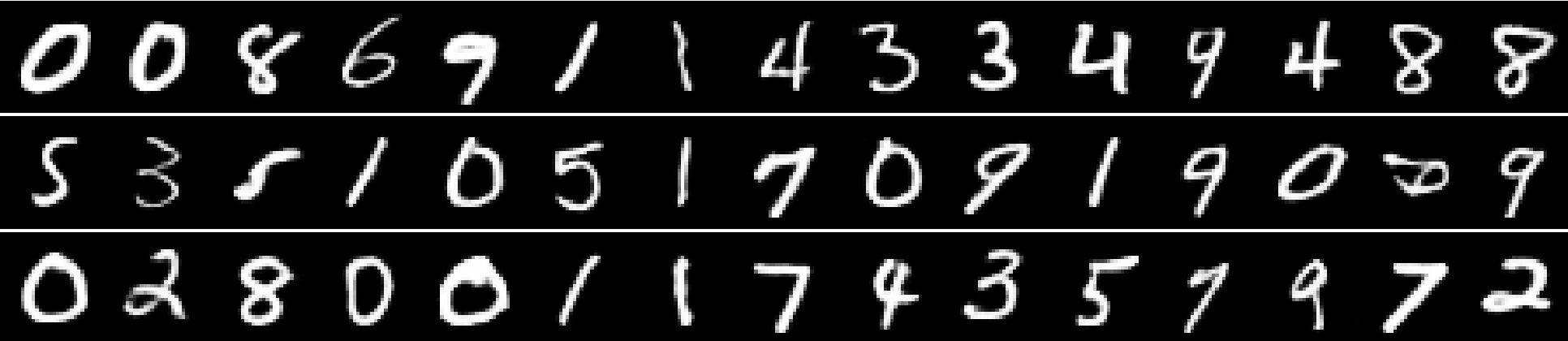}
        
        \subcaption{Generated MNIST digits. FID: 7.95.}
    \end{minipage}
    \end{center}
    \begin{center}
    \begin{minipage}{0.445\linewidth}
        \includegraphics[width=\linewidth]{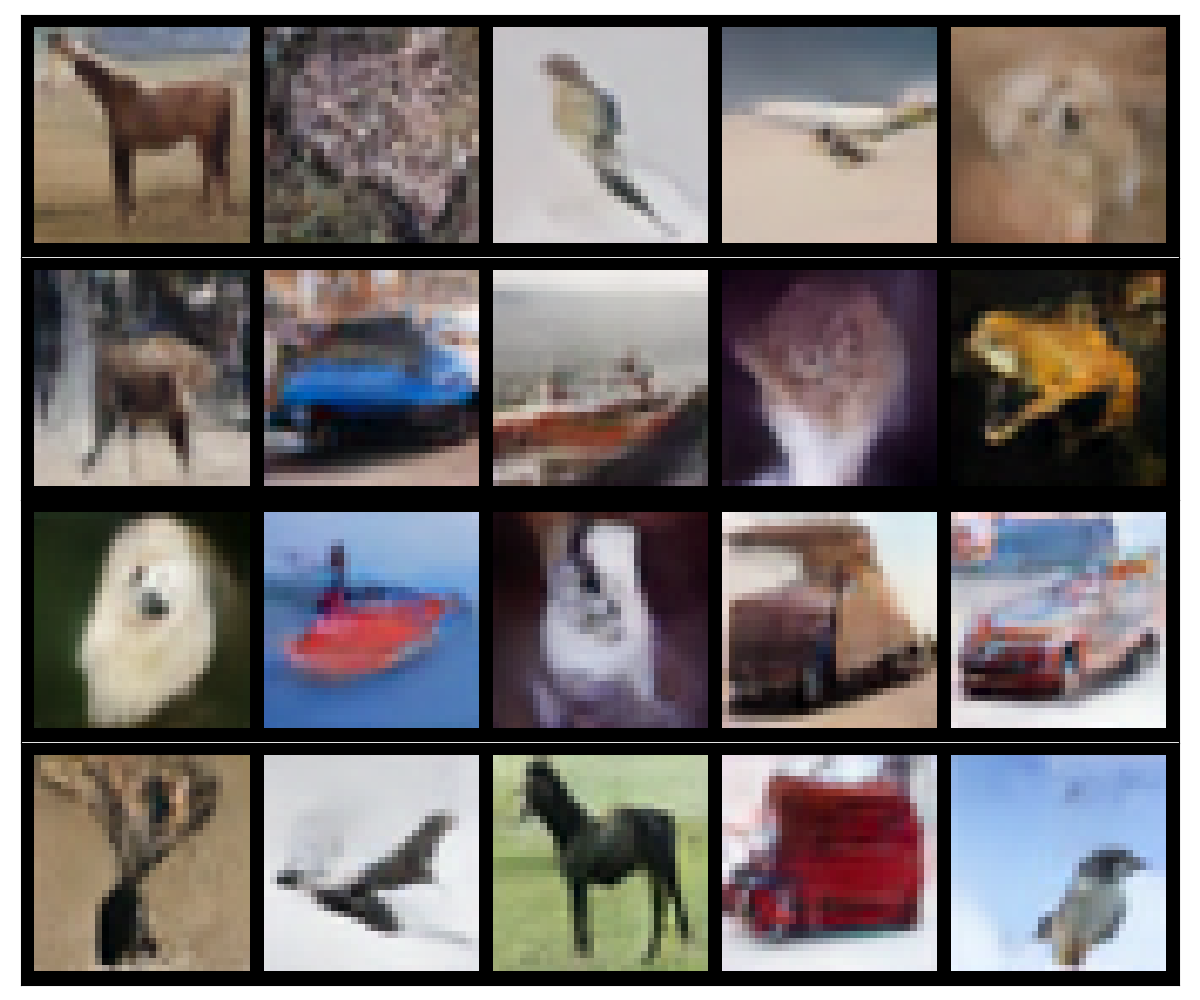}
        \subcaption{Generated CIFAR10 images. FID: 29.10.}
    \end{minipage}
    \begin{minipage}{0.445\linewidth}
        \includegraphics[width=\linewidth]{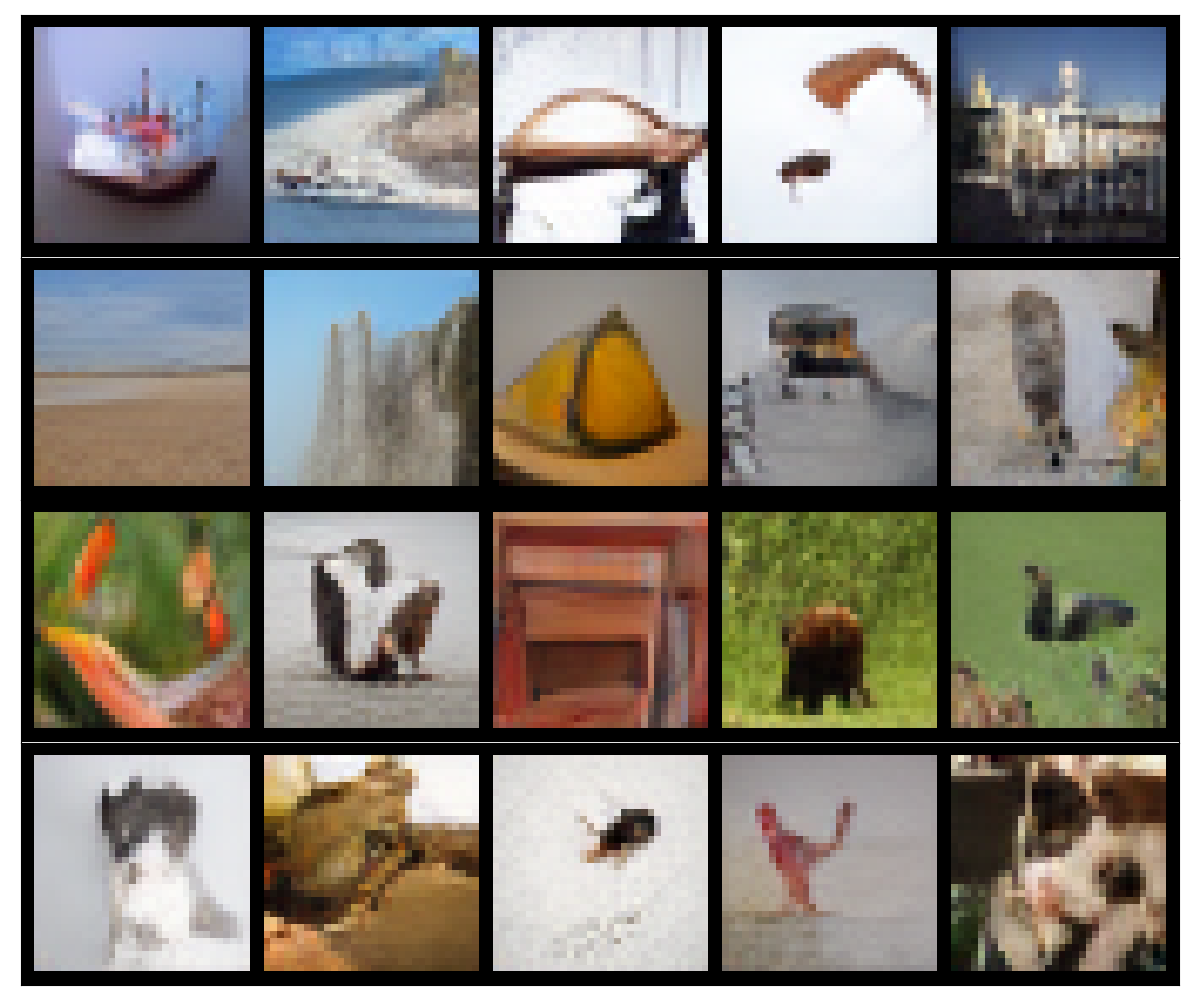}

        \subcaption{Generated Imagenet-32 images. FID: 20.10.}
    \end{minipage}
    \end{center}
    \vspace{-5pt}
    \caption{
    Generated samples of MNIST, CIFAR10, and Imagenet-32 by \JKO{} model in latent space. 
    We select 2 images per class for CIFAR10 and 1 image per class for Imagenet-32. The FIDs are shown in subcaptions.
    Uncurated samples are shown in Figure \ref{fig:img_uncurated}.}
    \label{fig:mnist-cifar-imagenet}
\end{figure}

We apply the \JKO{} model to image generation tasks on the MNIST, CIFAR-10, and Imagenet-32 datasets. In all examples, we train \JKO{} in the latent space of a pre-trained variational auto-encoder (VAE) adopted from \citep{esser2021taming}.
Training generative models in latent space have been shown to obtain state-of-the-art image generation performance, e.g., in StableDiffusion \citep{rombach2022high}. 
Here we train a flow model instead of a score-based diffusion model in the latent space, and more details can be found in Appendix \ref{sec:additional_results}.

The generated images are shown in Figure \ref{fig:mnist-cifar-imagenet}. 
The quantitative evaluation is by the Fréchet inception distance (FID) score \citep{heusel2017gans}, which is included in the figure captions.  
Comparatively, we are aware of some performance gap between our model and the state-of-the-art performance of score-based diffusion models like DDPM \citep{ho2020denoising} and ScoreSDE \citep{song2021score}; however, the results here are obtained with less computation. 
Specifically, on a single A100 GPU, our experiments took 24 hours on CIFAR10 and 30 hours on Imagenet-32.
Meanwhile, the image generation by \JKO{} model here obtains visually more appealing images and achieves lower FIDs compared to most CNF baselines \citep{FFJORD,iResnet,ResFlow,finlay2020train}.}

\subsection{Conditional generation}\label{sec:cond_gen}

The problem aims to generate input samples $X$ given a label $Y$ from the conditional distribution $X|Y$ to be learned from data. 
We follow the approach in IGNN \citep{xu2022invertible}. 
In this setting, the \JKO{} network pushes from the distribution $X|Y=k$ to the class-specific component in the Gaussian mixture of $H|Y=k$, see Figure \ref{fig_cond_gen_toy} and Appendix \ref{append_cond_gen} for more details.
We apply \JKO{} to the Solar ramping dataset
and compare it with the original IGNN model,
and both models use graph neural network layers in the residual blocks. 
The results are shown in Figure \ref{cond_gen_solar}, where both the NLL and MMD-m metrics indicate the superior performance of \JKO{} 
and is consistent with the visual comparison.   

\begin{figure}[!t]
    \centering
    \begin{minipage}{0.48\textwidth}
        \begin{minipage}{0.66\textwidth}
        \includegraphics[width=\linewidth]{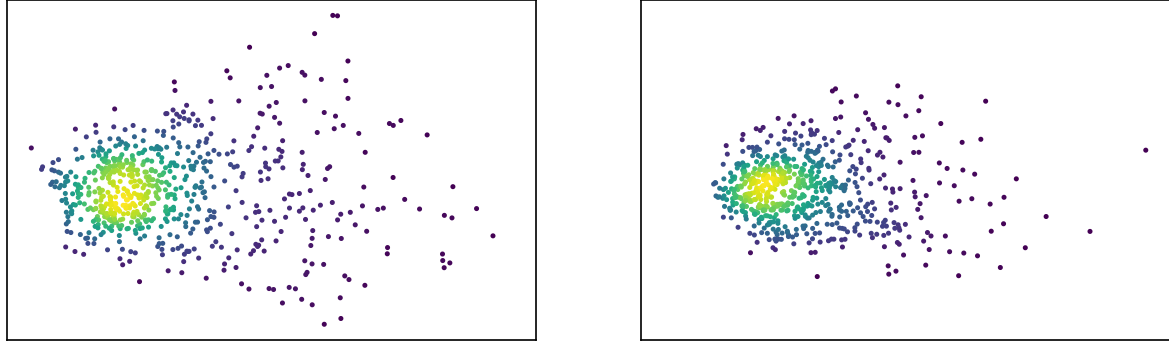}
        \subcaption{{True $X|Y$} \hspace{1.2cm}  \JKO{}\\
        \textbf{$\tau$: 2.13e-3}, MMD-m: \hspace{0.2cm} 8.12e-3
\\
        NLL \hspace{2.4cm} -13.60}
    \end{minipage}
    \begin{minipage}{0.3\textwidth}
        \includegraphics[width=\linewidth]{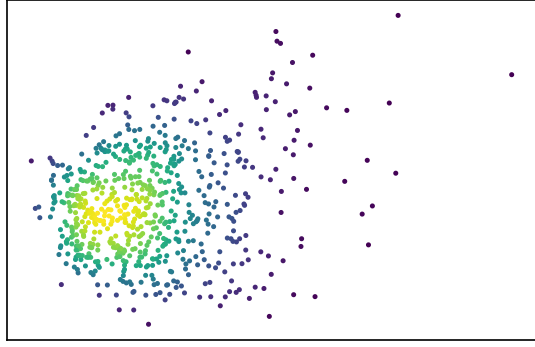}
        
        \subcaption{IGNN\\
         5.70e-2\\
         -4.55}
    \end{minipage}
    \end{minipage}
    \begin{minipage}{0.48\textwidth}
    \begin{minipage}{0.66\textwidth}
        \includegraphics[width=\linewidth]{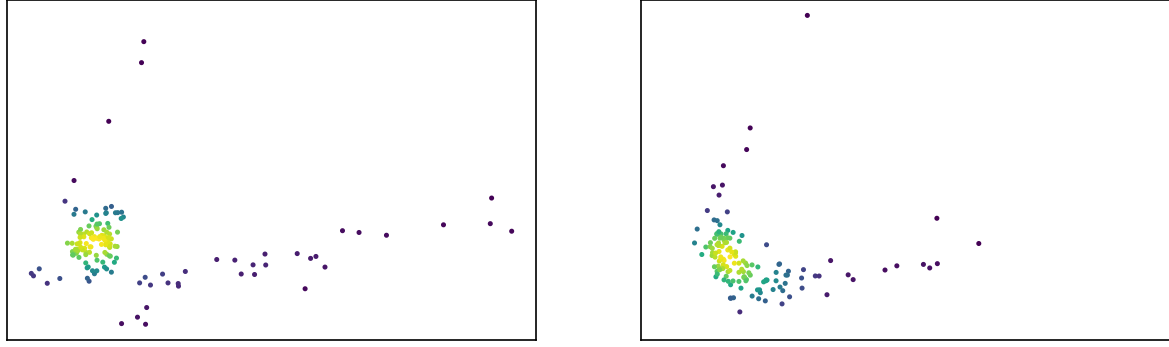}
        \subcaption{{True $X|Y$} \hspace{1.2cm} \JKO{}\\
        \textbf{$\tau$: 2.88e-3,} MMD-m: \hspace{0.2cm} 4.00e-2\\
        NLL \hspace{2.4cm} -4.19}
    \end{minipage}
    \begin{minipage}{0.3\textwidth}
        \includegraphics[width=\linewidth]{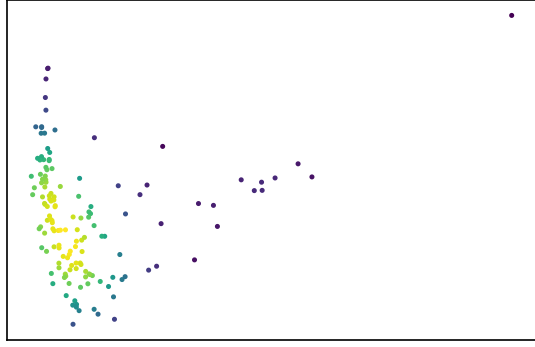}
        \subcaption{IGNN\\
         7.67e-2\\
         1.53
        }
    \end{minipage}
    \end{minipage}
     \vspace{-5pt}
    \caption{
    Conditional graph nodal data generation by \JKO{} and iGNN on Solar ramping data. 
    The nodal data $X\in \R^{10\times 2}$ and the nodal label $Y \in \{0,1\}^{10}$. 
    \revold{
    Plots (a)(b) and (c)(d) are for two different sets of values $Y$ across all nodes respectively.
    Samples $X$ (concatenated across all nodes) are visualized by projection to two principal components (determined by true samples $X|Y$). }}
    \label{cond_gen_solar}
\end{figure}

\section{Discussion}\label{sec:conclude}

The work can be extended in several directions.
The application to larger-scale image datasets
and larger graphs will enlarge the scope of usage.
To overcome the computational challenge faced by neural ODE models for high dimensional input, e.g., images of higher resolution, one would need to improve the training efficiency of the backpropagation in neural ODE in addition to the dimension reduction techniques by VAE as been explored here. 
Another possibility is to combine the \JKO{} scheme with other backbone flow models that are more suitable for the specific tasks. 
Meanwhile, it would be interesting to extend the method to other problems for which CNF models have proven to be effective. 
Examples include 
multi-dimensional probabilistic regression \citep{chen2018neural}, 
a plug-in to deep architectures such as StyleFlow \citep{abdal2021styleflow},
and the application to Mean-field Games \citep{huang2023bridging}.

Theoretically, the expressiveness of the flow model to generate a regular data distribution can be analyzed based on Section \ref{subsec:invertible-express}.
To sketch a road map,
a block-wise approximation guarantee of ${\bf f}(x,t)$ as in \eqref{eq:f(x,t)-FK} can lead to approximation of the Fokker-Planck flow \eqref{eq:fokker-planck}, which pushes forward the density to be $\varepsilon$-close to normality in $T= \log (1/\varepsilon )$ time, see \eqref{eq:W2-convergence-FK}. 
Reversing the time of the ODE then leads to an approximation of the initial density $\rho_0 = p_X$ by flowing backward in time from $T$ to zero. Further analysis under technical assumptions is left to future work. \revold{During the time that this paper was being published, the convergence analysis of the proposed model was studied in \cite{cheng2023convergence}.}

\section*{Acknowledgement}

The authors thank Jianfeng Lu, Yulong Lu, and  Yiping Lu for helpful discussions.
The work is partially supported by NSF DMS-2134037.
C.X. and Y.X. are partially supported by an NSF CAREER CCF-1650913, and NSF DMS-2134037, CMMI-2015787, CMMI-2112533, DMS-1938106, and DMS-1830210 and the Coca-Cola Foundation.
XC is also partially supported by 
NSF DMS-2237842 and Simons Foundation.

\bibliography{JKO_references}

\begin{thebibliography}{81}
\providecommand{\natexlab}[1]{#1}
\providecommand{\url}[1]{\texttt{#1}}
\expandafter\ifx\csname urlstyle\endcsname\relax
  \providecommand{\doi}[1]{doi: #1}\else
  \providecommand{\doi}{doi: \begingroup \urlstyle{rm}\Url}\fi

\bibitem[Abdal et~al.(2021)Abdal, Zhu, Mitra, and Wonka]{abdal2021styleflow}
Rameen Abdal, Peihao Zhu, Niloy~J Mitra, and Peter Wonka.
\newblock Styleflow: Attribute-conditioned exploration of stylegan-generated images using conditional continuous normalizing flows.
\newblock \emph{ACM Transactions on Graphics (ToG)}, 40\penalty0 (3):\penalty0 1--21, 2021.

\bibitem[Albergo and Vanden-Eijnden(2023)]{albergo2023building}
Michael~Samuel Albergo and Eric Vanden-Eijnden.
\newblock Building normalizing flows with stochastic interpolants.
\newblock In \emph{The Eleventh International Conference on Learning Representations}, 2023.

\bibitem[Alvarez-Melis et~al.(2022)Alvarez-Melis, Schiff, and Mroueh]{alvarez-melis2022optimizing}
David Alvarez-Melis, Yair Schiff, and Youssef Mroueh.
\newblock Optimizing functionals on the space of probabilities with input convex neural networks.
\newblock \emph{Transactions on Machine Learning Research}, 2022.
\newblock ISSN 2835-8856.

\bibitem[Amos et~al.(2017)Amos, Xu, and Kolter]{amos2017input}
Brandon Amos, Lei Xu, and J~Zico Kolter.
\newblock Input convex neural networks.
\newblock In \emph{International Conference on Machine Learning}, pages 146--155. PMLR, 2017.

\bibitem[Arcones and Gine(1992)]{arcones1992bootstrap}
Miguel~A Arcones and Evarist Gine.
\newblock On the bootstrap of u and v statistics.
\newblock \emph{The Annals of Statistics}, pages 655--674, 1992.

\bibitem[Bailey and Telgarsky(2018)]{bailey2018size}
Bolton Bailey and Matus~J Telgarsky.
\newblock Size-noise tradeoffs in generative networks.
\newblock \emph{Advances in Neural Information Processing Systems}, 31, 2018.

\bibitem[Barron(1993)]{barron1993universal}
Andrew~R Barron.
\newblock Universal approximation bounds for superpositions of a sigmoidal function.
\newblock \emph{IEEE Transactions on Information theory}, 39\penalty0 (3):\penalty0 930--945, 1993.

\bibitem[Behrmann et~al.(2019)Behrmann, Grathwohl, Chen, Duvenaud, and Jacobsen]{iResnet}
Jens Behrmann, Will Grathwohl, Ricky~TQ Chen, David Duvenaud, and J{\"o}rn-Henrik Jacobsen.
\newblock Invertible residual networks.
\newblock In \emph{International Conference on Machine Learning}, pages 573--582. PMLR, 2019.

\bibitem[Block et~al.(2020)Block, Mroueh, and Rakhlin]{block2020generative}
Adam Block, Youssef Mroueh, and Alexander Rakhlin.
\newblock Generative modeling with denoising auto-encoders and langevin sampling.
\newblock \emph{arXiv preprint arXiv:2002.00107}, 2020.

\bibitem[Boffi and Vanden-Eijnden(2023)]{boffi2023probability}
Nicholas~M Boffi and Eric Vanden-Eijnden.
\newblock Probability flow solution of the fokker--planck equation.
\newblock \emph{Machine Learning: Science and Technology}, 4\penalty0 (3):\penalty0 035012, 2023.

\bibitem[Bolley et~al.(2012)Bolley, Gentil, and Guillin]{bolley2012convergence}
Fran{\c{c}}ois Bolley, Ivan Gentil, and Arnaud Guillin.
\newblock Convergence to equilibrium in wasserstein distance for fokker--planck equations.
\newblock \emph{Journal of Functional Analysis}, 263\penalty0 (8):\penalty0 2430--2457, 2012.

\bibitem[Bunne et~al.(2022)Bunne, Papaxanthos, Krause, and Cuturi]{bunne2022proximal}
Charlotte Bunne, Laetitia Papaxanthos, Andreas Krause, and Marco Cuturi.
\newblock Proximal optimal transport modeling of population dynamics.
\newblock In \emph{International Conference on Artificial Intelligence and Statistics}, pages 6511--6528. PMLR, 2022.

\bibitem[Chen et~al.(2018)Chen, Rubanova, Bettencourt, and Duvenaud]{chen2018neural}
Ricky~TQ Chen, Yulia Rubanova, Jesse Bettencourt, and David~K Duvenaud.
\newblock Neural ordinary differential equations.
\newblock \emph{Advances in neural information processing systems}, 31, 2018.

\bibitem[Chen et~al.(2019)Chen, Behrmann, Duvenaud, and Jacobsen]{ResFlow}
Ricky~TQ Chen, Jens Behrmann, David~K Duvenaud, and J{\"o}rn-Henrik Jacobsen.
\newblock Residual flows for invertible generative modeling.
\newblock \emph{Advances in Neural Information Processing Systems}, 32, 2019.

\bibitem[Cheng et~al.(2023)Cheng, Lu, Tan, and Xie]{cheng2023convergence}
Xiuyuan Cheng, Jianfeng Lu, Yixin Tan, and Yao Xie.
\newblock Convergence of flow-based generative models via proximal gradient descent in wasserstein space.
\newblock \emph{arXiv preprint arXiv:2310.17582}, 2023.

\bibitem[Defferrard et~al.(2016)Defferrard, Bresson, and Vandergheynst]{Chebnet}
Micha{\"e}l Defferrard, Xavier Bresson, and Pierre Vandergheynst.
\newblock Convolutional neural networks on graphs with fast localized spectral filtering.
\newblock \emph{Advances in neural information processing systems}, 29, 2016.

\bibitem[Degond and Mas-Gallic(1989)]{degond1989weighted}
Pierre Degond and S~Mas-Gallic.
\newblock The weighted particle method for convection-diffusion equations. i. the case of an isotropic viscosity.
\newblock \emph{Mathematics of computation}, 53\penalty0 (188):\penalty0 485--507, 1989.

\bibitem[Degond and Mustieles(1990)]{degond1990deterministic}
Pierre Degond and Francisco-Jos{\'e} Mustieles.
\newblock A deterministic approximation of diffusion equations using particles.
\newblock \emph{SIAM Journal on Scientific and Statistical Computing}, 11\penalty0 (2):\penalty0 293--310, 1990.

\bibitem[Deng et~al.(2009)Deng, Dong, Socher, Li, Li, and Fei-Fei]{deng2009imagenet}
Jia Deng, Wei Dong, Richard Socher, Li-Jia Li, Kai Li, and Li~Fei-Fei.
\newblock Imagenet: A large-scale hierarchical image database.
\newblock In \emph{2009 IEEE conference on computer vision and pattern recognition}, pages 248--255. Ieee, 2009.

\bibitem[Deng(2012)]{deng2012mnist}
Li~Deng.
\newblock The mnist database of handwritten digit images for machine learning research.
\newblock \emph{IEEE Signal Processing Magazine}, 29\penalty0 (6):\penalty0 141--142, 2012.

\bibitem[Dinh et~al.(2015)Dinh, Krueger, and Bengio]{dinh2015nice}
Laurent Dinh, David Krueger, and Yoshua Bengio.
\newblock {NICE:} non-linear independent components estimation.
\newblock In Yoshua Bengio and Yann LeCun, editors, \emph{3rd International Conference on Learning Representations, {ICLR} 2015, San Diego, CA, USA, May 7-9, 2015, Workshop Track Proceedings}, 2015.

\bibitem[Dinh et~al.(2017)Dinh, Sohl{-}Dickstein, and Bengio]{RNVP}
Laurent Dinh, Jascha Sohl{-}Dickstein, and Samy Bengio.
\newblock Density estimation using real {NVP}.
\newblock In \emph{5th International Conference on Learning Representations, {ICLR} 2017, Toulon, France, April 24-26, 2017, Conference Track Proceedings}. OpenReview.net, 2017.

\bibitem[Esser et~al.(2021)Esser, Rombach, and Ommer]{esser2021taming}
Patrick Esser, Robin Rombach, and Bjorn Ommer.
\newblock Taming transformers for high-resolution image synthesis.
\newblock In \emph{Proceedings of the IEEE/CVF conference on computer vision and pattern recognition}, pages 12873--12883, 2021.

\bibitem[Fan et~al.(2022)Fan, Zhang, Taghvaei, and Chen]{fan2022variational}
Jiaojiao Fan, Qinsheng Zhang, Amirhossein Taghvaei, and Yongxin Chen.
\newblock Variational {W}asserstein gradient flow.
\newblock In \emph{Proceedings of the 39th International Conference on Machine Learning}, pages 6185--6215. PMLR, 2022.

\bibitem[Fey and Lenssen(2019)]{Fey/Lenssen/2019}
Matthias Fey and Jan~E. Lenssen.
\newblock Fast graph representation learning with {PyTorch Geometric}.
\newblock In \emph{ICLR Workshop on Representation Learning on Graphs and Manifolds}, 2019.

\bibitem[Finlay et~al.(2020)Finlay, Jacobsen, Nurbekyan, and Oberman]{finlay2020train}
Chris Finlay, J{\"o}rn-Henrik Jacobsen, Levon Nurbekyan, and Adam Oberman.
\newblock How to train your neural ode: the world of jacobian and kinetic regularization.
\newblock In \emph{International conference on machine learning}, pages 3154--3164. PMLR, 2020.

\bibitem[Goodfellow et~al.(2014)Goodfellow, Pouget-Abadie, Mirza, Xu, Warde-Farley, Ozair, Courville, and Bengio]{GAN}
Ian~J. Goodfellow, Jean Pouget-Abadie, Mehdi Mirza, Bing Xu, David Warde-Farley, Sherjil Ozair, Aaron~C. Courville, and Yoshua Bengio.
\newblock Generative adversarial nets.
\newblock In \emph{NIPS}, 2014.

\bibitem[Grathwohl et~al.(2019)Grathwohl, Chen, Bettencourt, and Duvenaud]{FFJORD}
Will Grathwohl, Ricky T.~Q. Chen, Jesse Bettencourt, and David Duvenaud.
\newblock Scalable reversible generative models with free-form continuous dynamics.
\newblock In \emph{International Conference on Learning Representations}, 2019.

\bibitem[Gretton et~al.(2012{\natexlab{a}})Gretton, Borgwardt, Rasch, Sch{\"o}lkopf, and Smola]{Gretton2012AKT}
Arthur Gretton, Karsten~M. Borgwardt, Malte~J. Rasch, Bernhard Sch{\"o}lkopf, and Alex Smola.
\newblock A kernel two-sample test.
\newblock \emph{J. Mach. Learn. Res.}, 13:\penalty0 723--773, 2012{\natexlab{a}}.

\bibitem[Gretton et~al.(2012{\natexlab{b}})Gretton, Sejdinovic, Strathmann, Balakrishnan, Pontil, Fukumizu, and Sriperumbudur]{gretton2012optimal}
Arthur Gretton, Dino Sejdinovic, Heiko Strathmann, Sivaraman Balakrishnan, Massimiliano Pontil, Kenji Fukumizu, and Bharath~K Sriperumbudur.
\newblock Optimal kernel choice for large-scale two-sample tests.
\newblock \emph{Advances in neural information processing systems}, 25, 2012{\natexlab{b}}.

\bibitem[Gulrajani et~al.(2017)Gulrajani, Ahmed, Arjovsky, Dumoulin, and Courville]{WassersteinGAN}
Ishaan Gulrajani, Faruk Ahmed, Mart{\'i}n Arjovsky, Vincent Dumoulin, and Aaron~C. Courville.
\newblock Improved training of wasserstein gans.
\newblock In \emph{NIPS}, 2017.

\bibitem[Heusel et~al.(2017)Heusel, Ramsauer, Unterthiner, Nessler, and Hochreiter]{heusel2017gans}
Martin Heusel, Hubert Ramsauer, Thomas Unterthiner, Bernhard Nessler, and Sepp Hochreiter.
\newblock Gans trained by a two time-scale update rule converge to a local nash equilibrium.
\newblock \emph{Advances in neural information processing systems}, 30, 2017.

\bibitem[Ho et~al.(2020)Ho, Jain, and Abbeel]{ho2020denoising}
Jonathan Ho, Ajay Jain, and Pieter Abbeel.
\newblock Denoising diffusion probabilistic models.
\newblock \emph{Advances in Neural Information Processing Systems}, 33:\penalty0 6840--6851, 2020.

\bibitem[Huang et~al.(2021)Huang, Lim, and Courville]{huang2021variational}
Chin-Wei Huang, Jae~Hyun Lim, and Aaron~C Courville.
\newblock A variational perspective on diffusion-based generative models and score matching.
\newblock \emph{Advances in Neural Information Processing Systems}, 34:\penalty0 22863--22876, 2021.

\bibitem[Huang et~al.(2023)Huang, Yu, Chen, and Lai]{huang2023bridging}
Han Huang, Jiajia Yu, Jie Chen, and Rongjie Lai.
\newblock Bridging mean-field games and normalizing flows with trajectory regularization.
\newblock \emph{Journal of Computational Physics}, page 112155, 2023.

\bibitem[Hutchinson(1989)]{Hutchinson1989ASE}
Michael~F. Hutchinson.
\newblock A stochastic estimator of the trace of the influence matrix for laplacian smoothing splines.
\newblock \emph{Communications in Statistics - Simulation and Computation}, 18:\penalty0 1059--1076, 1989.

\bibitem[Isola et~al.(2017)Isola, Zhu, Zhou, and Efros]{CGAN}
Phillip Isola, Jun-Yan Zhu, Tinghui Zhou, and Alexei~A. Efros.
\newblock Image-to-image translation with conditional adversarial networks.
\newblock \emph{2017 IEEE Conference on Computer Vision and Pattern Recognition (CVPR)}, pages 5967--5976, 2017.

\bibitem[Johnson and Zhang(2019)]{johnson2019framework}
Rie Johnson and Tong Zhang.
\newblock A framework of composite functional gradient methods for generative adversarial models.
\newblock \emph{IEEE transactions on pattern analysis and machine intelligence}, 43\penalty0 (1):\penalty0 17--32, 2019.

\bibitem[Jordan et~al.(1998)Jordan, Kinderlehrer, and Otto]{jordan1998variational}
Richard Jordan, David Kinderlehrer, and Felix Otto.
\newblock The variational formulation of the fokker--planck equation.
\newblock \emph{SIAM journal on mathematical analysis}, 29\penalty0 (1):\penalty0 1--17, 1998.

\bibitem[Kingma and Ba(2015)]{Kingma2015AdamAM}
Diederik~P. Kingma and Jimmy Ba.
\newblock Adam: {A} method for stochastic optimization.
\newblock In Yoshua Bengio and Yann LeCun, editors, \emph{3rd International Conference on Learning Representations, {ICLR} 2015, San Diego, CA, USA, May 7-9, 2015, Conference Track Proceedings}, 2015.

\bibitem[Kingma and Welling(2014)]{VAE}
Diederik~P. Kingma and Max Welling.
\newblock Auto-encoding variational bayes.
\newblock In Yoshua Bengio and Yann LeCun, editors, \emph{2nd International Conference on Learning Representations, {ICLR} 2014, Banff, AB, Canada, April 14-16, 2014, Conference Track Proceedings}, 2014.

\bibitem[Kingma and Welling(2019)]{VAE_review}
Diederik~P. Kingma and Max Welling.
\newblock An introduction to variational autoencoders.
\newblock \emph{Foundations and Trends{\textregistered} in Machine Learning}, 12\penalty0 (4):\penalty0 307--392, 2019.

\bibitem[Kingma and Dhariwal(2018)]{kingma2018glow}
Durk~P Kingma and Prafulla Dhariwal.
\newblock Glow: Generative flow with invertible 1x1 convolutions.
\newblock \emph{Advances in neural information processing systems}, 31, 2018.

\bibitem[Kobyzev et~al.(2020)Kobyzev, Prince, and Brubaker]{nflow_review}
Ivan Kobyzev, Simon~JD Prince, and Marcus~A Brubaker.
\newblock Normalizing flows: An introduction and review of current methods.
\newblock \emph{IEEE transactions on pattern analysis and machine intelligence}, 43\penalty0 (11):\penalty0 3964--3979, 2020.

\bibitem[Korotin et~al.(2021)Korotin, Egiazarian, Asadulaev, Safin, and Burnaev]{korotin2021wasserstein}
Alexander Korotin, Vage Egiazarian, Arip Asadulaev, Alexander Safin, and Evgeny Burnaev.
\newblock Wasserstein-2 generative networks.
\newblock In \emph{International Conference on Learning Representations}, 2021.

\bibitem[Krizhevsky and Hinton(2009)]{krizhevsky2009learning}
Alex Krizhevsky and Geoffrey Hinton.
\newblock Learning multiple layers of features from tiny images.
\newblock Technical Report~0, University of Toronto, Toronto, Ontario, 2009.

\bibitem[Lee et~al.(2017)Lee, Ge, Ma, Risteski, and Arora]{lee2017ability}
Holden Lee, Rong Ge, Tengyu Ma, Andrej Risteski, and Sanjeev Arora.
\newblock On the ability of neural nets to express distributions.
\newblock In \emph{Conference on Learning Theory}, pages 1271--1296. PMLR, 2017.

\bibitem[Lipman et~al.(2023)Lipman, Chen, Ben-Hamu, Nickel, and Le]{lipman2023flow}
Yaron Lipman, Ricky T.~Q. Chen, Heli Ben-Hamu, Maximilian Nickel, and Matthew Le.
\newblock Flow matching for generative modeling.
\newblock In \emph{The Eleventh International Conference on Learning Representations}, 2023.

\bibitem[Liu et~al.(2019)Liu, Kumar, Ba, Kiros, and Swersky]{liu2019graph}
Jenny Liu, Aviral Kumar, Jimmy Ba, Jamie Kiros, and Kevin Swersky.
\newblock Graph normalizing flows.
\newblock \emph{Advances in Neural Information Processing Systems}, 32, 2019.

\bibitem[Liu and Wang(2016)]{liu2016stein}
Qiang Liu and Dilin Wang.
\newblock Stein variational gradient descent: A general purpose bayesian inference algorithm.
\newblock \emph{Advances in neural information processing systems}, 29, 2016.

\bibitem[Liutkus et~al.(2019)Liutkus, Simsekli, Majewski, Durmus, and St{\"o}ter]{liutkus2019sliced}
Antoine Liutkus, Umut Simsekli, Szymon Majewski, Alain Durmus, and Fabian-Robert St{\"o}ter.
\newblock Sliced-wasserstein flows: Nonparametric generative modeling via optimal transport and diffusions.
\newblock In \emph{International Conference on Machine Learning}, pages 4104--4113. PMLR, 2019.

\bibitem[Lu and Lu(2020)]{lu2020universal}
Yulong Lu and Jianfeng Lu.
\newblock A universal approximation theorem of deep neural networks for expressing probability distributions.
\newblock \emph{Advances in neural information processing systems}, 33:\penalty0 3094--3105, 2020.

\bibitem[Lucas et~al.(2019)Lucas, Tucker, Grosse, and Norouzi]{Lucas2019UnderstandingPC}
James Lucas, G.~Tucker, Roger~B. Grosse, and Mohammad Norouzi.
\newblock Understanding posterior collapse in generative latent variable models.
\newblock In \emph{DGS@ICLR}, 2019.

\bibitem[Maoutsa et~al.(2020)Maoutsa, Reich, and Opper]{maoutsa2020interacting}
Dimitra Maoutsa, Sebastian Reich, and Manfred Opper.
\newblock Interacting particle solutions of fokker--planck equations through gradient--log--density estimation.
\newblock \emph{Entropy}, 22\penalty0 (8):\penalty0 802, 2020.

\bibitem[Mathieu and Nickel(2020)]{mathieu2020riemannian}
Emile Mathieu and Maximilian Nickel.
\newblock Riemannian continuous normalizing flows.
\newblock \emph{Advances in Neural Information Processing Systems}, 33:\penalty0 2503--2515, 2020.

\bibitem[Mokrov et~al.(2021)Mokrov, Korotin, Li, Genevay, Solomon, and Burnaev]{mokrov2021large}
Petr Mokrov, Alexander Korotin, Lingxiao Li, Aude Genevay, Justin~M Solomon, and Evgeny Burnaev.
\newblock Large-scale wasserstein gradient flows.
\newblock \emph{Advances in Neural Information Processing Systems}, 34:\penalty0 15243--15256, 2021.

\bibitem[Onken et~al.(2021)Onken, Wu~Fung, Li, and Ruthotto]{OT-Flow}
Derek Onken, S~Wu~Fung, Xingjian Li, and Lars Ruthotto.
\newblock Ot-flow: Fast and accurate continuous normalizing flows via optimal transport.
\newblock In \emph{Proceedings of the AAAI Conference on Artificial Intelligence}, volume~35, 2021.

\bibitem[Papamakarios et~al.(2017)Papamakarios, Pavlakou, and Murray]{papamakarios2017masked}
George Papamakarios, Theo Pavlakou, and Iain Murray.
\newblock Masked autoregressive flow for density estimation.
\newblock \emph{Advances in neural information processing systems}, 30, 2017.

\bibitem[Paszke et~al.(2019)Paszke, Gross, Massa, Lerer, Bradbury, Chanan, Killeen, Lin, Gimelshein, Antiga, Desmaison, Kopf, Yang, DeVito, Raison, Tejani, Chilamkurthy, Steiner, Fang, Bai, and Chintala]{NEURIPS2019_9015}
Adam Paszke, Sam Gross, Francisco Massa, Adam Lerer, James Bradbury, Gregory Chanan, Trevor Killeen, Zeming Lin, Natalia Gimelshein, Luca Antiga, Alban Desmaison, Andreas Kopf, Edward Yang, Zachary DeVito, Martin Raison, Alykhan Tejani, Sasank Chilamkurthy, Benoit Steiner, Lu~Fang, Junjie Bai, and Soumith Chintala.
\newblock Pytorch: An imperative style, high-performance deep learning library.
\newblock In H.~Wallach, H.~Larochelle, A.~Beygelzimer, F.~d\textquotesingle Alch\'{e}-Buc, E.~Fox, and R.~Garnett, editors, \emph{Advances in Neural Information Processing Systems 32}, pages 8024--8035. Curran Associates, Inc., 2019.

\bibitem[Perekrestenko et~al.(2020)Perekrestenko, M{\"u}ller, and B{\"o}lcskei]{perekrestenko2020constructive}
Dmytro Perekrestenko, Stephan M{\"u}ller, and Helmut B{\"o}lcskei.
\newblock Constructive universal high-dimensional distribution generation through deep relu networks.
\newblock In \emph{International Conference on Machine Learning}, pages 7610--7619. PMLR, 2020.

\bibitem[Perekrestenko et~al.(2021)Perekrestenko, Eberhard, and B{\"o}lcskei]{perekrestenko2021high}
Dmytro Perekrestenko, L{\'e}andre Eberhard, and Helmut B{\"o}lcskei.
\newblock High-dimensional distribution generation through deep neural networks.
\newblock \emph{Partial Differential Equations and Applications}, 2\penalty0 (5):\penalty0 1--44, 2021.

\bibitem[Rombach et~al.(2022)Rombach, Blattmann, Lorenz, Esser, and Ommer]{rombach2022high}
Robin Rombach, Andreas Blattmann, Dominik Lorenz, Patrick Esser, and Bj{\"o}rn Ommer.
\newblock High-resolution image synthesis with latent diffusion models.
\newblock In \emph{Proceedings of the IEEE/CVF conference on computer vision and pattern recognition}, pages 10684--10695, 2022.

\bibitem[Rout et~al.(2022)Rout, Korotin, and Burnaev]{rout2022generative}
Litu Rout, Alexander Korotin, and Evgeny Burnaev.
\newblock Generative modeling with optimal transport maps.
\newblock In \emph{International Conference on Learning Representations}, 2022.

\bibitem[Ruthotto et~al.(2020)Ruthotto, Osher, Li, Nurbekyan, and Fung]{ruthotto2020machine}
Lars Ruthotto, Stanley~J Osher, Wuchen Li, Levon Nurbekyan, and Samy~Wu Fung.
\newblock A machine learning framework for solving high-dimensional mean field game and mean field control problems.
\newblock \emph{Proceedings of the National Academy of Sciences}, 117\penalty0 (17):\penalty0 9183--9193, 2020.

\bibitem[Salimans et~al.(2016)Salimans, Goodfellow, Zaremba, Cheung, Radford, Chen, and Chen]{Salimans2016ImprovedTF}
Tim Salimans, Ian Goodfellow, Wojciech Zaremba, Vicki Cheung, Alec Radford, Xi~Chen, and Xi~Chen.
\newblock Improved techniques for training gans.
\newblock In \emph{Advances in Neural Information Processing Systems}, volume~29. Curran Associates, Inc., 2016.

\bibitem[Schrab et~al.(2023)Schrab, Kim, Albert, Laurent, Guedj, and Gretton]{schrab2023MMD}
Antonin Schrab, Ilmun Kim, MÃ©lisande Albert, BÃ©atrice Laurent, Benjamin Guedj, and Arthur Gretton.
\newblock Mmd aggregated two-sample test.
\newblock \emph{Journal of Machine Learning Research}, 24\penalty0 (194):\penalty0 1--81, 2023.

\bibitem[Shen et~al.(2022)Shen, Wang, Kale, Ribeiro, Karbasi, and Hassani]{shen2022self}
Zebang Shen, Zhenfu Wang, Satyen Kale, Alejandro Ribeiro, Amin Karbasi, and Hamed Hassani.
\newblock Self-consistency of the fokker planck equation.
\newblock In \emph{Conference on Learning Theory}, pages 817--841. PMLR, 2022.

\bibitem[Sideris(2013)]{Sideris2013OrdinaryDE}
Thomas~C Sideris.
\newblock \emph{Ordinary differential equations and dynamical systems}, volume~2.
\newblock Springer, 2013.

\bibitem[Simonyan and Zisserman(2015)]{Simonyan15}
Karen Simonyan and Andrew Zisserman.
\newblock Very deep convolutional networks for large-scale image recognition.
\newblock In \emph{International Conference on Learning Representations}, 2015.

\bibitem[Song and Ermon(2019)]{song2019generative}
Yang Song and Stefano Ermon.
\newblock Generative modeling by estimating gradients of the data distribution.
\newblock \emph{Advances in Neural Information Processing Systems}, 32, 2019.

\bibitem[Song et~al.(2021)Song, Sohl-Dickstein, Kingma, Kumar, Ermon, and Poole]{song2021score}
Yang Song, Jascha Sohl-Dickstein, Diederik~P Kingma, Abhishek Kumar, Stefano Ermon, and Ben Poole.
\newblock Score-based generative modeling through stochastic differential equations.
\newblock In \emph{International Conference on Learning Representations}, 2021.

\bibitem[Stein(1972)]{stein1972bound}
Charles Stein.
\newblock A bound for the error in the normal approximation to the distribution of a sum of dependent random variables.
\newblock In \emph{Proceedings of the sixth Berkeley symposium on mathematical statistics and probability, volume 2: Probability theory}, volume~6, pages 583--603. University of California Press, 1972.

\bibitem[Sutherland et~al.(2017)Sutherland, Tung, Strathmann, De, Ramdas, Smola, and Gretton]{sutherland2017generative}
Danica~J. Sutherland, Hsiao-Yu Tung, Heiko Strathmann, Soumyajit De, Aaditya Ramdas, Alex Smola, and Arthur Gretton.
\newblock Generative models and model criticism via optimized maximum mean discrepancy.
\newblock In \emph{International Conference on Learning Representations}, 2017.

\bibitem[Tabak and Vanden-Eijnden(2010)]{tabak2010density}
Esteban~G Tabak and Eric Vanden-Eijnden.
\newblock Density estimation by dual ascent of the log-likelihood.
\newblock \emph{Communications in Mathematical Sciences}, 8\penalty0 (1):\penalty0 217--233, 2010.

\bibitem[Tzen and Raginsky(2019{\natexlab{a}})]{tzen2019neural}
Belinda Tzen and Maxim Raginsky.
\newblock Neural stochastic differential equations: Deep latent gaussian models in the diffusion limit.
\newblock \emph{arXiv preprint arXiv:1905.09883}, 2019{\natexlab{a}}.

\bibitem[Tzen and Raginsky(2019{\natexlab{b}})]{tzen2019theoretical}
Belinda Tzen and Maxim Raginsky.
\newblock Theoretical guarantees for sampling and inference in generative models with latent diffusions.
\newblock In \emph{Conference on Learning Theory}, pages 3084--3114. PMLR, 2019{\natexlab{b}}.

\bibitem[Vidal et~al.(2023)Vidal, Wu~Fung, Tenorio, Osher, and Nurbekyan]{vidal2023taming}
Alexander Vidal, Samy Wu~Fung, Luis Tenorio, Stanley Osher, and Levon Nurbekyan.
\newblock Taming hyperparameter tuning in continuous normalizing flows using the jko scheme.
\newblock \emph{Scientific Reports}, 13\penalty0 (1):\penalty0 4501, 2023.

\bibitem[Xu et~al.(2022)Xu, Cheng, and Xie]{xu2022invertible}
Chen Xu, Xiuyuan Cheng, and Yao Xie.
\newblock Invertible neural networks for graph prediction.
\newblock \emph{IEEE Journal on Selected Areas in Information Theory}, 3\penalty0 (3):\penalty0 454--467, 2022.

\bibitem[Yarotsky(2017)]{yarotsky2017error}
Dmitry Yarotsky.
\newblock Error bounds for approximations with deep relu networks.
\newblock \emph{Neural Networks}, 94:\penalty0 103--114, 2017.

\bibitem[Zhang et~al.(2018)Zhang, Wang, et~al.]{zhang2018monge}
Linfeng Zhang, Lei Wang, et~al.
\newblock Monge-{A}mp\'ere flow for generative modeling.
\newblock \emph{arXiv preprint arXiv:1809.10188}, 2018.

\bibitem[Zhang and Chen(2021)]{zhang2021diffusion}
Qinsheng Zhang and Yongxin Chen.
\newblock Diffusion normalizing flow.
\newblock \emph{Advances in Neural Information Processing Systems}, 34:\penalty0 16280--16291, 2021.

\end{thebibliography}
\bibliographystyle{plainnat}

\appendix

\setcounter{table}{0}
\setcounter{figure}{0}
\renewcommand{\thetable}{A.\arabic{table}}
\renewcommand{\thefigure}{A.\arabic{figure}}
\renewcommand{\thelemma}{A.\arabic{lemma}}

\section{Proofs}\label{sec:proof}

\subsection{Proofs in Section }
\begin{lemma}\label{lemma:JKO-by-Tk}
Suppose $p$ and $q$ are two densities on $\R^d$ in $\calP$,  the following two problems
\begin{equation}\label{eq:def-JKO-1-lemma}
\min_{\rho  \in \calP } 
L_\rho[\rho] = 
{\rm KL}( \rho || q) + \frac{1}{2 h} W_2^2( p, \rho),
\end{equation}
\begin{equation}\label{eq:def-JKO-2-lemma}
\min_{T: \R^d \to \R^d} 
L_T[T] =
{\rm KL}(  T_\# p || q) + \frac{1}{2h} \E_{x \sim p} \| x-T(x) \|^{2},
\end{equation}
have the same minimum and 

(a) If $T^*: \R^d \to \R^d$ is a minimizer of \eqref{eq:def-JKO-2-lemma}, 
then $\rho^* = (T^*)_\# p$ is a minimizer of \eqref{eq:def-JKO-1-lemma}.

(b) If $\rho^*$ is a minimizer of \eqref{eq:def-JKO-1-lemma},
then the optimal transport from $p$ to $\rho^*$  minimizes \eqref{eq:def-JKO-2-lemma}.
\end{lemma}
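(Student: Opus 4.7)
The plan is to show that the two problems have equal infima by constructing a value-preserving correspondence between feasible $\rho$'s and feasible $T$'s, and then to deduce the minimizer correspondence in (a) and (b) as a formal consequence. The key ingredient is the Monge formulation of $W_2^2$ quoted just before the lemma: for any measurable $T$ with $T_\# p = \rho$ one has $W_2^2(p,\rho) \le \E_{x \sim p}\|x - T(x)\|^2$, with equality achieved by the Brenier optimal transport map $T^\star_{p \to \rho}$ whenever $p$ is absolutely continuous with finite second moment. This regularity is natural in the JKO setting since $p = p_k$ arises as a smooth pushforward of $p_X$ through previous blocks.

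First I would establish $\inf_T L_T = \inf_\rho L_\rho$ in two steps. For $\inf_\rho L_\rho \le \inf_T L_T$: given any admissible $T$, set $\rho := T_\# p$; the KL terms in $L_\rho[\rho]$ and $L_T[T]$ coincide, while the Monge inequality gives $W_2^2(p,\rho) \le \E\|x-T(x)\|^2$, so $L_\rho[\rho] \le L_T[T]$. For the reverse inequality $\inf_T L_T \le \inf_\rho L_\rho$: given any $\rho \in \calP$, invoke Brenier to take $T = T^\star_{p \to \rho}$; then $T_\# p = \rho$ and $\E\|x-T(x)\|^2 = W_2^2(p,\rho)$, so $L_T[T] = L_\rho[\rho]$.

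With the two infima equal, both conclusions follow formally. For (a), given a minimizer $T^*$ of \eqref{eq:def-JKO-2-lemma}, put $\rho^* := T^*_\# p$; the first step gives $L_\rho[\rho^*] \le L_T[T^*] = \inf L_T = \inf L_\rho$, so $\rho^*$ minimizes \eqref{eq:def-JKO-1-lemma}. For (b), given a minimizer $\rho^*$ of \eqref{eq:def-JKO-1-lemma}, let $T^*$ be the Brenier map pushing $p$ to $\rho^*$; then $L_T[T^*] = L_\rho[\rho^*] = \inf L_\rho = \inf L_T$, so $T^*$ minimizes \eqref{eq:def-JKO-2-lemma}.

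I do not expect a serious obstacle: the argument is essentially bookkeeping around the Monge--Brenier identity. The only point worth flagging is the tacit regularity needed for the existence of a deterministic optimal transport map in part (b); if $p$ were merely a probability measure one would state the correspondence in terms of an optimal coupling rather than a map, but absolute continuity with finite second moment is preserved along the neural-ODE flow, so the stronger statement via a transport map applies in the intended use within the JKO scheme.
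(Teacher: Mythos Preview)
Your proposal is correct and follows essentially the same approach as the paper: both proofs hinge on the Monge/Brenier correspondence, using the inequality $W_2^2(p,T_\#p)\le \E\|x-T(x)\|^2$ in one direction and the optimal transport map to achieve equality in the other. Your organization is slightly cleaner in that you first establish $\inf_T L_T=\inf_\rho L_\rho$ and then read off (a) and (b), whereas the paper embeds the equality-of-infima argument inside the proof of (a) via a contradiction, but the substance is the same.
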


\begin{proof}[Proof of Lemma \ref{lemma:JKO-by-Tk}]

Let the minimum of \eqref{eq:def-JKO-2-lemma} be $L_T^*$, and that of \eqref{eq:def-JKO-1-lemma} be $L_\rho^*$. 

Proof of (a): 
Suppose $L_T$ achieves minimum at $T^*$,
then $T^*$ is the optimal transport from $p$ to $ \rho^* = (T^*)_\# p$ because otherwise $L_T$ can be further improved. 
By definition of $L_\rho$, we have
$L_T^* = L_T[T^*] = L_\rho[ \rho^* ] \ge L_\rho^* $. 
We claim that $L_T^* = L_\rho^*$.
Otherwise,  there is another $\rho'$ such that $L_\rho[\rho'] < L_T^*$. 
Let $T'$ be the optimal transport from $p$ to $\rho'$, and then $L_T[T'] = L_\rho[\rho'] < L_T^*$,
contradicting with that $L_T^*$ is the minimum of $L_T$. This also shows that $ L_\rho[ \rho^* ] = L_T^* = L_\rho^*$, that is, $\rho^*$ is a minimizer of $L_\rho$. 

Proof of (b): Suppose $L_\rho$  achieves minimum at $\rho^*$. Let $T^*$ be the OT from $p$  to $\rho^*$, then $\E_{x \sim p} |x-T^*(x)|^2 = W_2(p, \rho^*)^2$, and then $L_T[T^*] = L_\rho[ \rho^* ] = L_\rho^*$ which equals $L_T^*$ as proved in (a). This shows that $T^*$ is a minimizer of $L_T$. 
\end{proof}

\begin{proof}[Proof of Proposition \ref{prop:FT-layer-wise},]
Given $p_k $ being the density of $x(t)$ at $t = k h$, recall that $T$ is the solution map from $x(t)$ to $x(t+h)$.
We denote $\rho_t:= p_k$, and $\rho_{t+h} := T_\# p_k$. By definition,
\begin{equation}\label{eq:KL-pf-1}
{\rm KL}(  T_\# p_k || p_Z)
= \E_{x\sim \rho_{t+h}} (  \log {\rho}_{t+h}(x)  -   \log p_Z(x) ).
\end{equation}
Because $p_Z \propto e^{-V}$, $V(x)= {|x|^{2}}/{2}$, 
we have $ \log p_Z (x)= -V (x)+ c_1$ for some constant $c_1$. 
Thus
\begin{equation}\label{eq:KL-pf-2}
\E_{ x \sim \rho_{t+h}} \log p_Z(x) = 
\E_{x(t) \sim \rho_t} \log p_Z (x(t+h)) =  c_1 - \E_{x(t) \sim \rho_t}  V( x(t+h)).
\end{equation}
To compute the first term in \eqref{eq:KL-pf-1}, note that
\begin{equation}\label{eq:entropy-rhot+h}
\E_{x \sim \rho_{t+h}} \log   \rho_{t+h}(x) 
 =  \E_{x(t) \sim \rho_{t}} \log   \rho_{t+h}(x (t+h)),
\end{equation}
and by the expression {(called ``instantaneous change-of-variable formula'' in normalizing flow literature \citep{chen2018neural},
which we derive directly below)}
\begin{equation}\label{eq:ddtlogrho(x(t),t)}
{\frac{d}{dt}} \log \rho( x(t), t) = - \nabla \cdot {\bf f}( x(t), t), 
\end{equation}
we have that for each value of $x(t)$,
\begin{align*}
\log \rho_{t+h}(x (t+h))= 
\log\rho(x(t+h),t+h) & =\log \rho(x(t),t) - \int_{t}^{t+h}\nabla\cdot\mathbf{f}(x (s),s)ds.
\end{align*}
Inserting back to \eqref{eq:entropy-rhot+h}, we have
\[
\E_{x \sim \rho_{t+h}} \log   \rho_{t+h}(x) 
 =  \E_{x(t) \sim \rho_{t}}  \log \rho_t(x(t)) 
  -  \E_{x(t) \sim \rho_{t}}  \int_{t}^{t+h}\nabla\cdot\mathbf{f}(x (s),s)ds).
\]
The first term is determined by $\rho_t = p_k$, 
and thus is a constant $c_2$ independent from  ${\bf f}(x ,t)$ on $t \in [kh, (k+1) h]$.
Together with \eqref{eq:KL-pf-2}, we have shown that 
\[
\text{r.h.s. of \eqref{eq:KL-pf-1}}
=c_2 -  \E_{x(t) \sim \rho_{t}}  \int_{t}^{t+h}\nabla\cdot\mathbf{f}(x (s),s)ds)
 - c_1 + \E_{x(t) \sim \rho_t}  V( x(t+h)),
\]
which proves \eqref{eq:FT-layer-wise}.

{
Derivation of \eqref{eq:ddtlogrho(x(t),t)}:
by chain rule,
\begin{align*}
{\frac{d}{dt}} \log \rho( x(t), t)
& = \frac{ \nabla \rho(x(t),t) \cdot \dot{x}(t)
+ \partial_t \rho(x(t),t)}{\rho( x(t), t)} \\
& = 
\left. \frac{ \nabla \rho  \cdot {\bf f}
 - \nabla \cdot( \rho  {\bf f} ) } 
 {\rho } \right|_{(x(t),t)} 
 \quad \text{(by \eqref{eq:ode-Xt}  and \eqref{eq:Liouville})}\\
 & = - \nabla \cdot {\bf f} (x(t),t).
\end{align*}
}
\end{proof}

\section{Technical details of Section \ref{sec:algo}}
\label{app:detail-algo}

To train the proposed \JKO{} model by Algorithm \ref{block_training}, one needs to specify a sequence of $t_k$, where the $k$-th JKO block integrates from $t_{k-1}$ to $t_{k}$, starting from $t_0=0$. Denote by $h_k:=t_{k+1}-t_k$ the step-size for $k$-th step.

\subsection{Initial choice of $t_k$ in Algorithm \ref{block_training}}\label{append:tk}

We first initialize the choice by the following scheme
\begin{equation}\label{eq:def-hk-init}
    h_k = \min \{  \rho^{k} h_0, h_{\rm max} \}, 
    \quad k= 0, 1, \cdots  
\end{equation}
where the base stepsize $h_0$,
the multiplying factor $\rho \ge 1$,
and the maximum stepsize  $h_{\rm max} \ge h_0$ 
are three hyper-parameters to be specified by the user.
When $\rho = 1$, the sequence of $h_k$ in \eqref{eq:def-hk-init} corresponds to constant stepsize $h_k \equiv h_0$.

Applying Algorithm \ref{block_training} with the initial choice of $t_k$ gives $L$ trained residual blocks of the neural ODE model, where $L$ is to be determined by the stopping criterion in Section \ref{sec:layer_wise}.  
The model is further improved by the two additional techniques in Section \ref{sec:traj_prob_comp}, which will adaptively specify the stepsize $h_k$ and we explain in detail below.

\revold{
\subsection{Hutchinson trace 
via finite difference}\label{app:finite-diff}

We propose a finite-difference estimator of $\nabla \cdot {\bf f}(x,t) = \text{Tr}(J_{{\bf f}}(x))$, where $J_{{\bf f}}(x))$ is the Jacobian of $\bf f$ at $x$. This is based on the Hutchinson trace estimator \citep{Hutchinson1989ASE}, which states that $\text{Tr}(J_{{\bf f}}(x)) = \E_{p(\epsilon)} \left[ \epsilon^T J_{{\bf f}}(x) \epsilon\right]$. Here, $p(\epsilon)$ is a distribution in $\R^d$ satisfying $\E[\epsilon]=0$ and $\text{Cov}(\epsilon)=I$ (e.g., a standard Gaussian). Existing CNF works estimate the expectation by sampling $\epsilon \sim p(\epsilon)$ and computing the vector-Jacobian product $J_{{\bf f}}(x) \epsilon$ using reverse-mode automatic differentiation. The product $J_{{\bf f}}(x) \epsilon$ can be computed for approximately the same cost as evaluating $\bf f$ \citep{FFJORD}.

Now, given a fixed $\epsilon$, 
we have $J_{{\bf f}}(x) \epsilon = \lim_{\sigma \rightarrow 0} \frac{{\bf f}(x+\sigma\epsilon)-{\bf f}(x)}{\sigma}$,
which is the directional derivative of ${\bf f}$ along the direction $\epsilon$. Thus, given a fixed small $\sigma > 0$, we propose the following finite-difference estimator of $\nabla \cdot {\bf f}(x,t)$:
\begin{equation}\label{eq:finite_diff}
    \nabla \cdot {\bf f}(x,t) \approx \E_{p(\epsilon)} \left[ \epsilon^T \frac{{\bf f}(x+\sigma\epsilon)-{\bf f}(x)}{\sigma}\right].
\end{equation}
The finite-difference approximation of $J_{{\bf f}}(x) \epsilon$ requires only one additional function evaluation of ${\bf f}$ at $x+\sigma \epsilon$ and avoids the computation of automatic differentiation. 
We empirically found that training with \eqref{eq:finite_diff} on high-dimensional examples (e.g., $d>100$ as in image examples) can be approximately 1.5$\sim$2 times faster than existing CNF approaches relying on reverse-mode automatic differentiation.
In all experiments, we let $\sigma=\sigma_0/\sqrt{d}$ with $\sigma_0=0.02$.
}

\subsection{
Trajectory improvement illustrated in vector space}\label{app:traj_comp_detail}

Section \ref{sec:traj_prob_comp} introduces two techniques to improve the computation of a gradient descent trajectory represented by discrete points. For illustrative purposes, we introduce the two techniques when the gradient system is in a vector space equipped with Euclidean metric. 
The \JKO{} training applies the same technique in the gradient system
equipped with the Wasserstein-2 metric.

Let the space be $\R^d$, and $F:\R^d \to \R $ be a differential landscape. Given a sequence of positive stepsize $h_k$, 
one can compute a sequence of representative points $x_k$ which discretizes the gradient descent trajectory of minimizing $F$ (towards a local minimum $x^*$). Specifically, we have
\begin{equation}\label{eq:vspace_opt}
    \xt{k+1} = \arg\min_x F(x)+\frac{1}{2h_k} \|x-\xt{k}\|_2^2,
\end{equation}
starting from some fixed $x_0$.

\subsubsection{Trajectory reparameterization}\label{sec:reparam_details}

Starting from an initial sequence of step-size $\{ h_k\}_{k=1}^L$ as in Section \ref{append:tk},
the reparameterization technique implements an iterative scheme to update the sequence of $h_k$ adaptively.
We call the $j$-th iteration `Iter-$j$', and denote the sequence by $\{ h_k^{(j)}\}_{k=1}^L$. 
The corresponding solution points $x_k$ by solving \eqref{eq:vspace_opt} with $ h_k^{(j)}$ are denoted as $x_k^{(j)}$.
In addition to the constant $h_{\max}$, we need another algorithmic parameter $\eta \in (0,1)$, and we set $\eta = 0.3$ for the vector-space example in computation.

We always have $\xt{0}^{(j)} = x_0$  for all `Iter-$j$'.
In the $j$-th iteration of the parametrization, we compute the following
\begin{enumerate}
    \item Compute the {\it arclength} as 
    \begin{equation}\label{eq:def-Skj-arclength}
        S_k^{(j)}:=\|\xt{k-1}^{(j)}-\xt{k}^{(j)}\|_2,
        \quad k = 1, \cdots, L.
    \end{equation}

   \item  
    Compute the average arclength 
    \[\bar{S}:=\sum_{k=1}^{L} S_k^{(j)}/L.
    \]
    
    \item Update the stepsize for $k=1,\ldots, L$ as
    \[
    h_k^{(j+1)}:={\min\{h_k^{(j)}+\eta (\bar{S}  h_k^{(j)}/S_k^{(j)}-h_k^{(j)}), h_{\max}\}} 
    \]
    
    \item 
    Solve \eqref{eq:vspace_opt} using $h_k^{(j+1)}$ to obtain $x_k^{(j+1)}$.
\end{enumerate}

We terminate the iteration when the arclength $\{S_k^{(j)}\}$ are approximately equal.
An illustration is given as Iter-12 in the upper panel of Figure \ref{fig_reparam_muller}.
After the reparameterization iterations, we solve an additional $x_{L+1}$ by minimizing $F(x)$ starting from $x_L$. 
This corresponds to optimizing the ``free-block'' in Algorithm \ref{block_training}.

\subsubsection{ Progressive refinement}

The scheme can be computed using a positive integer refinement factor. For simplicity, we use factor 2 throughout this work. 

Given a sequence of $\{ h_k \}_{k=1}^L$ and the representative points $\{x_k\}_{k=1}^{L}$,
the refinement scheme computes a refined trajectory having $k=1,\cdots, 2L$:

\begin{enumerate}
    \item 
    Compute the refined stepsizes as
    \[
    \tilde{h}_{2k-1} = \tilde{h}_{2k} = h_k/2,
    \quad k=1,\dots, L
    \]
    \item 
    Compute the representative points $\tilde{x}_k$ by solving \eqref{eq:vspace_opt} using $\tilde{h}_{k}$,
    possibly warm-starting by $\tilde{x}_{2k} = {x}_{k}$
    and $\tilde{x}_{2k-1} = ({x}_{k}+x_{k-1})/2$.
\end{enumerate} 

We then apply the trajectory reparameterization iterations as in Section \ref{sec:reparam_details} to the refined trajectory till convergence,
and the free-block ending point is also recomputed.
An illustration is given as r-Iter-10 in the upper panel of Figure \ref{fig_reparam_muller}.

\subsection{Trajectory improvement in probability space}

We apply the two techniques to solve the JKO step \eqref{eq:loss-block-k},
which is the counterpart of \eqref{eq:vspace_opt}
as a gradient descent scheme with proximal steps.
The representative points $x_k$ are replaced with transported distributions $p_k$ which form a sequence of points in $\calP$,
and the optimization is over the parametrization of $p_k$ induced by the neural network flow mapping consisting of the first $k$ residual blocks. 

It remains to define the arclength in \eqref{eq:def-Skj-arclength} to implement the two techniques. 
Because we equip $\calP$ with the Wasserstein-2 metric, we compute (omitting the iteration index $j$ in the notation)
\begin{equation}\label{eq:def-Sk-jko}
    S_k 
= W_2(p_{k-1}, p_k)
= (\E_{x \sim p_{k-1}} \| x-T_k(x)\|^2 )^{1/2},
\end{equation}
where the transport mapping $T_k(x) =x+ \int_{t_{k-1}}^{t_k} f_{\theta_k}(x(s),s)ds$ and can be computed from the $k$-th block. In practice, the expectation $\E_{x \sim p_{k-1}}$ in \eqref{eq:def-Sk-jko} is computed by a finite-sample average on the training set. 

At last, the optimal warm-start of $p_k$ in the refinement is implemented by inheriting the parameters $\theta_k$ of the trained blocks.

\begin{figure}[!b]
    \centering
    \includegraphics[width=\textwidth]{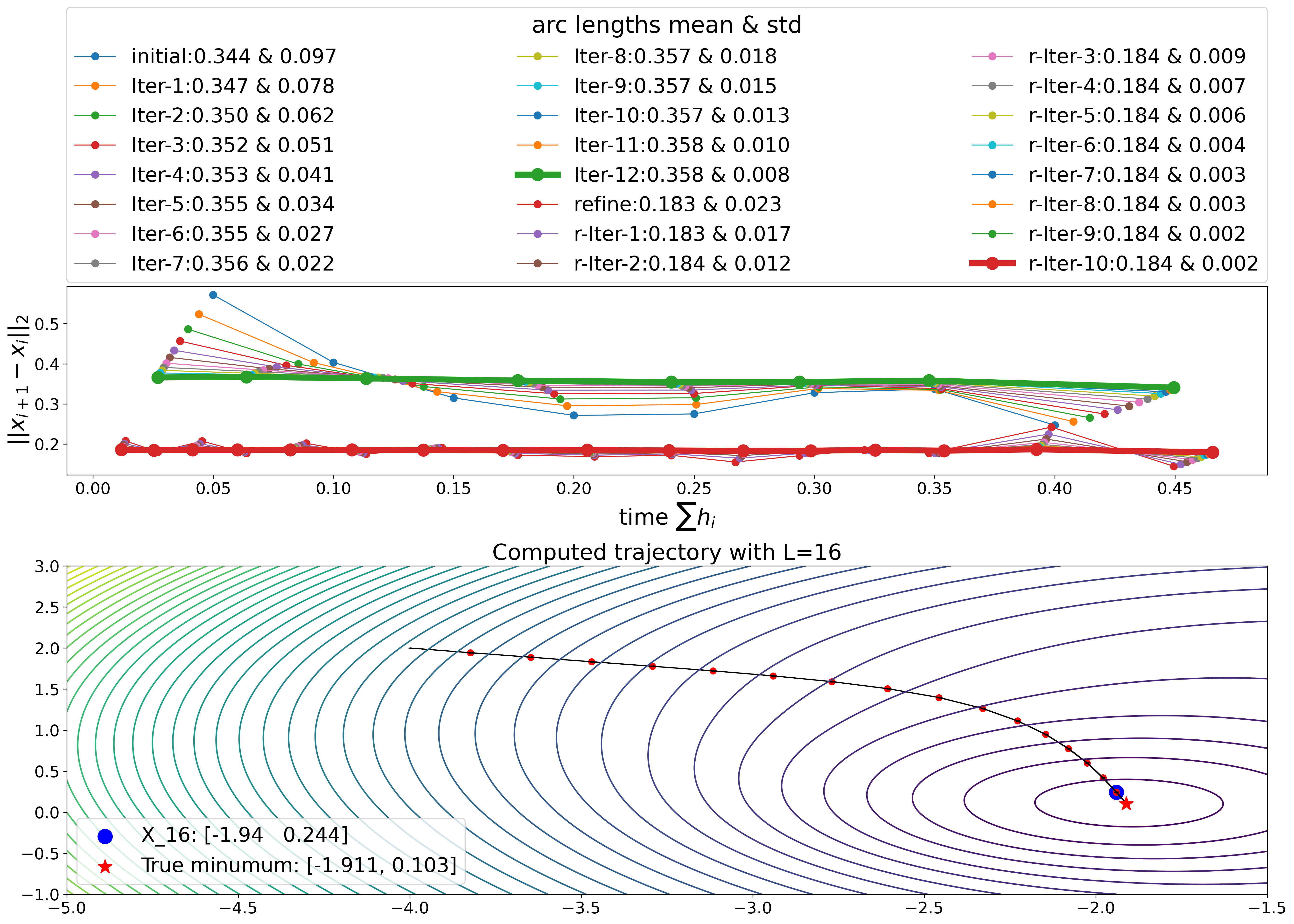}
    \caption{The upper panel shows the arc lengths and the mean and standard deviation of arc lengths over 12 reparameterization iterations, one refinement, and an additional 10 reparameterization iterations. The lower panel visualizes the trajectory consisting of 17 solution points at ``r-Iter-10'', where the free point as the 17-th point computes the approximate minimum by $x_{17}=x_{L+1}=[-1.911, 0.105]$.}
    \label{fig_reparam_muller}
\end{figure}

\section{Experimental details}

\subsection{Quantitative evaluation metrics}\label{sec:metrics}

Besides visual comparison, we adopt two quantitative metrics to evaluate the performance of generative models,
the negative log-likelihood (NLL) metric \cite{FFJORD}
and  the maximum mean discrepancy (MMD) \citep{Gretton2012AKT} metric.

\subsubsection{NLL metric} \label{sec:NLL}

Our computation of the NLL metric follows the standard procedure for neural ODE normalizing flow models \cite{chen2018neural,FFJORD},
where the evolution of density $\rho ( x(t) )$ can be computed via integrating $\nabla \cdot {\bf f}(x(t),t)$ over time due to the instantaneous change-of-variable formula \eqref{eq:ddtlogrho(x(t),t)}. 

Specifically, our model flows from $t_0 = 0$ to $t_{L+1} = T$, where $x(0) \sim p_0$ the data distribution,
and we train the flow model to make $x(T)$ follow a normal density $q$. The model density at data sample $x$ is expressed as $\rho(x,0) =  (T_{\theta}^{-1} )_\# q (x)$ where $T_{\theta}^{-1}$ is the inverse model  flow mapping from $x(T)$ to $x(0)$. Thus the model log-likelihood can be expressed as 
\[
\log \rho (x(0), 0)
= \log q( x(T), T) + \int_0^T \nabla \cdot {\bf f}(x(s),s) ds.
\]
In practice, our trained flow model has $L$ blocks where each block $f_{\theta_k}$ represents ${\bf f}(x,t)$ on $[t_{k-1},t_{k})$
and is parametrized by $\theta_k$.
The log-likelihood at test sample $x$ is then computed by
\begin{equation}\label{eq:NLL_simplified}
{\rm LL}(x)
= 
    -\frac{1}{2}(\|x(t_{L+1})\|^2_2+d\log(2\pi)) + \sum_{k=1}^{L+1} \int_{t_{k-1}}^{t_k} \nabla \cdot f_{\theta_k}(x(s),s)ds,
\end{equation}
where 
\[
x(t_k)=x(t_{k-1})+\int_{t_{k-1}}^{t_k} f_{\theta_k}(x(s),s)ds
\]
starting from $x(0) = x$. 
Both the integration of $f_{\theta_k}$ and $\nabla \cdot f_{\theta_k}$ are computed using the numerical scheme of neural ODE. 
We report NLL in the natural unit of information (i.e., $\log$ with base $e$, known as ``nats'') in all our experiments.

\subsubsection{MMD metrics}\label{sec:MMD_metrics}

Note that the normalizing flow models use NLL (on training 
set) as the training objective. 
In contrast, the MMD metric is an impartial evaluation metric as it is not used to train \JKO{} or any competing methods.
Given two set of data samples $\boldsymbol{X}:=\{x_i\}_{i=1}^N$ and $\boldsymbol{\tilde{X}}:=\{\tilde{x}_j\}_{j=1}^M$ and a kernel function $k(x,\tilde{x})$, the (squared) kernel MMD \citep{Gretton2012AKT} is defined as 
\begin{equation}\label{MMD}
    \text{MMD}(\boldsymbol{X},\boldsymbol{\tilde{X}}):=\frac{1}{N^2}\sum_{i=1}^N \sum_{j=1}^N k(x_i,x_j) + \frac{1}{M^2}\sum_{i=1}^M \sum_{j=1}^M k(\tilde{x}_i,\tilde{x}_j)
    - \frac{2}{NM}\sum_{i=1}^N \sum_{j=1}^M k(x_i,\tilde{x}_j), 
\end{equation}
When a generative model is trained, 
we generate $M$ i.i.d. data samples by the model to construct the set $\boldsymbol{\tilde{X}}$,
and we form the set $\boldsymbol{X}$ using $N$ true data samples (from the test set). 
MMD metrics with other choices of kernels are possible \cite{gretton2012optimal,sutherland2017generative,schrab2023MMD}. 
In all experiments here, we use the Gaussian kernel 
$k(x,\tilde{x}) =\exp \{-\|x-\tilde{x}\|^2/2h^2\}$ to stay consistent with reported baselines from \citep{OT-Flow}, where $h > 0$ is the bandwidth parameter.
We use three ways of setting the bandwidth parameter $h$:
\begin{itemize}
    \item Constant bandwidth: $h = h_c =1$.
    The resulting MMD is denoted as `MMD-1'. 
    
    \item Median bandwidth \citep{Gretton2012AKT}:
    let $h = h_m$ be the median of $\|x_i-x_j\|$ for all distinct $i$ and $j$.
    The median distance is computed from the dataset $X$. The resulting MMD is denoted as `MMD-m'.
    
    \item Custom bandwidth: on certain datasets when we can use prior knowledge to decide on the bandwidth, we will custom the choice of $h$ (typically smaller than the median distance, due to that theoretically smaller bandwidth may lead to a more powerful MMD test to distinguish the difference in the underlying distributions) while ensuring that we use large enough $M$ and $N$ to compute the MMD metric. 
    We call the metric `MMD-c'. 
\end{itemize}

\revold{On all datasets, 
we use at most $N=10K$ test samples as $\bf{X}$,
and for each trained model, 
we generate $M=10K$ test samples to form the dataset $\bf{\tilde{X}}$ to compute the MMD value defined as in \eqref{MMD}.}
Note that the MMD metric as a measure of distance between two distributions is significant when above a test threshold $\tau$, which is defined as the upper ($1-\alpha$)-quantile of the distribution of the MMD statistic under the null hypothesis
(i.e., when dataset $\bf{X}$ and $\bf{\tilde{X}}$ observe the same distribution). 
The scalar $\alpha$ is the controlled Type-I error (known as the test level), which is set to be 0.05.
\revold{
To obtain the test threshold $\tau$ for the MMD, we adopt the bootstrap procedure \cite{arcones1992bootstrap,Gretton2012AKT}
and compute $\tau$ as the empirical ($1-\alpha$)-quantile of the simulated null distribution of the MMD from the pool of samples formed by the union of $\bf{X}$ and $\bf{\tilde{X}}$,
where the set $\bf{\tilde{X}}$ is generated by the trained \JKO{} model. 
We use 1000 times of bootstrap in all experiments.}
In our usage of evaluating generative models, the threshold $\tau$ can be viewed as a baseline of the MMD metric, that is, when the computed MMD values are above $\tau$, then the smaller the MMD value the better the generative performance; when the computed MMD value is below $\tau$, it means that with respect to the current MMD metric, the trained model generates a data distribution that is as good as the true distribution.

\subsection{Detail setup and additional results}\label{sec:additional_results}

All experiments are conducted using PyTorch \citep{NEURIPS2019_9015} and PyTorch Geometric \citep{Fey/Lenssen/2019}. 
The optimizer is Adam \citep{Kingma2015AdamAM} with learning rates to be specified in each example. We use the neural-ODE
integrator \citep[adjoint method]{FFJORD} on all examples.

\begin{figure}[!t]
    \centering
    \begin{minipage}{0.49\textwidth}
    \includegraphics[width=\textwidth]{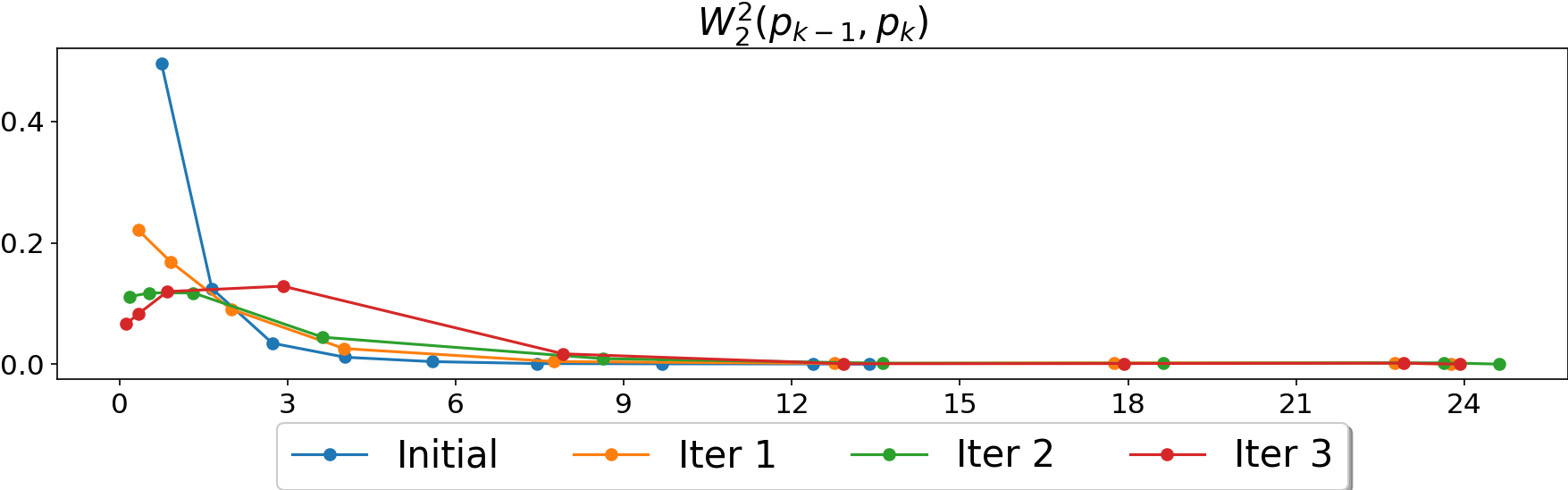}
    \subcaption{Per-block $W_2^2$
    over reparameterization iterations.}
    \end{minipage}
    \hfill
    \begin{minipage}{0.49\textwidth}
    \includegraphics[width=\textwidth]{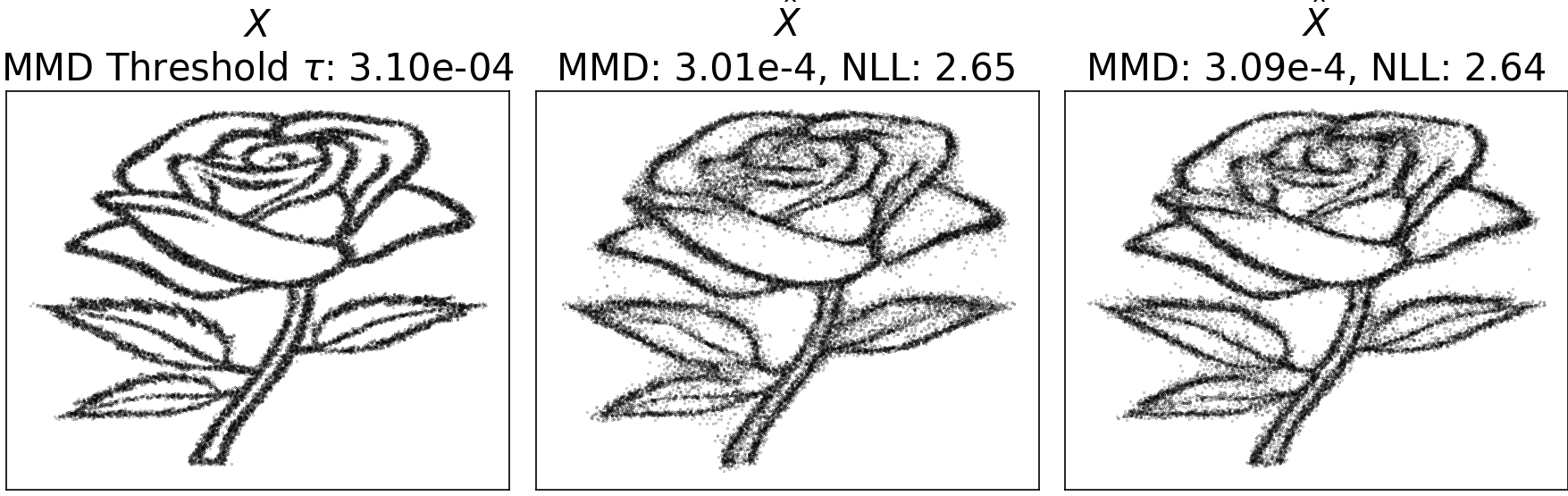}
    \subcaption{Results at initial training (middle) and Iter 3 (right). MMD and NLL values are shown in the title.}
    \end{minipage}
    \caption{
    Same plots as in Figure \ref{fig_MINIBOONE_sub_reparam} for Rose data.
    We observe improved generative quality on the leaves of the rose after reparameterization iterations}
    \label{fig_rose_sub_reparam}
\end{figure}
\begin{figure}[!t]
    \centering
    \begin{minipage}{0.49\textwidth}
    \includegraphics[width=\textwidth]{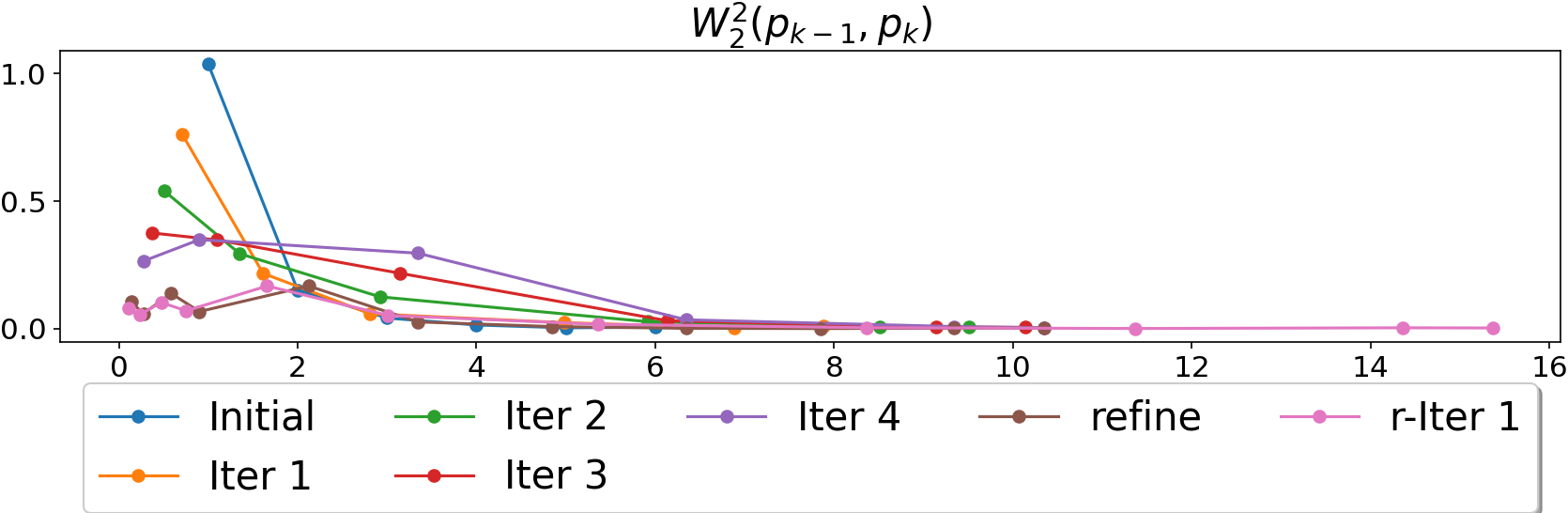}
    \subcaption{
    Per-block $W_2^2$
    over reparameterization iterations and refinement 
    (`r-Iter 1' means one reparameterization iteration after refinement).}
    \end{minipage}
    \hfill
    \begin{minipage}{0.49\textwidth}
    \includegraphics[width=\textwidth]{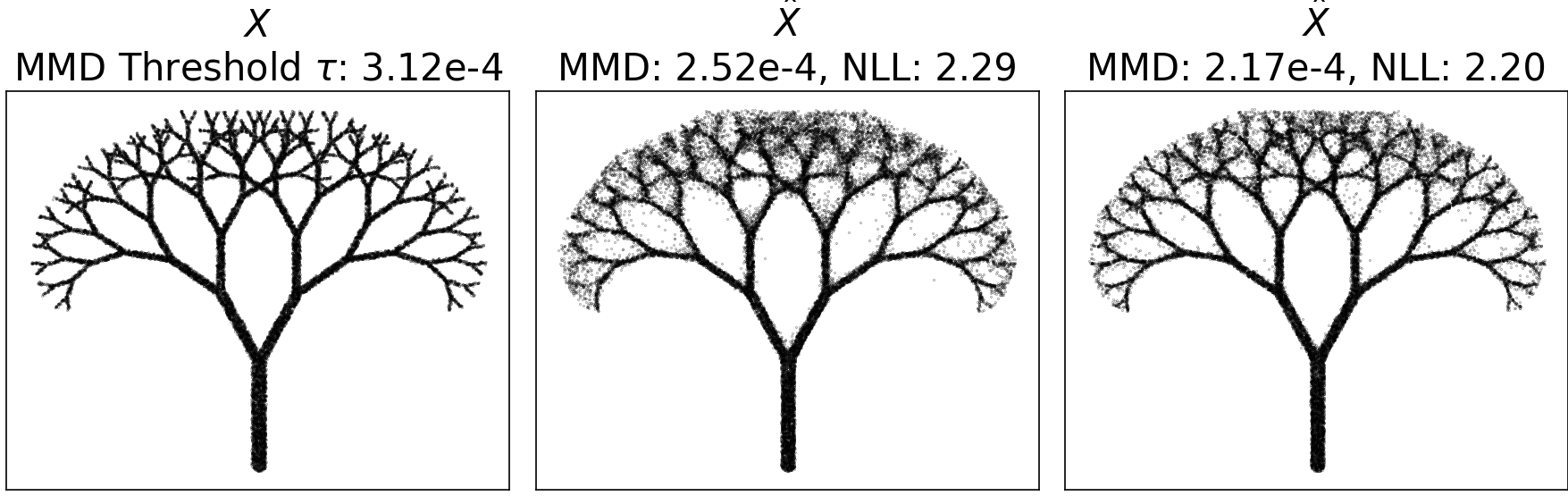}
    \subcaption{Results at Iter 4  (middle) and r-Iter 1 (right). 
    MMD and NLL values are shown in the title.}
    \end{minipage}
    \caption{
    Same plots as in Figure \ref{fig_MINIBOONE_sub_reparam} for Fractal tree data.
    We observe improved generative quality on the edges of the tree after refinement.
    }
    \label{fig_tree_sub_reparam}
\end{figure}

\subsubsection{Two-dimensional toy data}\label{app:2d_toy}

\noindent
\textit{Training and test data: }
    We generate \revold{8K} test samples for each dataset,
    and for the training split,
    \begin{itemize}
        \item For Rose (Figure \ref{fig_rose_full_data} and \ref{fig_rose_sub_reparam}) and Checkerboard and Olympic rings (Figure \ref{fig_rose_full_data}), we re-sample 10K training samples every epoch, for a total of 100 epochs per block. The batch size is 500.
        \item For Fractal tree (Figure \ref{fig_rose_full_data} and \ref{fig_tree_sub_reparam}), we use a fixed training data of 200K samples, for a total of 50 epochs per block. The batch size is 2000.
    \end{itemize}

\noindent
\textit{Choice of $h_k$:} The initial schedule is by setting $h_0=0.75, \rho=1.2$. We set $h_{\max}=5$ for rose, checkerboard, and Olympic rings, and $h_{\max}=3$ for fractal tree. For reparametrization and refinement, we use $\eta=0.5$ in the reparameterization iterations for all examples.
\begin{itemize}
    \item For Rose, 3 reparameterization moving iteration, no refinement. 
    \item For Checkerboard and Olympic rings, 4 reparameterization moving iteration, no refinement. 
    \item For Fractal tree, 4 reparameterization moving iteration, one refinement, and an additional reparameterization moving iteration.
\end{itemize}
\noindent
\textit{MMD metric:} We use custom kernel bandwidth $h = 0.1 h_m$ where $h_m$ is the median distance of dataset $\bf{X}$ with $N=\revold{8K}$ (from true data distribution).
From trained generative model, we generate $M=\revold{10K}$ test samples as $\bf{\tilde{X}}$.

\vspace{5pt}
\noindent
\textit{Network and activation:} Fully-connected residual blocks with two hidden layers. We use the softplus activation ($\beta=20$) with 128 hidden dimensions. Before refinement, we train 9 residual blocks for rose, checkerboard, and Olympic rings, and train 6 residual blocks for fractal trees. The learning rate is 5e-3.

\vspace{5pt}
\noindent 
\textit{Additional results:} Figure \ref{fig_rose_sub_reparam} and \ref{fig_tree_sub_reparam} illustrates the benefits of reparameterization and refinement. Specifically, the details of the generated rose around portions of leaves and of the generated tree around the smallest edges are clearer using these techniques. In terms of quantitative metrics, the NLL is also smaller, even though the MMD metric indicates there is no statistical difference between the ground truth and generated samples.

\begin{table}[!t]
\begin{center}
\bgroup
\caption{\label{tab_high_dim_appendix}
\revold{
Complete results on real tabular datasets to augment Table \ref{tab_high_dim} under the fixed-budget setting. 
$L$ denotes to the number of residual blocks used in each dataset; for comparison, the free block in \JKO{} is not used. 
For fair comparison across models, the number of batches indicates how many batches pass through all residual blocks. }
}
\def\arraystretch{1}%
\resizebox{0.95\textwidth}{!}{
\begin{tabular}{lllccccrrc}
    \hline
    \multirow{2}{*}{Data Set}&\multirow{2}{*}{Model}&\multirow{2}{*}{\# Param} & \multicolumn{4}{c}{Training} & \multicolumn{3}{c}{Testing} \\
    \cmidrule(lr){4-7} 
    \cmidrule(lr){8-10}
    & & & Time (h) & \# Batches & Time/Batches (s) & Batch size &  MMD-m & MMD-1  & NLL \\
    \hline
    \multirow{8}{*}{\makecell[l]{\textbf{{POWER}} \\ $d=6$}} 
    
    &  &  &     &   &     &     & 
    \makecell[r]{\textbf{$\tau$: 1.73e-4}} &     \makecell[r]{\textbf{$\tau$: 2.90e-4}}             \\
  &   \JKO{} &  76K, L=4 & 0.07 & 0.76K & 3.51e-1 & 10000 & 9.86e-5 & 2.40e-4 & -0.12 \\
  &  OT-Flow &  76K & 0.36 & 7.58K & 1.71e-1 & 10000 & 7.58e-4 & 5.35e-4 &  0.32 \\
  &   FFJORD &  76K, L=4  & 0.67 & 7.58K & 3.18e-1 & 10000 & 9.89e-4 & 1.16e-3 &  0.63 \\
  &     IGNN & 304K, L=16 & 0.29 & 7.58K & 1.38e-1 & 10000 & 1.93e-3 & 1.59e-3 &  0.95 \\
  &  IResNet & 304K, L=16 & 0.41 & 7.58K & 1.95e-1 & 10000 & 3.92e-3 & 2.43e-2 &  3.37 \\
  & ScoreSDE &  76K & 0.06 & 7.58K & 2.85e-2 & 10000 & 9.12e-4 & 6.08e-3 &  3.41 \\
  & ScoreSDE &  76K & 0.60 & 75.80K & 2.85e-2 & 10000 & 7.12e-4 & 5.04e-3 &  3.33 \\
  \cmidrule(lr){2-10} 
  &   \JKO{} &  57K, L=3 & 0.05 & 0.76K & 2.63e-1 & 10000 & 3.86e-4 & 7.20e-4 & -0.06 \\
    \hline
    \multirow{8}{*}{\makecell[l]{\textbf{GAS} \\ $d=8$}}  &  &  &     &   &     &     & 
    \makecell[r]{\textbf{$\tau$: 1.85e-4}} &     \makecell[r]{\textbf{$\tau$: 2.73e-4}}       \\
  &   \JKO{} &  76K, L=4 & 0.07 & 0.76K & 3.32e-1 & 5000 & 1.52e-4 & 5.00e-4 & -7.65 \\
  &  OT-Flow &  76K & 0.23 & 7.60K & 1.09e-1 & 5000 & 1.99e-4 & 5.16e-4 & -6.04 \\
  &   FFJORD &  76K, L=4 & 0.65 & 7.60K & 3.08e-1 & 5000 & 1.87e-3 & 3.28e-3 & -2.65 \\
  &     IGNN & 304K, L=16 & 0.34 & 7.60K & 1.61e-1 & 5000 & 6.74e-3 & 1.43e-2 & -1.65 \\
  &  IResNet & 304K, L=16 & 0.46 & 7.60K & 2.18e-1 & 5000 & 3.20e-3 & 2.73e-2 & -1.17 \\
  & ScoreSDE &  76K & 0.03 & 7.60K & 1.42e-2 & 5000 & 1.05e-3 & 8.36e-4 & -3.69 \\
  & ScoreSDE &  76K & 0.30 & 76.00K & 1.42e-2 & 5000 & 2.23e-4 & 3.38e-4 & -5.58 \\
  \cmidrule(lr){2-10} 
  &   \JKO{} &  95K, L=5 & 0.09 & 0.76K & 4.15e-1 & 5000 & 1.51e-4 & 3.77e-4 & -7.80 \\
    \hline
    \multirow{8}{*}{\makecell[l]{\textbf{MINIBOONE} \\ $d=43$}} &  &  &     &   &     &     & 
    \makecell[r]{\textbf{$\tau$: 2.46e-4}} &     \makecell[r]{\textbf{$\tau$: 3.75e-4}}       \\
  &   \JKO{} & 112K, L=4 & 0.03 & 0.34K & 3.61e-1 & 2000 & 9.66e-4 & 3.79e-4 & 12.55 \\
  &  OT-Flow & 112K & 0.21 & 3.39K & 2.23e-1 & 2000 & 6.58e-4 & 3.79e-4 & 11.44 \\
  &   FFJORD & 112K, L=4 & 0.28 & 3.39K & 2.97e-1 & 2000 & 3.51e-3 & 4.12e-4 & 23.77 \\
  &     IGNN & 448K, L=16 & 0.63 & 3.39K & 6.69e-1 & 2000 & 1.21e-2 & 4.01e-4 & 26.45 \\
  &  IResNet & 448K, L=16 & 0.71 & 3.39K & 7.54e-1 & 2000 & 2.13e-3 & 4.16e-4 & 22.36 \\
  & ScoreSDE & 112K & 0.01 & 3.39K & 6.37e-3 & 2000 & 5.86e-1 & 4.33e-4 & 27.38 \\
  & ScoreSDE & 112K & 0.10 & 33.90K & 6.37e-3 & 2000 & 4.17e-3 & 3.87e-4 & 20.70 \\
    \hline
    \multirow{9}{*}{\makecell[l]{\textbf{BSDS300}\\ $d=63$}} &  &  &     &   &     &     & 
    \makecell[r]{\textbf{$\tau$: 1.38e-4}} &     \makecell[r]{\textbf{$\tau$: 1.01e-4}}            \\
  &   \JKO{} & 396K, L=4 & 0.05 &  1.03K & 1.85e-1 & 1000 & 2.24e-4 & 1.91e-4 & -153.82 \\
  &  OT-Flow & 396K & 0.62 & 10.29K & 2.17e-1 & 1000 & 5.43e-1 & 6.49e-1 & -104.62 \\
  &   FFJORD & 396K, L=4 & 0.54 & 10.29K & 1.89e-1 & 1000 & 5.60e-1 & 6.76e-1 &  -37.80 \\
  &     IGNN & 990K, L=10 & 1.71 & 10.29K & 5.98e-1 & 1000 & 5.64e-1 & 6.86e-1 &  -37.68 \\
  &  IResNet & 990K, L=10 & 2.05 & 10.29K & 7.17e-1 & 1000 & 5.50e-1 & 5.50e-1 &  -33.11 \\
  & ScoreSDE & 396K & 0.01 & 10.29K & 3.50e-3 & 1000 & 5.61e-1 & 6.60e-1 &   -7.55 \\
  & ScoreSDE & 396K & 0.10 & 102.90K & 3.50e-3 & 1000 & 5.61e-1 & 6.62e-1 &   -7.31 \\
 \cmidrule(lr){2-10} 
  &   \JKO{} & 396K, L=4 & 0.08 &  1.03K & 2.76e-1 & 5000 & 1.41e-4 & 8.83e-5 & -156.68 \\
    \hline
\end{tabular}
}
\vspace{-0.015in}
\egroup
\end{center}
\end{table}

\subsubsection{Tabular datasets}\label{sec:real_data}

\noindent
\textit{Training and test data: }The four high-dimensional real datasets (POWER, GAS, MINIBOONE, BSDS300) come from the University of California Irvine (UCI) machine learning data repository, and we follow the pre-processing procedures of \citep{papamakarios2017masked}. Regarding data sizes,
\begin{itemize}
    \item POWER: 1.85M training sample and 205K test sample. 
    \item GAS: 1M training sample, 100K test sample.
    \item MINIBOONE: 32K training sample, 3.6K test sample.
    \item BSDS300: 1.05M training sample, 250K test sample.
\end{itemize}

\noindent
\textit{Choice of $h_k$:}  The initial schedule is by setting $h_0=1, \rho=1, h_{\max}=3$. For reparametrization and refinement, we use $\eta=0.5$ in the reparameterization iterations for all datasets.

\vspace{5pt}
\noindent
\textit{MMD metric:}
\revold{When the test data from the MINIBOONE dataset is less than 10K in size, we use all the test sets as $\bf{X}$ and $N$ is the size of the test data. When the test data from all three other datasets has more than 10K samples, we randomly select $N=10K$ samples from the entire test data,}
and the same random test subset is used to evaluate all models. 
We use the median distance kernel bandwidth and generate $M=10K$ test samples from each model to form $\bf{\tilde{X}}$.

\vspace{5pt}
\noindent
\textit{Network and activation:} We use the softplus activation for all networks. Regarding the design of residual blocks, we use fully-connected residual blocks with two hidden layers. On POWER, GAS, and MINIBOONE, we use 128 hidden nodes, and on BSDS300, we use 256 hidden nodes. The number of residual blocks for four tabular datasets is described in Tables \ref{tab_high_dim_appendix} and \ref{tab_high_dim_appendix_2}. Regarding learning rate, it is 1e-3 on POWER, 2e-3 on GAS, 5e-3 on MINIBOONE, and 1.75e-3 on BSDS300.

\begin{table}[!t]
\begin{center}
\bgroup
\caption{
\revold{
The OT-Flow and FFJORD baselines marked with  $^*$ are from the original papers \citep{OT-Flow} and \citep{FFJORD}. 
The results of  \JKO{} are obtained after applying the reparameterization technique.} 
}
\label{tab_high_dim_appendix_2}
\def\arraystretch{1}
\resizebox{0.55\textwidth}{!}{
\begin{tabular}{lllccc}
    \hline
    Data Set & Model & \# Param & \multicolumn{2}{c}{Training} & Testing \\
    & & & \# Batches & Batch size &  NLL\\
    \hline
    \multirow{3}{*}{\makecell[l]{\textbf{{POWER}} \\ $d=6$}} & \JKO{} & 95K, L=5  & 6.08K  & 10000  & -0.40 \\
  & OT-Flow$^*$
  &  18K  & 22K& 10000 & -0.30\\
  & FFJORD$^*$
  &  43K, L=5  & -  & 10000  & -0.46 \\
    \hline
    \multirow{3}{*}{\makecell[l]{\textbf{GAS} \\ $d=8$}} & \JKO{} & 114K, L=6 & 6.08K & 5000 & -9.43 \\
  & OT-Flow$^*$
  &  127K & 52K & 2000  & -9.20\\
  & FFJORD$^*$
  &  279K, L=5 & - & 1000 & -8.59\\
    \hline
    \multirow{3}{*}{\makecell[l]{\textbf{MINIBOONE}\\ $d=43$}} & \JKO{} & 112K, L=4 & 2.72K & 2000 & 10.55 \\
  & OT-Flow$^*$
  &  78K & 7K &  2000 & 10.55\\
  & FFJORD$^*$ 
  &  821K, L=1 & - & 1000 & 10.43 \\
    \hline
    \multirow{3}{*}{\makecell[l]{\textbf{BSDS300}\\ $d=63$}} & \JKO{} & 495K, L=5 &  2.06K & 5000 & -157.75 \\
  & OT-Flow$^*$
  &  297K &  37K & 300 & -154.20\\
  & FFJORD$^*$
  &  6.7M, L=2 & - & 10000 & -157.40\\
    \hline
\end{tabular}
}
\vspace{-0.015in}
\egroup
\end{center}
\end{table}

\begin{table}[!b]
\begin{center}
\caption{
NLL per noise scheduler and $\bar{\beta}_{\max}$ combination of ScoreSDE on MINIBOONE. The three settings ``linear, constant, quadratic'' follow the DDPM suggestion \citep{ho2020denoising}. The mean and standard deviation are computed over three replicas of the trained model.
}
\label{tab_noise_schedule}
\def\arraystretch{1}
\resizebox{0.75\textwidth}{!}{
\begin{tabular}{lccccc}
    \hline
    Noise scheduler \textbackslash \ $\bar{\beta}_{\max}$ & 1 & 5 & 10 & 15 & 20 \\
    \hline
    Linear     & 24.51 (0.24)  & 18.73 (0.64)  & 20.28 (0.38)  & 26.76 (1.77)  & 26.83 (1.23)  \\
    Constant   & 25.47 (0.25)  & 30.37 (0.84)  & 37.73 (0.98)  & 40.47 (1.27)  & 45.32 (0.40)  \\
    Quadratic  & 27.19 (0.14)  & 17.45 (0.17)  & 18.48 (0.38)  & 18.90 (0.48)  & 21.35 (0.60)  \\
    \hline
\end{tabular}
}
\end{center}
\end{table}

\vspace{5pt}
\noindent 
\textit{Additional results:}
For experiments under the fixed-budget setting, 
Table \ref{tab_high_dim_appendix} shows the complete results of \JKO{} against competitors on tabular datasets.
In the extra lines in the table,
we show the result of \JKO{} with $L=3$ for POWER
and $L=5$ for GAS, as these numbers of $L$ are determined by the termination criterion in Algorithm \ref{block_training}.
(The main lines show the results with $L=4$ so that our model has the same capacity as the alternatives.)
BSDS300 has an extra line for \JKO{} trained with batch sizes 5000,
which improves over the result with batch size 1000.
For ScoreSDE, we use the implementation in \citep{huang2021variational}, which, during training, maximizes the evidence lower bound (ELBO) as an equivalent objective to the score-matching loss.
Although ScoreSDE is the fastest, its performance, even under 100 times more mini-batch stochastic gradient descent steps than JKO-iFlow, is still worse than \JKO{} in terms of both MMD-m and NLL.

To visualize the generative performance, scatter plots of the generated samples by \JKO{} and competitors on these tabular datasets are shown in Figure \ref{pca_projection} after projected to 2 dimensions (determined by principal components computed from true data test samples). The plots show a closer match between those from \JKO{} with the ground truth in distribution.
For experiments allowing more expensive models and longer training time, Table \ref{tab_high_dim_appendix_2} shows the results of \JKO{} after additional training of trajectory reparametrization
in comparison with other baselines cited from the original papers.
We run 7 reparameterization iterations on POWER, GAS, and MINIBOONE and 1 reparameterization iteration on BSDS300. A free block is used on POWER, GAS, and BSDS300.

To ensure a fair comparison against diffusion models, we perform additional experiments using different noise schedulers $\beta(t)$ and $\bar{\beta}_{\max}$ in ScoreSDE on MINIBOONE, following the noise scheduler suggestions in DDPM \citep{ho2020denoising}. Note that DDPM can be viewed as a discrete-time version of the variance-preserving ScoreSDE model.
As shown in Table \ref{tab_noise_schedule}, the performance of ScoreSDE is indeed sensitive to the noise schedule. However, the best NLL of ScoreSDE 17.45 from the table is still noticeably higher than the NLL of 12.55 obtained by \JKO{} on MINIBOONE in Table \ref{tab_high_dim_appendix}, and \JKO{} is trained using ten times less number of batches.
\revold{To improve the performance of ScoreSDE on this example, we use ScoreSDE without restrictions on modeling and computation. Specifically, we consider a larger network following the setup in \citep{albergo2023building}, where the network has 4 hidden layers with 512 hidden nodes per layer and consists of 831K parameters in total. We then train this larger model for 100K batches using ScoreSDE with various noise schedulers, choosing noise schedulers that performed best based on results in Table \ref{tab_noise_schedule}. From the results reported in Table \ref{tab_fair_diffusion}, we see that the testing NLL by ScoreSDE can be as low as 10.47, which is among the state-of-the-art values reported in Table \ref{tab_high_dim_appendix_2}. Nevertheless, we want to highlight that the proposed \JKO{} can obtain competitive results (i.e., 10.55 in Table \ref{tab_high_dim_appendix_2}) with much smaller models: each \JKO{} block has only 2 hidden layers with 128 hidden nodes per layer, and we trained 4 blocks that contain 112K parameters in total. We also trained \JKO{} for only 2.72K batches, rather than the 100K batches we used for ScoreSDE.}

\begin{table}[!t]
\begin{center}
\caption{\revold{Testing NLL per noise scheduler and $\bar{\beta}_{\max}$ combination of ScoreSDE on MINIBOONE, using a larger network with longer training. The table format is identical to that of Table \ref{tab_noise_schedule}.
}}
\label{tab_fair_diffusion}
\def\arraystretch{1}
\revold{
\resizebox{0.57\textwidth}{!}{
\begin{tabular}{lccc}
    \hline
    Noise scheduler \textbackslash \ $\bar{\beta}_{\max}$ & 5 & 10 & 15\\
    \hline
    Linear     & 11.33 (0.44)  & 10.84 (1.17)  & 10.47 (0.13)  \\
    Quadratic  & 12.88 (0.40)  & 11.11 (0.24)  & 11.05 (0.49)  \\
    \hline
\end{tabular}
}}
\end{center}
\end{table}

\begin{figure}[!t]
    \centering
    \vspace{0.2in}
    \begin{minipage}{0.335\textwidth}
        \includegraphics[width=\linewidth]{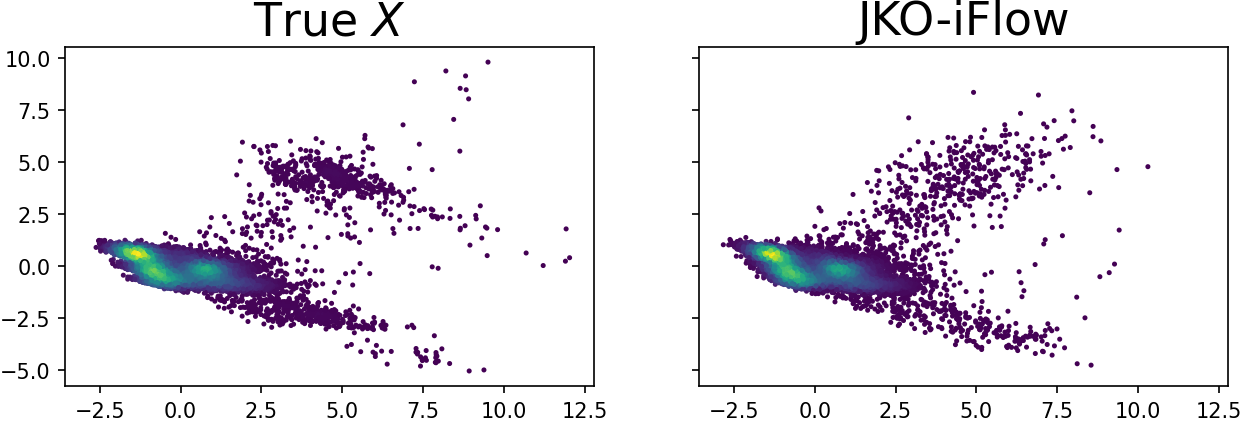}
        \subcaption{Power}
    \end{minipage}
    \begin{minipage}{0.155\textwidth}
    \vspace{-0.2in}
    \includegraphics[width=\linewidth]{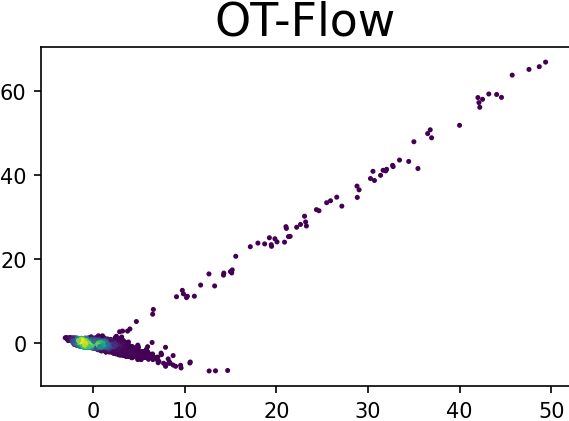}
    \end{minipage}
    \begin{minipage}{0.155\textwidth}
        \vspace{-0.2in}\includegraphics[width=\linewidth]{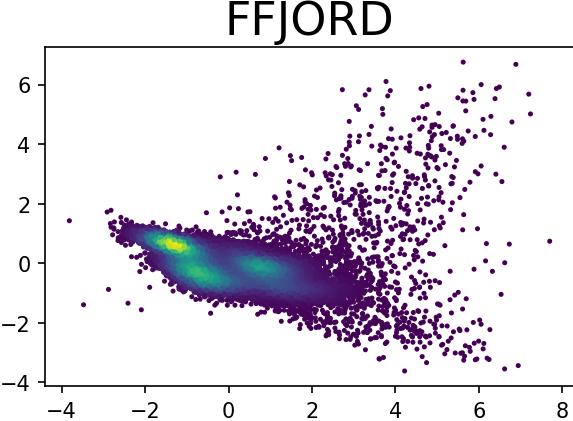}
    \end{minipage}
    \begin{minipage}{0.155\textwidth}
        \vspace{-0.2in}\includegraphics[width=\linewidth]{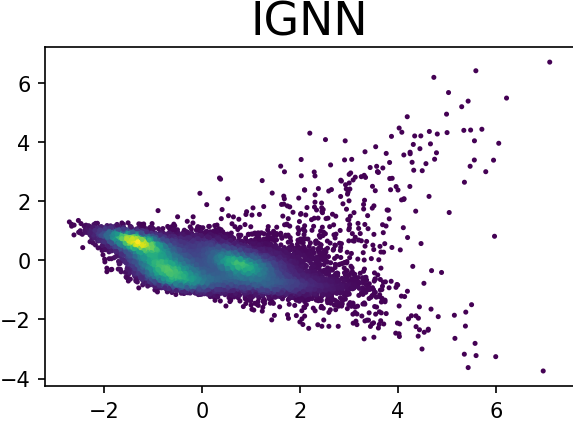}
    \end{minipage}
    \begin{minipage}{0.16\textwidth}
        \vspace{-0.2in}\includegraphics[width=\linewidth]{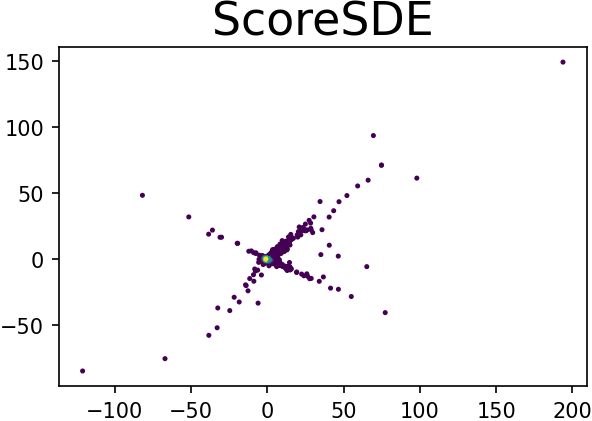}
    \end{minipage}

    \begin{minipage}{0.34\textwidth}
        \includegraphics[width=\linewidth]{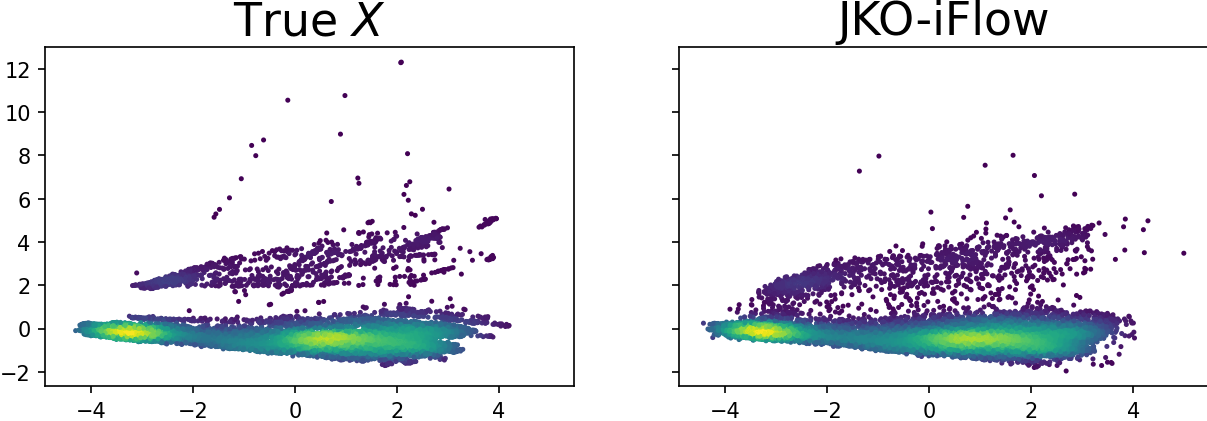}
        \subcaption{Gas}
    \end{minipage}
    \begin{minipage}{0.155\textwidth}
        \vspace{-0.2in}\includegraphics[width=\linewidth]{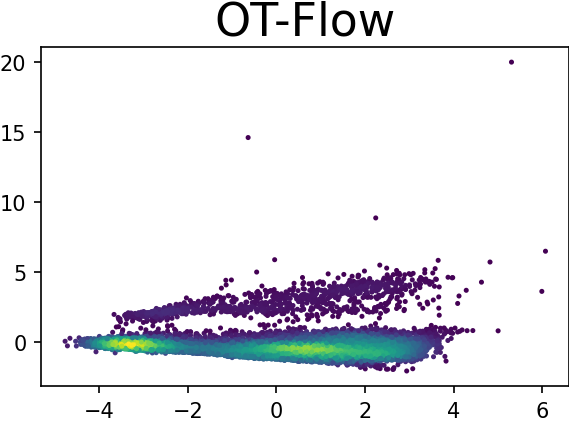}
    \end{minipage}
    \begin{minipage}{0.16\textwidth}
        \vspace{-0.2in}\includegraphics[width=\linewidth]{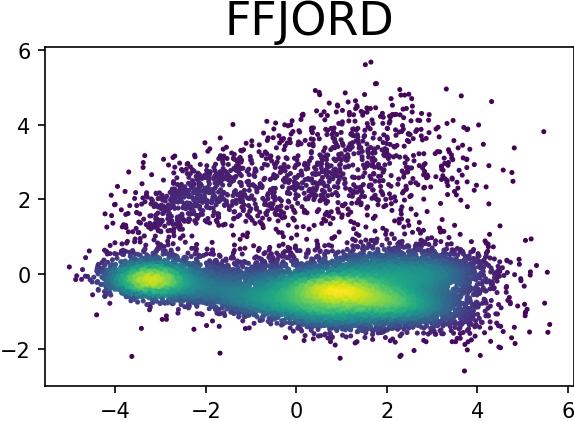}
    \end{minipage}
    \begin{minipage}{0.16\textwidth}
        \vspace{-0.2in}\includegraphics[width=\linewidth]{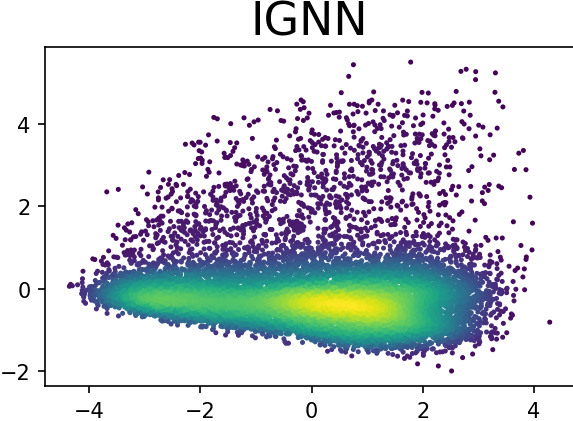}
    \end{minipage}
    \begin{minipage}{0.155\textwidth}
        \vspace{-0.2in}\includegraphics[width=\linewidth]{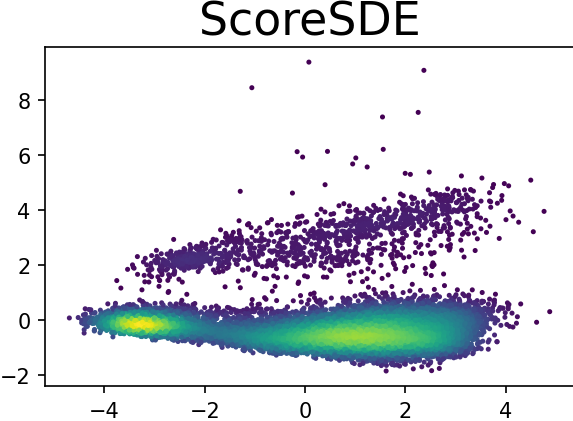}
    \end{minipage}
    
    \begin{minipage}{0.335\textwidth}
        \includegraphics[width=\linewidth]{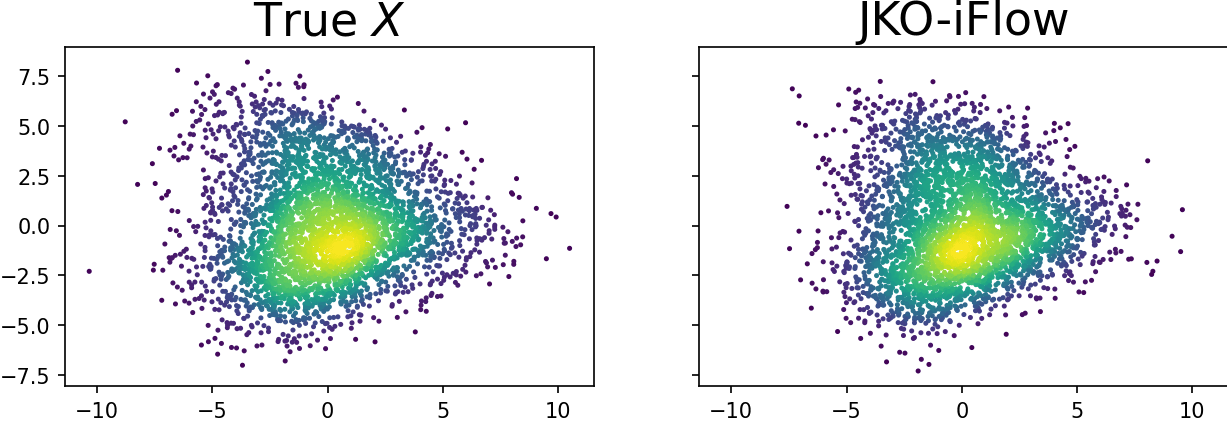}
        \subcaption{MINIBOONE}
    \end{minipage}
    \begin{minipage}{0.16\textwidth}
        \vspace{-0.2in}\includegraphics[width=\linewidth]{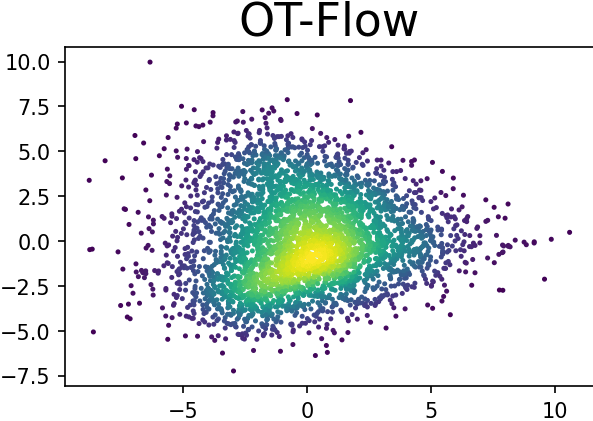}
    \end{minipage}
    \begin{minipage}{0.155\textwidth}
        \vspace{-0.2in}\includegraphics[width=\linewidth]{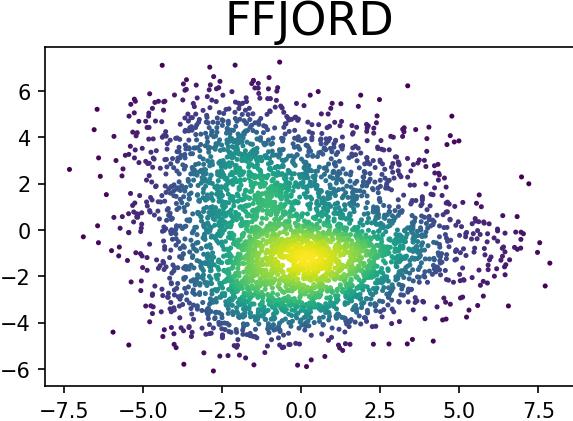}
    \end{minipage}
    \begin{minipage}{0.16\textwidth}
        \vspace{-0.2in}\includegraphics[width=\linewidth]{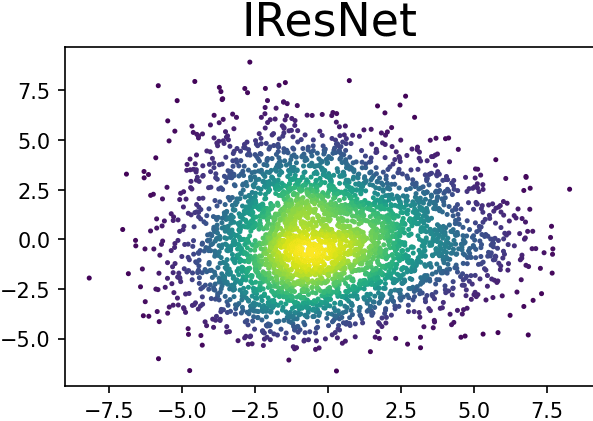}
    \end{minipage}
    \begin{minipage}{0.155\textwidth}
        \vspace{-0.2in}\includegraphics[width=\linewidth]{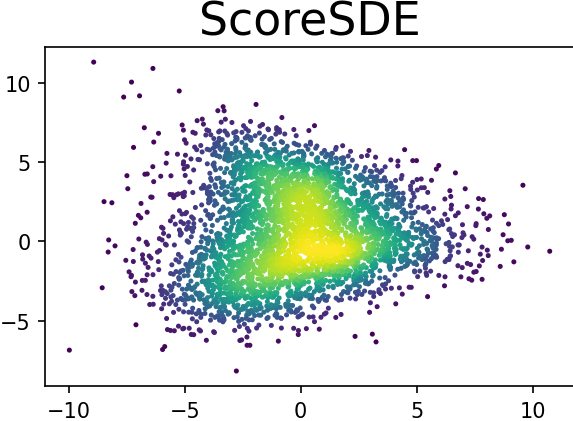}
    \end{minipage}
    
    \begin{minipage}{0.325\textwidth}
        \includegraphics[width=\linewidth]{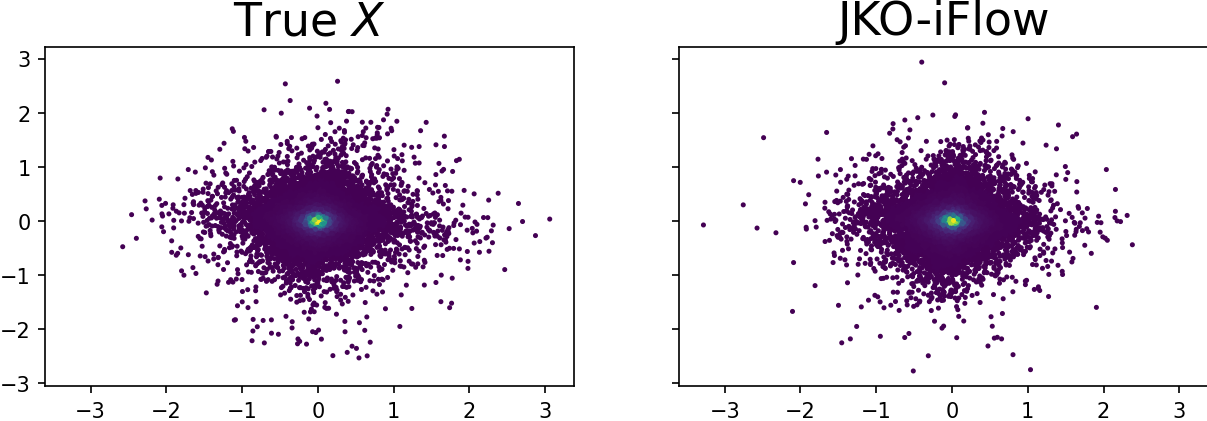}
        \subcaption{BSDS300}
    \end{minipage}
    \begin{minipage}{0.155\textwidth}
        \vspace{-0.2in}\includegraphics[width=\linewidth]{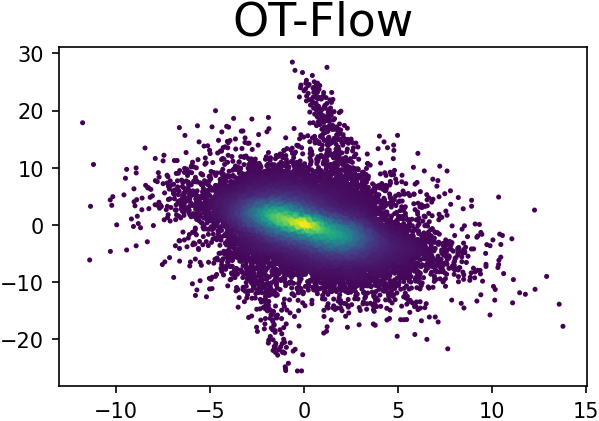}
    \end{minipage}
    \begin{minipage}{0.15\textwidth}
        \vspace{-0.2in}\includegraphics[width=\linewidth]{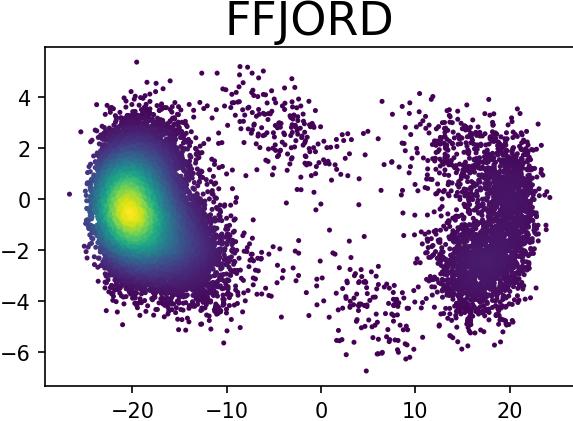}
    \end{minipage}
    \begin{minipage}{0.15\textwidth}
        \vspace{-0.2in}\includegraphics[width=\linewidth]{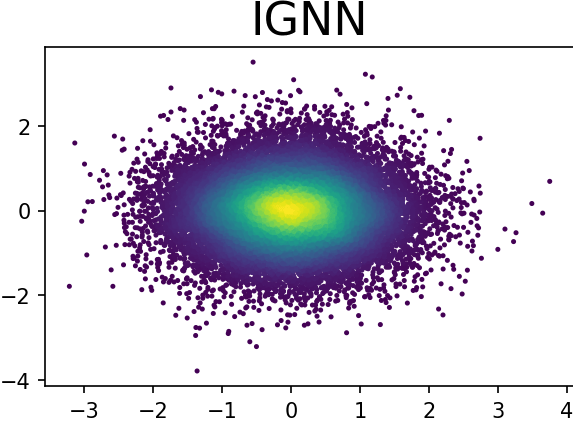}
    \end{minipage}
    \begin{minipage}{0.155\textwidth}
        \vspace{-0.2in}\includegraphics[width=\linewidth]{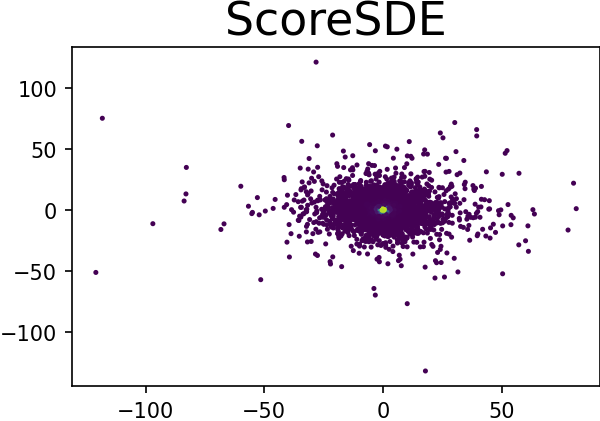}
    \end{minipage}
    
    \caption{Generative quality on tabular datasets via PCA projection {of generated samples}. The generative quality in general aligns with the quantitative metrics in Table \ref{tab_high_dim} and \ref{tab_high_dim_appendix}.}
    \label{pca_projection}
\end{figure}

\revold{
\subsubsection{Image generation with pre-trained variational auto-encoder}

We perform image generation on MNIST \citep{deng2012mnist}, CIFAR10 \citep{krizhevsky2009learning}, and Imagenet-32 \citep{deng2009imagenet} datasets. We do so in the latent space of a pre-trained VAE, where we discuss the details below.

\vspace{0.1in}
\noindent \textit{VAE as data pre-processing:}
We train deep VAEs in an adversarial manner following \citep{esser2021taming}, and use pre-trained VAEs to pre-process the input images $X$. Specifically, the encoder $\mathcal{E}: X \rightarrow (\mu(X), \Sigma(X))$ of the VAE maps a RGB-image $X$ to parameters of a multivariate Gaussian distribution $\calN(\mu(X), \Sigma(X))$ in a lower dimension $\tilde{d}$. Then, given any random latent code $\tilde{X} \sim \calN(\mu(X), \Sigma(X))$, the decoder $\mathcal{D}: \tilde{X} \rightarrow \hat{X}$ of the VAE is trained so that $X\approx \hat{X}$ for the reconstructed image $\hat{X}=\mathcal{D}(\tilde{X}).$ On MNIST, we let the latent-space dimension $\tilde{d}=20$, and on CIFAR10 and ImageNet32, we let the latent-space dimension $\tilde{d}=192$. There are 5.6M parameters in the VAE encoder and 8.25M parameters in the VAE decoder.
Given a trained VAE, we then train the \JKO{} model $T_{\theta}$ to transport invertibly between the distribution of random latent codes $\tilde{X}$ defined over all training images $X$ and the standard multivariate Gaussian distribution in $\R^{\tilde{d}}$. 

\vspace{0.1in}
\noindent \textit{Training and test data:} Training data: 60K images in MNIST, 50K images in CIFAR10, and 1.28M images in Imagenet-32. Test data: 10K images in MNIST and CIFAR10, and 50K images in Imagenet-32.

\vspace{0.1in}
\noindent \textit{Choice of $h_k$:} 
on MNIST, we specify $\{h_0=1, \rho=1, h_{\max}=1\}$.
on CIFAR10, we specify $\{h_0=1, \rho=1, h_{\max}=1\}$.
On Imagenet-32, we specify $\{h_0=1, \rho=1.1, h_{\max}=\infty\}$.
We train $L=6$ blocks on MNIST, and we train $L=8$ blocks on CIFAR10 and Imagenet-32. On CIFAR10 and Imagenet-32, we also scale $h_k=T \cdot h_k/\sum_j h_j$ so $\sum_k h_k = T$. We let $T=0.8$ on CIFAR10 and $T=0.4$ on Imagenet-32.

\begin{figure}[!t]
    \centering
    \begin{minipage}{\linewidth}
        \includegraphics[width=\linewidth]{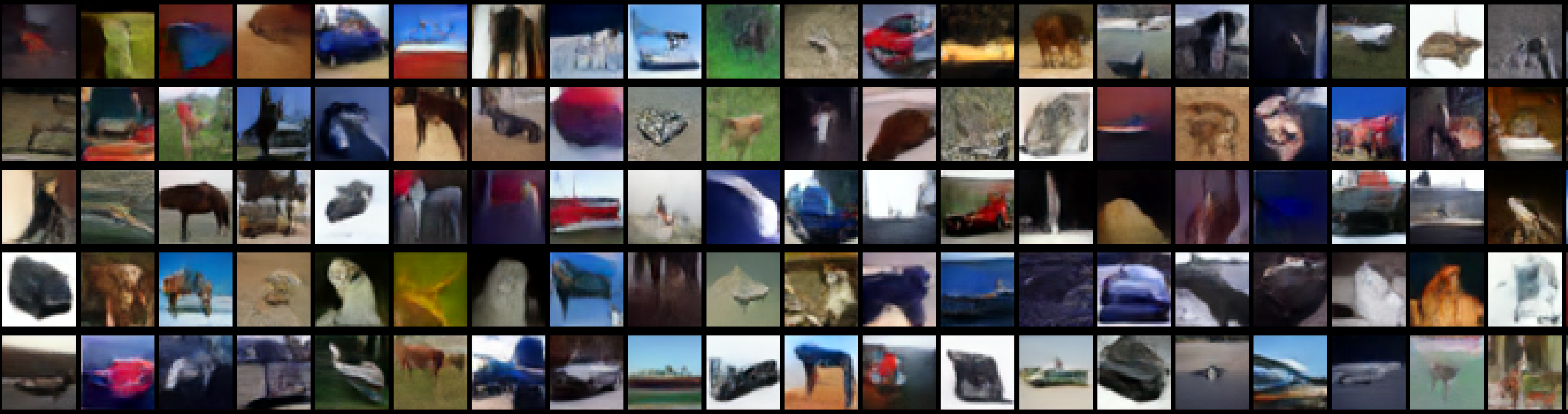}
        \subcaption{Generated CIFAR10 images}
    \end{minipage}
    \begin{minipage}{\linewidth}
        \includegraphics[width=\linewidth]{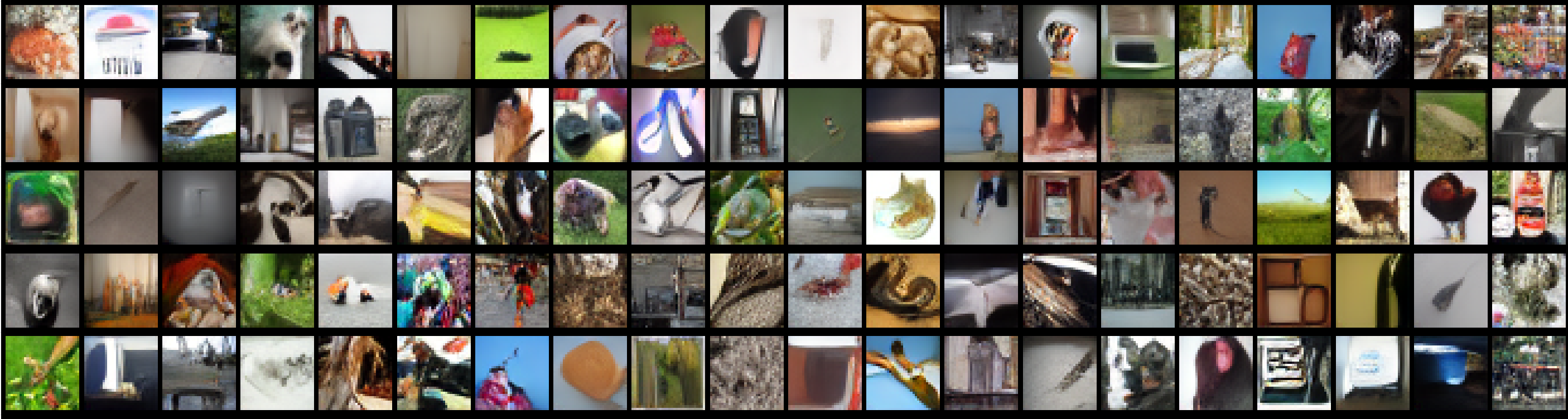}
        \subcaption{Generated Imagenet-32 images}
    \end{minipage}
    \caption{Uncurated generated samples of CIFAR10 and Imagenet-32 by \JKO{} in latent space.}
    \label{fig:img_uncurated}
\end{figure}
\vspace{0.1in}
\noindent \textit{Network architecture:} 
We use the softplus activation with $\beta=20$ for all hidden layers.
On MNIST, we use fully connected residual blocks with three hidden layers at 256 hidden nodes.
On CIFAR10 and Imagenet-32, we parametrize each ${\bf f}_{\theta_b}$ as  a concatenation of convolution layers and transposed convolution layers. Specifically:
\begin{itemize}
    \item CIFAR10: convolution layers have channels \texttt{3-64-128-256-256} with kernel size 3 and strides \texttt{1-1-2-1}. Transposed convolution layers have channels \texttt{256-256-128-64-3} with kernel size \texttt{3-4-3-3} and strides \texttt{1-2-1-1}. Total 2.1M parameters.
    \item Imagenet-32: convolution layers have channels \texttt{3-64-128-256-512} with kernel size 3 and strides \texttt{1-1-2-1}. Transposed convolution layers have channels \texttt{512-256-128-64-3} with kernel size \texttt{3-4-3-3} and strides \texttt{1-2-1-1}. Total 3.3M parameters.
\end{itemize}
We remark that because inputs into the \JKO{} blocks are latent-space codes that have much lower dimensions than the original image, our \JKO{} blocks are simpler and lighter in design than models in previous NeuralODE \citep{FFJORD,finlay2020train} and diffusion model works \citep{ho2020denoising,song2021score,boffi2023probability}. For instance, the DDPM model \citep{ho2020denoising} on CIFAR10 has 35.7M parameters with more sophisticated model designs.

\vspace{0.1in}
\noindent \textit{Training specifics:} 
On MNIST, we fix the batch size to be 2000 during training, and we train 15K batches per \JKO{} block. We fix the learning rate to be 1e-3.

On CIFAR10 and Imagenet-32, we fix the batch size to be 512 during training, and we train 75K batches per \JKO{} block. The time per batch is 0.18 seconds on Imagenet-32 and 0.15 seconds on CIFA10. The total time on Imagenet-32 is 30 hours and on CIFAR10 is 24 hours.
When training the blocks, the initial learning rate $\texttt{lr0}$ is decreased by a constant factor of 0.9 every 2500 batches. We let $\texttt{lr0}$ be 1e-3 when training the first 5 blocks on Imagenet-32 and decrease it to be 8e-4 on blocks 6-8. We let $\texttt{lr0}$ be 1e-3 when training the first 2 blocks on CIFAR10 and decrease it to be 7.5e-4 on the rest 6 blocks. Additionally, we use gradient clipping of max norm 1 after collecting gradients on each mini-batch.

\vspace{0.1in}
\noindent \textit{Additional results:} We perform the following steps to curate generated samples for CIFAR10 and Imagenet-32, which are shown in Figure \ref{fig:mnist-cifar-imagenet}, 
we first train an image classifier (i.e., VGG-16 \citep{Simonyan15}) on the training data.
Then, we generate a large number of uncurated images (200K for Imagenet-32 and 20K for CIFAR10), classify them using the pre-trained classifier, and sort them based on top-1 predicted probability. 
Upon sorting the top 20K images for Imagenet-32 and the top 750 images for CIFAR10, we manually select the images that most resemble the given predicted class. 
Figure \ref{fig:img_uncurated} further shows uncurated images on CIFAR10 and Imagenet-32 by the same \JKO{} model after training.

}

\subsubsection{Conditional generation}\label{append_cond_gen}

\begin{figure}[!t]
\vspace{0.15in}
    \begin{minipage}{0.35\textwidth}
        \centering
        \includegraphics[width=\textwidth]{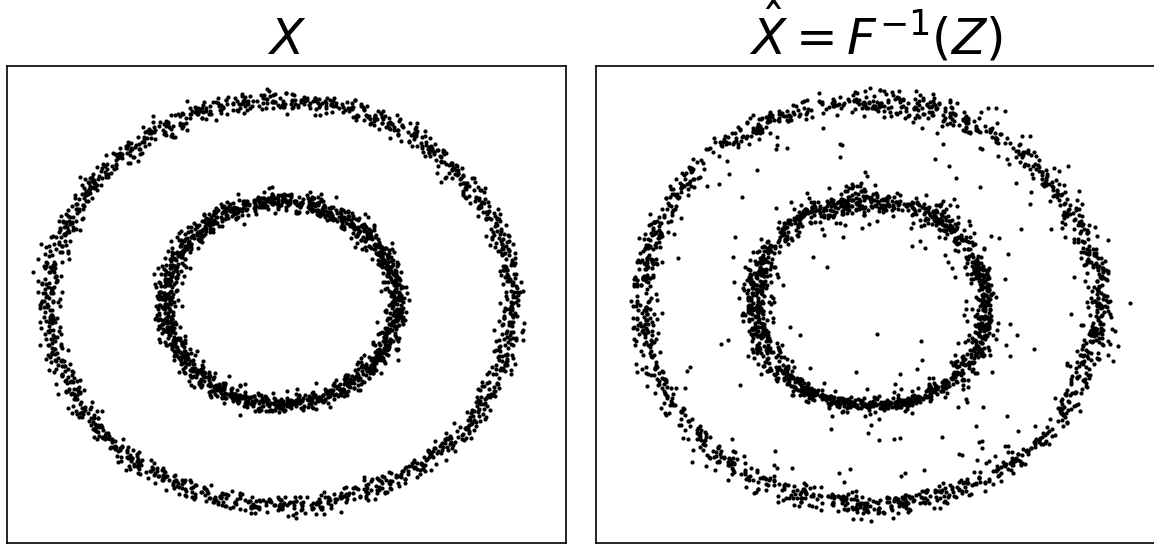}
        \subcaption{Two-circles data (unconditional)}
        \label{circle}
    \end{minipage}
    \hspace{0.05in}
    \begin{minipage}{0.63\textwidth}
        \centering
        \includegraphics[width=\textwidth]{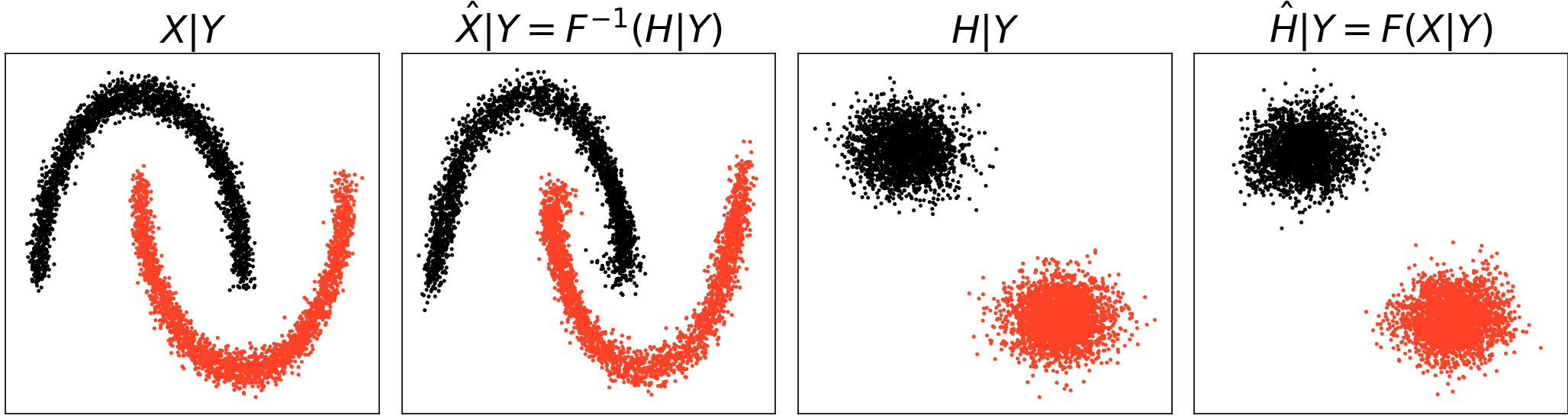}
        \subcaption{Two-moons data (conditional)}
        \label{fig_cond_gen_toy}
    \end{minipage}
    \caption{
    Unconditional and conditional generation on simulated toy datasets by \JKO{}. We color samples in (b) by the class label $Y$, taking binary values. 
    In both (a) and (b), the generated samples are close to the true samples in distribution. 
    In (b), the pushforward distribution by \JKO{}, denoted as $\hat{H}|Y$, is also close to the target Gaussian mixture distribution $H|Y$.
    }
    \label{fig_2d_full_data_append}
\end{figure}

To modify the \JKO{} to apply to the conditional generation task, 
we follow the framework in \cite{xu2022invertible},
which trains a single flow mapping from $X$ to $H$ that pushes to match each component of the distribution associated with a distinct output label value $Y=k$, $k=1,\cdots, K$. 
Specifically, for a data-label pair $\{X_i, Y_i\}$, consider the continuous ODE trajectory $x(t)$ starting from $x(0) = X_i$, the per-sample training objective is by changing the term $V( x(t_{k+1}))$ in \eqref{eq:loss-block-k} to be $V_{Y_i}( x(t_{k+1}) )$, where $V_{k}(\cdot)$ is the potential of the Gaussian mixture component $H| Y=k$. Because the Gaussian mixture is parameterized by mean vectors (the covariance is isotropic with fixed variance) \citep{xu2022invertible}, the expression of $V_{k}(\cdot)$ is a quadratic function with explicit expression.

\vspace{5pt}
\noindent
\textit{Training and test data: }
\begin{itemize}
    \item For the simulated two-moon data, we re-sample 5K training samples every epoch, for a total of 40 epochs per block. The batch size is 1000.

    \item The solar dataset is retrieved from the National Solar Radiation Database (NSRDB), following the data pre-processing in \citep{xu2022invertible}. The dataset has a 1K training samples and a 1K test samples. The batch size is 500, and each block is trained for 50 epochs.

\end{itemize}

\vspace{5pt}
\noindent
\textit{Choice of $h_k$:} 
For both datasets, 
$h_0=1, \rho=1, h_{\max}=3$. The reparameterization and refinement techniques are not used.

\vspace{5pt}
\noindent
\textit{MMD metric:} 
The MMD values are computed for each specific value of $Y$ (across all graph nodes).
When $Y$ is fixed, we retrieve test samples from true data, denoted as $\boldsymbol X|Y$,
and generate model distribution samples, denoted as $\boldsymbol{\tilde{X}}|Y$, and then compute the MMD values the same as in the unconditional case. \revold{Due to the relatively small sample size of $\boldsymbol X|Y$ (679 and 144 respectively), we report the average values of MMD-m and threshold $\tau$ over 50 replicas. In each replica, we subsample 90\% observations of $\boldsymbol X|Y$ and generate $M=2000$ samples to form $\boldsymbol{\tilde{X}}|Y$.}
For the results in Figure \ref{cond_gen_solar}, 
for the $Y$ in plots (a)(b), $N=612$ samples are randomly sampled from $\boldsymbol{X}|Y$ in each replica,
and the standard deviation of MMD-m values is less than the 2e-3 level.
For the $Y$ in plots (c)(d), $N=130$ samples are randomly sampled from $\boldsymbol{X}|Y$ in each replica,
and the standard deviation of MMD-m values is less than the 5e-3 level.
The median distance kernel bandwidth \revold{computed on the entire $\boldsymbol{X}|Y$} is used in computing the MMD values. 

\vspace{5pt}
\noindent
\textit{Network and activation:} Regarding design of residual blocks, 
\begin{itemize}
    \item On two-moon data, we use fully-connected residual blocks with two hidden layers under 128 hidden nodes. The activation is Tanh. We train four residual blocks. The learning rate is 5e-3.
    \item On solar data, we follow the same design of residual blocks as \citep{xu2022invertible}. More precisely, each residual block contains one Chebnet \citep{Chebnet} layer with degree 3, followed by two fully-connected layers with 64 hidden nodes. The activation is ELU. We train 11 residual blocks. The learning rate is 5e-3.
\end{itemize}

\end{document}